\def\eqref#1{equation~\ref{#1}}
\def\1{\bm{1}}
\DeclareMathAlphabet{\mathsfit}{\encodingdefault}{\sfdefault}{m}{sl}
\SetMathAlphabet{\mathsfit}{bold}{\encodingdefault}{\sfdefault}{bx}{n}
\newcolumntype{Y}{>{\raggedright\arraybackslash}X}
\definecolor{lightgreen}{HTML}{e8f5e8}
\definecolor{lightblue}{HTML}{e8f4f3}
\definecolor{green}{HTML}{266b6e}
\definecolor{blue}{HTML}{00718b}
\newtheorem{theorem}{Theorem}
\newtheorem{corollary}[theorem]{Corollary}
\theoremstyle{definition}
\newtheorem{definition}{Definition}
\newtheorem{assumption}{Assumption}
\theoremstyle{plain}
\title{Breaking Scale Anchoring: \\ Frequency Representation Learning for \\ Accurate High-Resolution Inference from \\ Low-Resolution Training}
\author{
Wenshuo Wang$^{1}$, Fan Zhang$^{2,3}$\thanks{Corresponding author.} \\
$^{1}$ School of Future Technology, South China University of Technology, China \\
$^{2}$ State Key Laboratory of Ocean Sensing \& Ocean College, Zhejiang University, China \\
$^{3}$ Kavli Institute for Astrophysics and Space Research, Massachusetts Institute of Technology, USA \\
\texttt{202364870251@mail.scut.edu.cn}, \texttt{f.zhang@zju.edu.cn}
}
\begin{document}

\maketitle

\begin{abstract}
Zero-Shot Super-Resolution Spatiotemporal Forecasting requires a deep learning model to be trained on low-resolution data and deployed for inference on high-resolution. Existing studies consider \textbf{maintaining} similar error across different resolutions as indicative of successful multi-resolution generalization. However, deep learning models serving as alternatives to numerical solvers should \textbf{reduce} error as resolution increases. The fundamental limitation is, the upper bound of physical law frequencies that low-resolution data can represent is constrained by its Nyquist frequency, making it difficult for models to process signals containing unseen frequency components during high-resolution inference. \textit{This results in errors being anchored at low resolution, incorrectly interpreted as successful generalization.} We define this fundamental phenomenon as a new problem distinct from existing issues: \textbf{Scale Anchoring}. Therefore, we propose architecture-agnostic Frequency Representation Learning. It alleviates Scale Anchoring through resolution-aligned frequency representations and spectral consistency training: on grids with higher Nyquist frequencies, the frequency response in high-frequency bands of FRL-enhanced variants is more stable. This allows errors to decrease with resolution and significantly outperform baselines within our task and resolution range, while incurring only modest computational overhead.
\end{abstract}

\section{Introduction}

Traditional numerical simulation methods in Spatiotemporal Forecasting (STF) can achieve high accuracy yet incur substantial computational costs \citep{Choi2011GridpointRF}. Recent research utilizing deep learning methods for STF can effectively balance accuracy and computational efficiency \citep{Zhang2023NowcastNet, Saad2024BayesNF}. However, high-resolution, high-fidelity Direct Numerical Simulations (DNS) that provide training data for deep learning methods face extremely high costs. Even with high-resolution data available, the extremely high resolution poses impossible training VRAM requirements for current hardware, while the requirements for inference are much smaller. Zero-shot super-resolution (ZS-SR) deep learning models can leverage low-cost, low-resolution data for low-VRAM training to perform high-resolution STF \citep{li2020fourier}. 

Existing ZS-SR STD methods are based on Neural Operators (NOs). This is because, unlike general neural network architectures that learn function mappings from point space to point space, NOs learn functional mappings from function to function \citep{Lu2021}. Since functions can be discretized into spaces of arbitrary resolution, NOs naturally meets the requirements for inputs and outputs of different resolutions \citep{kovachki2023neural}. On the other hand, many variants, such as Fourier NO, learn parameters in a scale-agnostic manner in the frequency domain, which can also improve the generalization of the model's physical laws \citep{li2020fourier}.

Existing researchs expect models' successful generalization to \textbf{maintain} similar accuracy across inputs of different resolutions \citep{talebi2021learning,gao2025discretization}. From this perspective, our pilot study on ZS-SR STF for all mainstream architectures in Section~\ref{sec:existence} shows that every baseline exhibits excellent generalization. However, for a $p$-th order numerical solver simulating at a resolution $\alpha$ times higher than the low resolution, the error theoretically \textbf{decrease} by a factor of $\alpha^p$. This gap arises from the fact that the upper bound of the frequency of physical laws that low-resolution data can represent is limited by the Nyquist frequency of the training data. When models are trained on low-resolution data and perform inference at high resolution: the model struggles to handle unseen high-frequency components, causing the model's error to be anchored at the low-resolution data. We refer to this data-driven limitation as \textbf{Scale Anchoring}, which is fundamentally different from previously studied issues such as Spectral Bias (SB) and Discretization Mismatch Error (DME).

Addressing Scale Anchoring requires improving generalization to higher relative frequencies that are not present at the training grid. Prior methods for cross-resolution generalization of physical laws were not explicitly designed to tackle this scale-anchoring mechanism \citep{li2020fourier, li2024physics}. We therefore propose \textbf{Frequency Representation Learning (FRL)}: (i) construct multi-resolution training data via downsampling; (ii) introduce Nyquist-normalized frequency representations that yield resolution-invariant embeddings for the same physical frequencies; and (iii) add a frequency-aware loss to promote spectral consistency across scales. Steps (i) and (iii) follow standard practices in multi-scale training and spectral regularization, while Step (ii), which is aligned with Scale Anchoring, is novel. Under mild assumptions, our analysis shows that this alignment encourages more stable high-frequency bands. It allows FRL to reduce the high-resolution error of a given baseline as resolution increases, although it does not guarantee strict order convergence like numerical solvers.

Our contributions are summarized as follows: (a) Identifying Scale Anchoring, a previously unrecognized fundamental limitation in ZS-SR STF that incorrectly interpreted as successful generalization; (b) Providing theoretical analysis and empirical validation of Scale Anchoring and its mechanism; (c) Proposing architecture-agnostic FRL for Scale Decoupling; (d) Extensive experiments across diverse architectures show that FRL-enhanced methods decouple Scale Anchoring. In ZS-SR STF, they demonstrate higher accuracy with modest increases in training time and memory overhead; (e) We identify the failure modes of FRL, characterize the conditions for effectiveness and the boundaries of failure, and suggest potential improvement strategies.

\section{Related work}\label{sec:rw}

\subsection{Zero-Shot Super-Resolution Spatiotemporal Forecasting}\label{sec:zs-sr_stf}

STF tasks take one or multiple physical spatial field snapshots at different time steps as input, predict the next time step's snapshot, and repeat iteratively. ZS-SR STF tasks train models on low-resolution snapshots only, but at inference time take high-resolution physical spatial field snapshots at different time steps as input, predict the next time step's snapshot, and repeat iteratively. NOs are naturally suited for this task due to their resolution-agnostic input and output capabilities \citep{li2020fourier, kovachki2023neural}. Existing NO variants span multiple directions: FNO pioneered ZS-SR through frequency-domain learning \citep{li2020fourier, jiang2023efficient, atif2024fourier}; PINO enhanced accuracy via physics constraints \citep{li2024physics}; TNO specialized temporal modeling for long-term predictions \citep{diab2025temporal}; and multi-scale methods like Multi-Grid Tensorized FNO and U-FNO reduced computational complexity through hierarchical decomposition \citep{kossaifi2023multi, wen2022u}. While these methods improve resolution generalization, they do not explicitly address the fundamental frequency limitation imposed by the training data's Nyquist frequency. Without mechanisms to learn or extrapolate frequency patterns beyond this hard boundary, they remain fundamentally unable to resolve Scale Anchoring.

\subsection{Scale Anchoring V.S. Related Phenomena}\label{sec:rp}

Scale Anchoring shares superficial similarities with several phenomena but differs fundamentally: 

SB describes the tendency of neural networks to fit target functions from low to high frequencies within the training Nyquist band, leading to larger errors on high-frequency components that are actually present in the supervision data \citep{xu2019frequency,rahaman2019spectral}. To mitigate SB, prior work introduces Fourier feature mappings and positional encodings, periodic activation functions, multi-scale or hierarchical architectures, and explicit spectral or frequency-weighted losses, all of which strengthen the network's ability to represent and fit high-frequency content already contained in the data \citep{tancik2020fourier,sitzmann2020implicit,liu2024mitigating,liu2025diminishing}. 

In the NO literature, lack of discretization-invariance (lack of DI) and DME formalize the problem that the same operator network can produce inconsistent outputs on different grids \citep{bartolucci2023representation,gao2025discretization,lanthaler2024discretization}. These works analyze aliasing and discretization error to propose alias-free spectral parameterizations, multi-grid training, and carefully designed interpolation operators to enforce cross-grid consistency. More broadly, multi-resolution generation in vision and scientific ML—whether cascaded, patch-based, or training-free—expose models to multiple resolutions \citep{tian2023resformer,he2023scalecrafter,ho2022cascaded,bar2023multidiffusion,yu2023freedom}. In this sense, the first and third steps of FRL (multi-resolution training and frequency-aware loss) follow standard practices in these research directions and are not methodologically novel.

By contrast, Scale Anchoring is driven by the information-theoretic limitation that low-resolution training data cannot represent physical frequencies above its Nyquist limit.  The difference between Scale Anchoring and other phenomena has three implications. First, in terms of source, SB, lack of DI, and DME all arise from architectural and optimization choices and can be eliminated through appropriate model design and training. Whereas Scale Anchoring arises from the Shannon–Nyquist sampling bound on the data itself. Second, in terms of scope, lack of DI and DME are formulated specifically for NOs, while SB and SA occur broadly across any structures (including NOs) as we empirically demonstrate in Section~\ref{sec:val_exp}. Third, in terms of consequence, SB, lack of DI, and DME are soft tendencies that do not constitute hard limits on achievable accuracy. Whereas Scale Anchoring imposes an information-theoretic lower bound on high-resolution error. Therefore, Scale Anchoring is orthogonal to existing phenomena: even if the model is completely alias-free, discretization-invariant, and fits signals within the training Nyquist band equally well, a model trained only on low-resolution data will never observe frequency components above the training Nyquist frequency.

\section{Existence of Scale Anchoring}\label{sec:existence}

\begin{figure}[htbp]
\centering
\begin{minipage}[c]{0.48\textwidth}
    \centering
    \includegraphics[width=\textwidth]{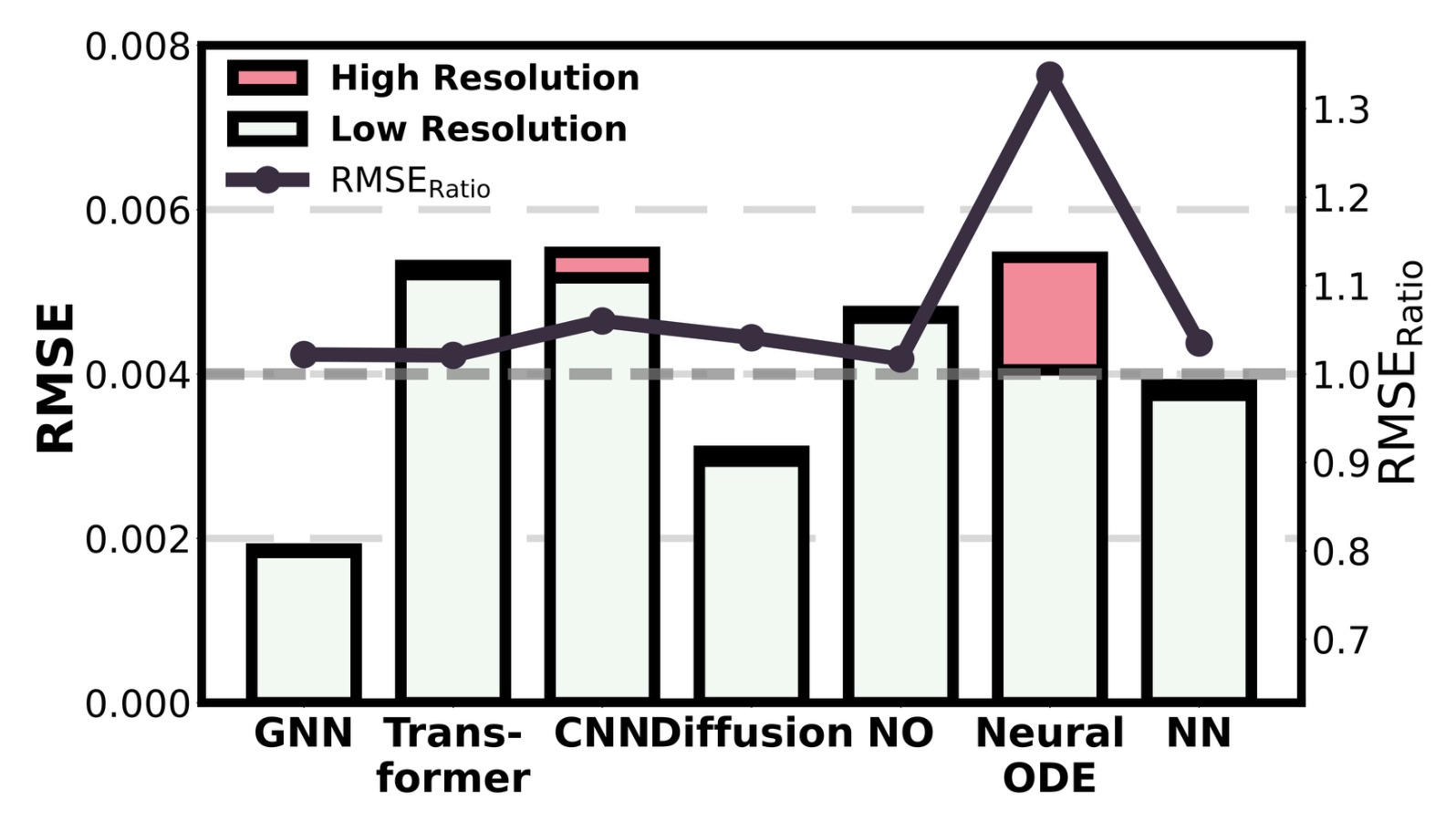}
    \caption{High and low resolution RMSE with their ratio across different methods in 3D ZS-SR fluid simulation.}
    \label{fig:pilot}
\end{minipage}
\hfill
\begin{minipage}[c]{0.48\textwidth}
    \centering
    \vspace{0pt}
    \captionof{table}{Multi-resolution $\text{RMSE}_{\text{Ratio}}$ across different methods in ZS-SR 3D fluid simulation.}
    \footnotesize 
    \setlength{\tabcolsep}{2.5pt} 
    \begin{tabular}{@{}lccc@{}}
    \toprule
    Method & $2.4\times$ & $8\times$ & $65.5\times$ \\
    \midrule
    GNN (Neural SPH) & 1.000 & 1.000 & 1.022 \\
    Transformer (DeepLag) & 1.006 & 1.012 & 1.021 \\
    CNN (PARCv2) & 1.012 & 1.035 & 1.060 \\
    Diffusion (DYffusion) & 1.010 & 1.024 & 1.041 \\
    NO (SFNO) & 1.004 & 1.011 & 1.017 \\
    Neural ODE (FNODE) & 1.067 & 1.180 & 1.338 \\
    NN (NeuralFluid) & 1.011 & 1.024 & 1.035 \\
    \bottomrule
    \end{tabular}
    \label{tab:pilot}
\end{minipage}
\end{figure}

We first demonstrate the existence of Scale Anchoring in ZS-SR fluid simulation. We use the same baselines and fluid dataset as in Section~\ref{sec:exp_setup}. For the most commonly used deep learning architectures in fluid simulation, we tested each State-Of-The-Art (SOTA) method trained on low-resolution ($32^3$) data and evaluated on $2.4\times$, $8\times$, $65.5\times$ resolution, measuring RMSE and $\text{RMSE}_{\text{Ratio}} = \text{RMSE}_{\text{High}} / \text{RMSE}_{\text{Low}}$. The results are shown in Figure~\ref{fig:pilot} and Table~\ref{tab:pilot}.

As shown in Figure~\ref{fig:pilot}, all architectures achieve $\text{RMSE}_{\text{Ratio}}$s between 1-1.4 under $64\times$ super-resolution. As shown in Table~\ref{tab:pilot}, the RMSE changes across different resolutions for all architectures remain minimal ($\sim$1). From the MRG perspective, this demonstrates successful generalization. However, for a physical domain with original grid spacing $\Delta \mathbf{x}$: After increasing the resolution by a factor of $\alpha$, for a numerical scheme with $p$-th order accuracy, the truncation error is:
\begin{align}
    E_1 &= C \cdot (\Delta \mathbf{x})^p + O\left((\Delta \mathbf{x})^{p+1}\right)\\
    E_2 &= C \cdot (\Delta \mathbf{x} / \alpha)^p + O\left((\Delta \mathbf{x} / \alpha)^{p+1}\right)
\end{align}
where $E_1$ represents the original error and $E_2$ represents the new error. The error reduction factor is $\frac{E_1}{E_2} \approx \frac{(\Delta \mathbf{x})^p}{(\Delta \mathbf{x} / \alpha)^p} = \alpha^p$. This indicates that if the models truly function as numerical solvers, errors should decrease following a power law as resolution increases. However, the results shown in Table~\ref{tab:pilot} indicate that the model fits data at a specific resolution rather than learning the correct physical operator to become a true numerical solver. We refer to this phenomenon, where the inference pattern and errors are anchored at the training resolution, as \textbf{Scale Anchoring}.

\section{Mechanism of Scale Anchoring}\label{sec:mechanism}

\subsection{Theoretical Analysis}

Physical evolution in spatiotemporal systems is governed by partial differential equations defining operators on function spaces. The evolution operator $\mathcal{F}: C(\Omega) \to C(\Omega)$ that advances the physical field $u(\cdot,t)$ to $u(\cdot, t+\Delta t)$: it operates on continuous functions and embodies resolution-independent physical laws. Neural networks, however, learn fundamentally different mappings. When trained at resolution $\rho_0$ with grid spacing $\Delta x = 1/\rho_0$ and $N_{\rho_0}$ grid points, a neural network minimizes:
\begin{equation}
\min_\Theta \mathbb{E}_{u \sim \mathcal{D}} \|G_{\Theta}(S_{\rho_0}(u)) - S_{\rho_0}(\mathcal{F}[u])\|^2
\end{equation}
where $S_{\rho_0}$ samples the continuous field onto a discrete grid and $G_{\Theta}$ applies learned operations. $G_{\Theta}$ learns a function mapping between finite-dimensional spaces $\mathbb{R}^{N_{\rho_0}} \to \mathbb{R}^{N_{\rho_0}}$, not the functional $\mathcal{F}$. When deployed at resolution $\rho$, the same learned parameters $\Theta_{\rho_0}$ are applied, denoted as $G_{\Theta_{\rho_0}}^{(\rho)}$.

This function-versus-functional distinction leads to a fundamental frequency limitation:

\begin{theorem}[Frequency Blindness]
\label{thm:frequency-blindness}
A neural network trained at resolution $\rho_0$ cannot correctly process frequency components above the Nyquist frequency $\rho_0/2$. The learned operator's frequency response satisfies:
\begin{equation}
\hat{G}_{\Theta_{\rho_0}}(\omega) \approx \begin{cases}
\hat{\mathcal{F}}(\omega) + \epsilon(\omega), & \omega \leq \rho_0/2 \\
\text{undefined/incorrect}, & \omega > \rho_0/2
\end{cases}
\end{equation}
\end{theorem}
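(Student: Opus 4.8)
The plan is to make the two regimes precise and to read the statement as a claim about which features of the learned operator are \emph{identifiable} from the sampled training signal. First I would fix the representation: since $S_{\rho_0}$ samples the field onto $N_{\rho_0}$ equispaced points of $\Omega$, the discrete Fourier transform is a linear isomorphism between $\mathbb{R}^{N_{\rho_0}}$ and the space of trigonometric polynomials with frequencies in $\{|\omega|\le\rho_0/2\}$. Every training pair $(S_{\rho_0}(u),\,S_{\rho_0}(\mathcal{F}[u]))$ therefore lives entirely in this low-frequency band, and by the Poisson summation (aliasing) identity any content of $u$ or of $\mathcal{F}[u]$ at $|\omega|>\rho_0/2$ is either absent (if the field is bandlimited, or pre-filtered before sampling) or folded back into the band, in which case it is by construction indistinguishable at the sample points from genuine low-frequency content. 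This structural fact is what the rest of the argument rests on.

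For the branch $\omega\le\rho_0/2$ I would linearize: express the action of $\mathcal{F}$ and of the trained network on the relevant inputs through their (possibly state-dependent) transfer functions, or — for spectral architectures such as FNO — simply read off the spectral-convolution weights $R_{\Theta}(\omega)$, which act diagonally in the Fourier basis. Empirical-risk minimization forces $\|\widehat{G_{\Theta}(S_{\rho_0}(u))}(\omega)-\widehat{S_{\rho_0}(\mathcal{F}[u])}(\omega)\|$ to be small simultaneously over all $|\omega|\le\rho_0/2$ and over the data distribution; under a mild non-degeneracy assumption on $\mathcal{D}$ (the inputs excite every retained mode) this pins $\hat{G}_{\Theta_{\rho_0}}(\omega)$ to $\hat{\mathcal{F}}(\omega)$ up to a residual $\epsilon(\omega)$ absorbing optimization error, finite-sample error, and the aliased tail. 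That gives the first case of the dichotomy.

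For the branch $\omega>\rho_0/2$ the key observation is that the loss $\mathbb{E}_{u}\|G_{\Theta}(S_{\rho_0}(u))-S_{\rho_0}(\mathcal{F}[u])\|^2$ depends on $\Theta$ only through $G_{\Theta}$ restricted to $\mathbb{R}^{N_{\rho_0}}$, whose Fourier content is supported in $|\omega|\le\rho_0/2$; hence $\nabla_{\Theta}$ of the training objective vanishes identically along any perturbation that changes only the response at $\omega>\rho_0/2$. The behaviour there is thus unidentifiable — fixed by initialization and implicit bias, not by the physics — so it cannot be asserted to equal $\hat{\mathcal{F}}(\omega)$. For architectures with explicit mode truncation at $k_{\max}\le\rho_0/2$ the statement is sharper still: the response at $\omega>\rho_0/2$ is forced to zero, which equals $\hat{\mathcal{F}}(\omega)$ only if $\mathcal{F}$ itself annihilates those modes. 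Either reading yields the ``undefined/incorrect'' conclusion claimed.

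The main obstacle I anticipate is the nonlinearity of $\mathcal{F}$ and of $G_{\Theta}$: a genuine frequency response $\hat{G}_{\Theta_{\rho_0}}(\omega)$ is only literally well defined for shift-equivariant linear maps, so I would either restrict the theorem to the linear/spectral layers where the transfer function is exact, or interpret $\hat{G}$ as the symbol of the Jacobian about a reference state and carry an extra term for higher-order mode mixing (a low-frequency input can be pushed above $\rho_0/2$ in the output). A secondary subtlety is the aliased case: I must argue that even when high frequencies fold into the band rather than vanish, the network still ``cannot correctly process'' them, because nothing at the sample points separates $\omega$ from its alias $\omega-\rho_0$, so any extension to $\omega>\rho_0/2$ is a modeling choice rather than a learned fact. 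I would state these as the mild assumptions under which the theorem holds, consistent with the ``$\approx$'' and ``under mild assumptions'' language used elsewhere in the paper.
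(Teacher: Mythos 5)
Your proposal is correct and follows essentially the same route as the paper's proof: writing the training loss in the frequency domain so that it only involves modes with $|\omega|\le\rho_0/2$, concluding that the gradient of the objective vanishes along any change of the response above the training Nyquist frequency (so that behaviour is unidentifiable and fixed only by initialization/implicit bias), and arguing that within the band empirical-risk minimization pins $\hat{G}_{\Theta_{\rho_0}}(\omega)$ to $\hat{\mathcal{F}}(\omega)$ up to a residual $\epsilon(\omega)$. Your added caveats — the non-degeneracy requirement that the data excite every retained mode, and the need to interpret $\hat{G}$ via linear/spectral layers or the Jacobian symbol for nonlinear maps — are refinements the paper leaves implicit rather than a different argument.
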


This directly causes Scale Anchoring when the network is deployed at higher resolutions:

\begin{theorem}[High-Frequency Error Dominance]
\label{thm:high-freq-error}
When a network trained at resolution $\rho_0$ is deployed at resolution $\rho > \rho_0$, the Scale Anchoring error bound:
\begin{equation}
C = \lim_{\rho \to \infty} \left\|G_{\Theta_{\rho_0}}^{(\rho)} \circ S_\rho - S_\rho \circ \mathcal{F}\right\|_{\text{op}}
\end{equation}
is dominated by the network's inability to process frequency components in the range $[\rho_0/2, \rho/2]$.
\end{theorem}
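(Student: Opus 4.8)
The plan is to pass to the Fourier domain, decompose the operator-norm discrepancy into contributions from disjoint frequency bands, and show that the band $[\rho_0/2,\rho/2]$ of modes that become newly resolvable at resolution $\rho$ is exactly where the error concentrates and that its contribution does not vanish as $\rho\to\infty$. Concretely, I would take the relevant inputs to be continuous fields $u$ with Fourier data $\hat u$ and interpret $\|\cdot\|_{\text{op}}$ via Plancherel, so that for a sampled field the squared discrepancy $\|G^{(\rho)}_{\Theta_{\rho_0}}(S_\rho(u)) - S_\rho(\mathcal F[u])\|^2$ splits additively over the discrete frequency bins $|\omega|\le\rho/2$. Writing $\mathcal F$ (as is standard for the linearized evolution, or for the principal symbol of the governing PDE) as a Fourier multiplier with symbol $\hat{\mathcal F}(\omega)$, each bin contributes $|\hat G_{\Theta_{\rho_0}}(\omega)-\hat{\mathcal F}(\omega)|^2\,|\hat u(\omega)|^2$.

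Next I would invoke Theorem~\ref{thm:frequency-blindness} band by band. On the low band $|\omega|\le\rho_0/2$ the multiplier discrepancy equals $|\epsilon(\omega)|$, so this portion of the error is at most $\big(\sup_{|\omega|\le\rho_0/2}|\epsilon(\omega)|\big)\,\|u\|$, a quantity independent of $\rho$. On the high band $\rho_0/2<|\omega|\le\rho/2$ the learned response is incorrect: the network's spectral parameters do not cover these modes, so $\hat G_{\Theta_{\rho_0}}(\omega)\approx 0$ while $\hat{\mathcal F}(\omega)\neq 0$, whence $|\hat G_{\Theta_{\rho_0}}(\omega)-\hat{\mathcal F}(\omega)|\ge c_0>0$ uniformly, and the high-band contribution is bounded below by $c_0\big(\sum_{\rho_0/2<|\omega|\le\rho/2}|\hat u(\omega)|^2\big)^{1/2}$.

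I would then let $\rho\to\infty$ under a mild spectral-richness assumption on the data distribution $\mathcal D$: physical fields are not band-limited, so the tail energy $\sum_{|\omega|>\rho_0/2}|\hat u(\omega)|^2$ is strictly positive (as for a turbulent cascade with a power-law spectrum, or merely a non-trivial Sobolev bound). Then the high-band lower bound stays bounded away from zero while the low-band term is frozen at a $\rho$-independent value, so the supremum defining the operator norm is saturated by high-frequency content; hence $C$ equals, up to that fixed low-frequency $\epsilon$-term, the accumulated high-band discrepancy — the claimed domination. The genuinely nonlinear case is treated by applying the same band decomposition to the bilinear interaction terms and checking that spectral richness propagates through the nonlinearity.

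The main obstacle I anticipate is making $\|\cdot\|_{\text{op}}$ genuinely well-posed: domain and codomain live on grids that change with $\rho$, so one must either fix a common continuum target — interpreting $G^{(\rho)}_{\Theta_{\rho_0}}\circ S_\rho$ as an approximant to $\mathcal F$ on $C(\Omega)$ — or track a compatible family of discrete norms, and in either case account for the aliasing that $S_\rho$ introduces, since sampling folds the very high-frequency content at issue. A related subtlety is that Theorem~\ref{thm:frequency-blindness} only asserts the frequency response approximately; promoting ``$\hat G\approx 0$ / incorrect'' on the high band to a uniform lower bound $c_0$ on the discrepancy requires an explicit modelling assumption on how the architecture's spectral capacity is allocated, which I would state as a hypothesis rather than derive.
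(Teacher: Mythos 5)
Your proposal follows essentially the same route as the paper's proof: a frequency-band decomposition of the squared error, with the low band $|\omega|\le\rho_0/2$ controlled by the $\epsilon$-term from Theorem~\ref{thm:frequency-blindness}, the band $(\rho_0/2,\rho/2]$ contributing $\approx\sum|\hat{\mathcal F}(k)|^2|\hat u_k|^2$ because the learned response is effectively zero there while the true operator is not, and the paper's non-negligible high-frequency-content assumption playing exactly the role of your spectral-richness hypothesis so that this contribution stays bounded away from zero as $\rho\to\infty$. The only notable difference is that the paper derives the vanishing of the learned response above the training Nyquist directly from its interpolation-based deployment $G_{\Theta_{\rho_0}}^{(\rho)}=I_{\rho_0\to\rho}\circ G_\Theta\circ I_{\rho\to\rho_0}$ and the stated interpolation properties, whereas you (reasonably) flag it as an explicit modelling hypothesis.
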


The complete mathematical derivations are presented in Appendix~\ref{sec:math}.

\subsection{Experiment Validation}\label{sec:val_exp}

In this section, we design two experiments to validate Theorems~\ref{thm:frequency-blindness} and~\ref{thm:high-freq-error}:

\textbf{Validation Experiment A.1:} We first use pseudo-spectral methods in the frequency domain to simulate 2D convection-diffusion equations at $64^2$ resolution. We then implement eight commonly used model architectures in STF (GNN, Transformer, CNN, Diffusion, NO, Neural ODE, Mamba, NN) and fully train them on the simulation data. Finally, we input sinusoidal signals ($u(x,y,t) = A \cdot \sin(2\pi f \cdot x)$) of varying frequencies (0-50Hz) and measure the magnitude of the empirical frequency response $H_{\mathrm{mag}}(f) = A_{\mathrm{out}}(f)/A_{\mathrm{in}}(f)$ (output–to–input amplitude ratio), Bandwidth (BW), and Anchoring Ratio: $H(f=30)/H(f=34)$ (cutoff sharpness). Results are shown in Figure~\ref{fig:val_exp_A.1}.

\begin{figure}[htbp]
    \centering
    \includegraphics[width=0.95\textwidth]{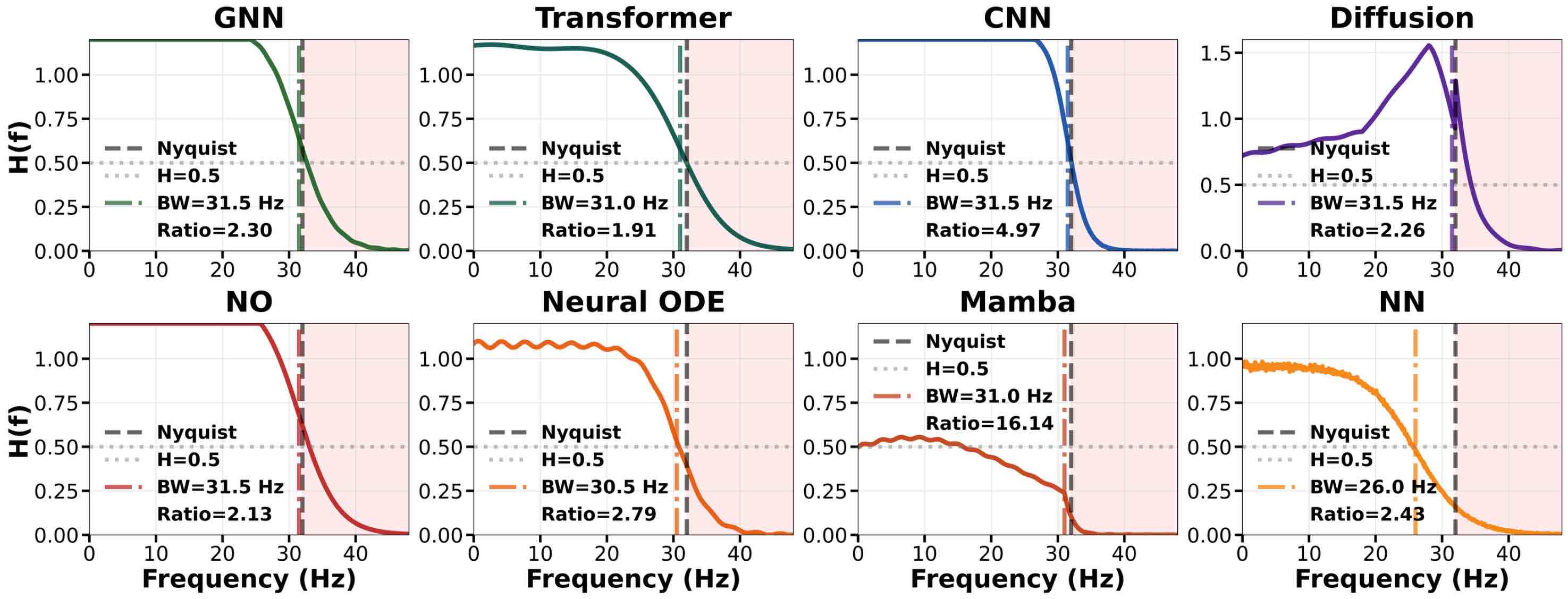}
    \caption{Frequency response analysis on different architectures, models trained at resolution $64^2$.}
    \label{fig:val_exp_A.1}
\end{figure}

All architectures exhibit a unified frequency response pattern: maintaining high $H(f)$ before the Nyquist frequency (32Hz), followed by a cliff-like drop near it. This results in BW concentrated around the Nyquist frequency and high Anchoring Ratios. The universal Scale Anchoring across all architectures validates Theorem~\ref{thm:frequency-blindness}, confirming that neural networks trained at resolution $\rho_0$ cannot correctly process frequency components above the Nyquist frequency $\rho_0/2$. More detailed experimental settings and results for different training resolutions are provided in Appendix~\ref{sec:app_val}.

\textbf{Validation Experiment A.2:} We use the same pseudo-spectral method to simulate 2D convection-diffusion equations at resolutions of $32^2$, $64^2$, $128^2$, and $256^2$. We employ the same eight model architectures, fully trained on $64^2$ resolution data. However, we now perform inference at $128^2$ resolution for 50 timesteps and apply Fast Fourier Transform (FFT) to obtain frequency components (10-50Hz) at each step. Results are shown in Figure~\ref{fig:val_exp_A.2}. We then apply FFT to the simulation data and separate bandlimited ($f<32$Hz) and wideband ($f<100$Hz) signals. For results simulated at all resolutions, we calculate the $Error\ Ratio = \text{bandlimited}_{\text{error}}/\text{wideband}_{\text{error}}$ to quantify the proportion of low-frequency error in the total error. Results are shown in Table~\ref{tab:val_exp_A.2}.

\begin{figure}[htbp]
    \centering
    \includegraphics[width=0.95\textwidth]{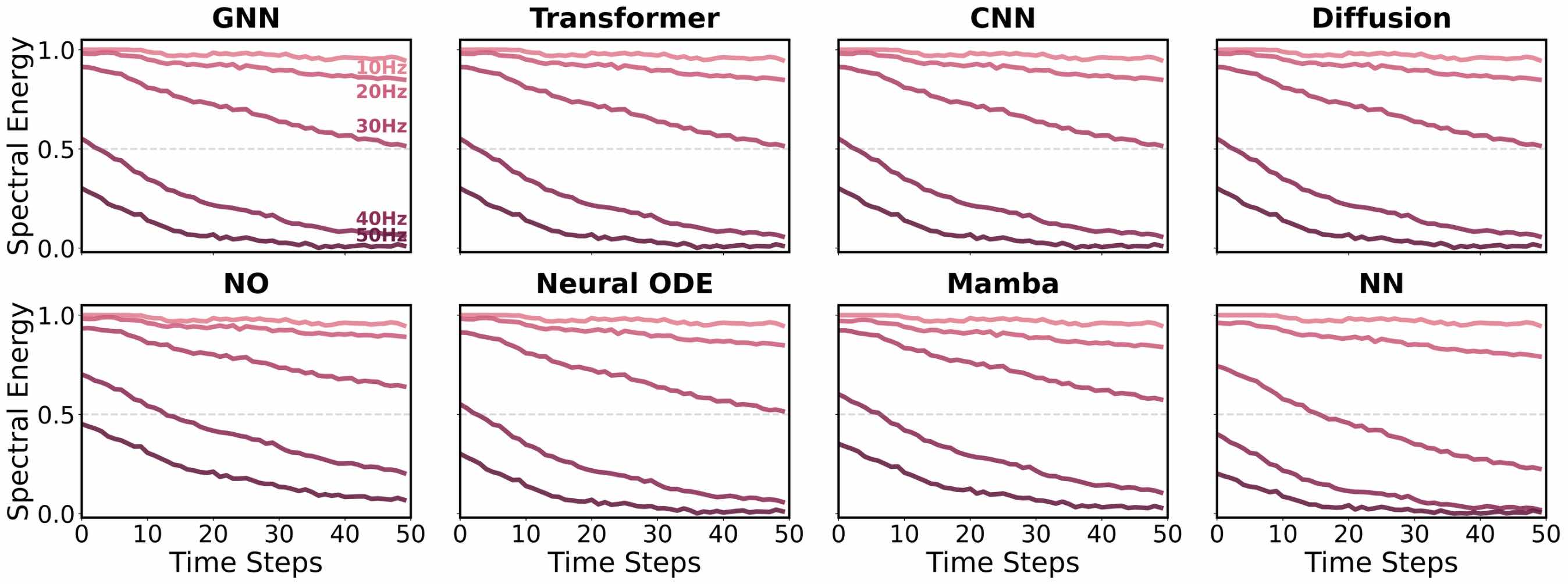}
    \caption{Loss of different frequency components during propagation.}
    \label{fig:val_exp_A.2}
\end{figure}

\begin{table}[htbp]
\centering
\caption{$Error\ Ratio$ across test resolutions.}
\begin{tabular}{@{}lcccc@{}}
\toprule
Model & $32\times32$ & $64\times64$ & $128\times128$ & $256\times256$ \\
\midrule
GNN & 1.000 & 1.000 & 0.357 & 0.223 \\
Transformer & 1.000 & 1.000 & 0.350 & 0.234 \\
CNN & 1.000 & 1.000 & 0.342 & 0.216 \\
Diffusion & 1.000 & 1.000 & 0.351 & 0.244 \\
NO & 1.000 & 1.000 & 0.581 & 0.415 \\
Neural ODE & 1.000 & 1.000 & 0.368 & 0.232 \\
Mamba & 1.000 & 1.000 & 0.431 & 0.287 \\
NN & 1.000 & 1.000 & 0.338 & 0.233 \\
\bottomrule
\end{tabular}
\label{tab:val_exp_A.2}
\end{table}

The rapid decay of high-frequency components energy in Figure~\ref{fig:val_exp_A.2} reveals the deeper mechanism behind the phenomenon in Figure~\ref{fig:val_exp_A.1}: as shown by \citet{gao2025discretization}, while high-resolution \textbf{single-step} discretization primarily introduces errors in the low-frequency portion, a more dominant mechanism during \textbf{multi-step} inference is that models cannot maintain frequencies above the training Nyquist frequency. This results in the pattern shown in Table~\ref{tab:val_exp_A.2} where, during high-resolution inference, the proportion of low-frequency error decreases while high-frequency error becomes dominant due to more severe accumulation and amplification. These experimental results validate Theorem~\ref{thm:high-freq-error}, confirming that frequency components above the training resolution's Nyquist frequency dominate the error during high-resolution inference.

\section{Solution for Scale Anchoring}\label{sec:solution}

To address Scale Anchoring, models must generalize to signals containing frequency components beyond the training Nyquist limit. We propose Frequency Representation Learning, an architecture-agnostic approach implemented in three steps:

\textbf{Step 1: Multi-Resolution Data Construction.} We construct multi-resolution data through downsampling:
\begin{equation}\label{eq:multi-res-data}
\mathcal{D} = \{(u^{(\rho_j)}(t), u^{(\rho_j)}(t+\Delta t))\}_{j=0}^{J-1}, \quad \rho_j = \rho_0/2^j
\end{equation}
Each resolution captures different frequency bands up to its Nyquist, enabling the model to understand relative frequency relationships by training on multiple resolutions simultaneously.

\textbf{Step 2: Normalized Frequency Representation.} To ensure resolution invariance, we introduce normalized frequency encoding that makes networks aware of relative frequencies:
\begin{equation}
PE_{freq}(x, k, \rho) = \sin\left(2\pi k \cdot x \cdot \frac{1}{k_{Nyq}(\rho)}\right)
\label{eq:freq-encoding}
\end{equation}
This encoding satisfies $PE_{freq}(x, k, \rho_1) = PE_{freq}(x \cdot \rho_1/\rho_2, k, \rho_2)$, ensuring identical representation for the same physical frequency across different resolutions. By normalizing frequencies relative to each resolution's Nyquist frequency, the network learns patterns that transfer across scales.

\textbf{Step 3: Frequency-Aware Training.} The critical modification to standard training is the addition of a frequency consistency loss. We train the model across all resolutions with a unified objective:
\begin{equation}\label{eq:frl-loss}
\mathcal{L}(\Theta) = \sum_j \left[\|F_\Theta(u^{(\rho_j)}(t)) - u^{(\rho_j)}(t+\Delta t)\|^2 + \lambda \|\hat{F}_\Theta(u^{(\rho_j)}(t)) - \hat{u}^{(\rho_j)}(t+\Delta t)\|_{freq}^2\right]
\end{equation}
The frequency consistency term ensures spectral accuracy across scales. Through multi-scale training with normalized encodings, the network learns resolution-invariant frequency patterns.

Among these, Step 1 and Step 3 follow standard practices in existing multi-scale training and spectral regularization methods and do not constitute novel techniques. By contrast, the normalized frequency representation in Step 2 is designed from the Scale Anchoring perspective. To the best of our knowledge, is novel compared to existing frequency-encoding approaches. It is also the key mechanism for mitigating Scale Anchoring: as shown by the ablation results in Appendix~\ref{sec:ablation}, Step 2 is a necessary component for breaking Scale Anchoring.

Furthermore, the effectiveness of FRL relies on the underlying physical system satisfying assumptions in the spectral domain: the energy spectrum envelope and local relationships between low/mid-frequency bands and higher-frequency bands remain smooth. For the moderate-Reynolds-number flows and weather forecasting data, this assumption typically holds. In systems with extremely high Reynolds numbers or extreme weathers, however, local spectral relationships are no longer smooth; as a result, FRL's frequency extrapolation capability degrades. Appendix~\ref{sec:failure_mode} uses high-Re turbulence as an example to analyze this failure mode. It also discusses potential improvements by incorporating explicit physical spectral constraints such as Kolmogorov scaling laws, into FRL's representation and loss design. The complete FRL pseudocode is provided in Appendix~\ref{sec:pseudocode}, and a detailed analysis of training/inference complexity and memory usage is summarized in Appendix~\ref{sec:complexity}.

\section{Empirical Evaluation}\label{sec:exp}

We evaluate our approach on two representative STF domains: fluid simulation and weather forecasting. Models are trained on low-resolution data and evaluated on high-resolution data. 

\subsection{Experimental setup}\label{sec:exp_setup}

\subsubsection{Datasets} 

\textbf{Fluid Simulation}: Non-reacting HIT consists of 3D homogeneous isotropic turbulence simulations with a spherical $\text{O}_2$ core (radius $0.25L$) embedded in $\text{CH}_4$ \citep{chung2022interpretable}. The dataset tracks velocity components, thermodynamic variables, and species mass fractions across 98 timesteps spanning 34 microseconds. The temporal sequence is split into 70 timesteps for training, 14 for validation, and 14 for testing. Training resolution: $32^3$ grid; Test resolutions: $43^3$, $64^3$, and $129^3$. Low-resolution fields are obtained by uniformly coarsening the original $129^3$ DNS snapshots by factors of $2\times$, $3\times$, and $4\times$ in each spatial direction (i.e., $129^3 \!\to\! 64^3, 43^3, 32^3$) using spectral low-pass filtering followed by strided sampling. We choose $32^3$ as the training resolution because it is the coarsest grid that still preserves the main flow structures and allows stable training for all baselines, while maximizing the Nyquist gap between the training grid and the finest test grid.

\textbf{Weather Forecasting}: ERA5 contains global atmospheric reanalysis data across six vertical pressure levels (200–1000 hPa) \citep{hersbach2020era5, copernicus2017era5}. Physical variables include temperature $T$ (K), horizontal wind components $(u, v)$ (m/s), and geopotential height $z$ ($\text{m}^2/\text{s}^2$). The dataset spans 360 days (January 2024–January 2025) with 6-hour temporal resolution, totaling 4320 timesteps, with the final 7 days reserved for validation. Training resolution: $180\times90\times6$; Test resolutions: $360\times180\times6$, $720\times361\times6$, and $1440\times721\times6$. Here the original ERA5 fields are stored on a $1440\times721\times6$ grid; we construct the low-resolution training data by regridding to $180\times90\times6$ via area-weighted averaging.

\subsubsection{Baseline Selection and Implementation Details} 

We selected the latest SOTA baselines for each architecture published in NeurIPS/ICLR/ICML. \textbf{Fluid Simulation:} Neural SPH (GNN), DeepLag (Transformer), PARCv2 (CNN), DYffusion (Diffusion), SFNO (NO), FNODE (Neural ODE), NeuralFluid (NN) \citep{toshev2024neural, ma2024deeplag, nguyen2024parcv2, ruhling2023dyffusion, cao2024spectral, li2024fourier, li2024neuralfluid}. \textbf{Weather Forecasting:} WeatherGFT (Transformer), PDE-CNN (CNN), ARCI (Diffusion), Graph-EFM (GNN), ClimODE (Neural ODE) \citep{xu2024generalizing, dona2020pde, ruhling2023dyffusion, oskarsson2024probabilistic, verma2024climode}. FRL-enhanced baselines follow a unified training recipe; full hyperparameter settings are provided in Appendix~\ref{sec:pseudocode}.

Notably, we primarily demonstrate FRL's architecture-agnostic Scale Decoupling capability here. The comparison of FRL with existing ZS-SR STF baselines is provided in Appendix~\ref{sec:main_results}.

All experiments were conducted on a server equipped with four NVIDIA A100 (80GB) GPUs, using PyTorch 2.1 and CUDA 12.0. We use the AdamW optimizer (lr=$1 \times 10^{-3}$, weight\_decay=$1 \times 10^{-5}$) with gradient clipping (max\_norm=1.0), batch size 8, and sequence length 10. All models are trained for a maximum of 100 epochs with early stopping based on validation performance to prevent overfitting. To ensure reproducibility, we set the global random seed to 42.

\subsubsection{Metrics} 

\textbf{Precision Metrics.} We employ standard error measures: The Root Mean Square Error (RMSE) and Mean Absolute Error (MAE) are computed as $\text{RMSE} = \sqrt{\frac{1}{N}\sum_{i=1}^{N}(u_i^{\text{pred}} - u_i^{\text{true}})^2}$ and $\text{MAE} = \frac{1}{N}\sum_{i=1}^{N}|u_i^{\text{pred}} - u_i^{\text{true}}|$, where $u_i^{\text{pred}}$ and $u_i^{\text{true}}$ denote predicted and ground truth values at grid point $i$, and $N$ is the total number of grid points. For weather forecasting, we compute RMSE and Anomaly Correlation Coefficient (ACC): $\text{ACC} = \frac{\sum_{i=1}^{N}(f_i' \cdot a_i')}{\sqrt{\sum_{i=1}^{N}(f_i')^2 \cdot \sum_{i=1}^{N}(a_i')^2}}$, where $f_i' = u_i^{\text{pred}} - \bar{u}_i$ and $a_i' = u_i^{\text{true}} - \bar{u}_i$ are anomalies relative to the climatological mean $\bar{u}_i$. ACC values above 0.6 indicate reliable forecasts. We evaluate four key variables at 500\,hPa: geopotential height (Z500), temperature (T500), and horizontal wind components (U500, V500), as the 500\,hPa level best represents mid-tropospheric dynamics critical for weather systems. Most critically, we introduce $\text{RMSE}_{\text{Ratio}} = \text{RMSE}_{\text{high}}/\text{RMSE}_{\text{low}}$ to quantify Scale Anchoring severity.

\textbf{Frequency Metrics.} We adopt the frequency response analysis framework from Section~\ref{sec:val_exp}. The magnitude of the empirical frequency response $H_{\mathrm{mag}}(f) = A_{\mathrm{out}}(f)/A_{\mathrm{in}}(f)$ (output–to–input amplitude ratio). The Bandwidth (BW) represents the frequency at which $H(f)$ drops to 0.707 of its low-frequency value. The Anchoring Ratio $\text{AR} = H(f_{\text{Nyq}}-\delta)/H(f_{\text{Nyq}}+\delta)$ quantifies the sharpness of frequency cutoff near the Nyquist frequency $f_{\text{Nyq}} = \rho_0/2$ of the training resolution $\rho_0$, where we set $\delta = 4$ Hz. The Error Ratio $\text{ER} = E_{\text{bandlimited}}/E_{\text{wideband}}$ compares errors between signals limited to frequencies below the training Nyquist frequency and full-spectrum signals, revealing the contribution of high-frequency errors to total prediction error.

\subsection{Zero-Shot Super-Resolution Spatiotemporal Forecasting}

\subsubsection{Zero-Shot Super-Resolution Fluid Simulation}\label{sec:3d_fluid}

We report the main precision metric RMSE and frequency metric $H(f)$ here, and provide complete results in Appendix~\ref{sec:main_results}.

\begin{table}[htbp]
\centering
\caption{RMSE for 3D fluid simulation of baselines and FRL enhanced baselines.}
\begin{tabular}{@{}lcccccc@{}}
\toprule
Method & $32^3$ & $43^3$ & $64^3$ & $129^3$ & $\text{RMSE}_{\text{Ratio}}$ \\
\midrule
GNN (Neural SPH) & 0.00183 & 0.00183 & 0.00183 & 0.00187 & 1.018 \\
\rowcolor{lightblue} \textbf{GNN + FRL} & \textbf{0.00183} & \textbf{0.00101} & \textbf{0.00062} & \textbf{0.00032} & \textbf{0.175} \\
Transformer (DeepLag) & 0.00521 & 0.00524 & 0.00527 & 0.00532 & 1.021 \\
\rowcolor{lightblue} \textbf{Transformer + FRL} & \textbf{0.00521} & \textbf{0.00298} & \textbf{0.00185} & \textbf{0.00098} & \textbf{0.188} \\
CNN (PARCv2) & 0.00517 & 0.00523 & 0.00535 & 0.00548 & 1.060 \\
\rowcolor{lightblue} \textbf{CNN + FRL} & \textbf{0.00517} & \textbf{0.00261} & \textbf{0.00142} & \textbf{0.00071} & \textbf{0.137} \\
Diffusion (DYffusion) & 0.00294 & 0.00297 & 0.00301 & 0.00306 & 1.041 \\
\rowcolor{lightblue} \textbf{Diffusion + FRL} & \textbf{0.00294} & \textbf{0.00171} & \textbf{0.00108} & \textbf{0.00065} & \textbf{0.221} \\
NO (SFNO) & 0.00468 & 0.00470 & 0.00473 & 0.00476 & 1.017 \\
\rowcolor{lightblue} \textbf{NO + FRL} & \textbf{0.00468} & \textbf{0.00237} & \textbf{0.00128} & \textbf{0.00063} & \textbf{0.135} \\
Neural ODE (FNODE) & 0.00405 & 0.00432 & 0.00478 & 0.00542 & 1.338 \\
\rowcolor{lightblue} \textbf{Neural ODE + FRL} & \textbf{0.00405} & \textbf{0.00261} & \textbf{0.00195} & \textbf{0.00152} & \textbf{0.375} \\
NN (NeuralFluid) & 0.00374 & 0.00378 & 0.00383 & 0.00387 & 1.035 \\
\rowcolor{lightblue} \textbf{NN + FRL} & \textbf{0.00374} & \textbf{0.00206} & \textbf{0.00125} & \textbf{0.00068} & \textbf{0.182} \\
\bottomrule
\end{tabular}
\label{tab:3d_fluid_rmse}
\end{table}

Table~\ref{tab:3d_fluid_rmse} presents the error metrics for each architecture's baseline on 3D ZS-SR fluid simulation at different resolutions. For all baselines, the RMSE remains nearly constant or slightly increases as the resolution increases, with $\text{RMSE}_{\text{Ratio}}$ greater than 1. This indicates that all baselines exhibit Scale Anchoring. The FRL-enhanced baselines achieve $3.57\times - 7.74\times$ improvement at high resolution compared to the baselines, with $\text{RMSE}_{\text{Ratio}}$ reduced to 0.135-0.375. This demonstrates that FRL effectively achieves Scale Decoupling, enabling model accuracy to increase with rising resolution.

\begin{figure}[htbp]
    \centering
    \begin{subfigure}[c]{0.48\textwidth}
        \centering
        \includegraphics[height=5cm]{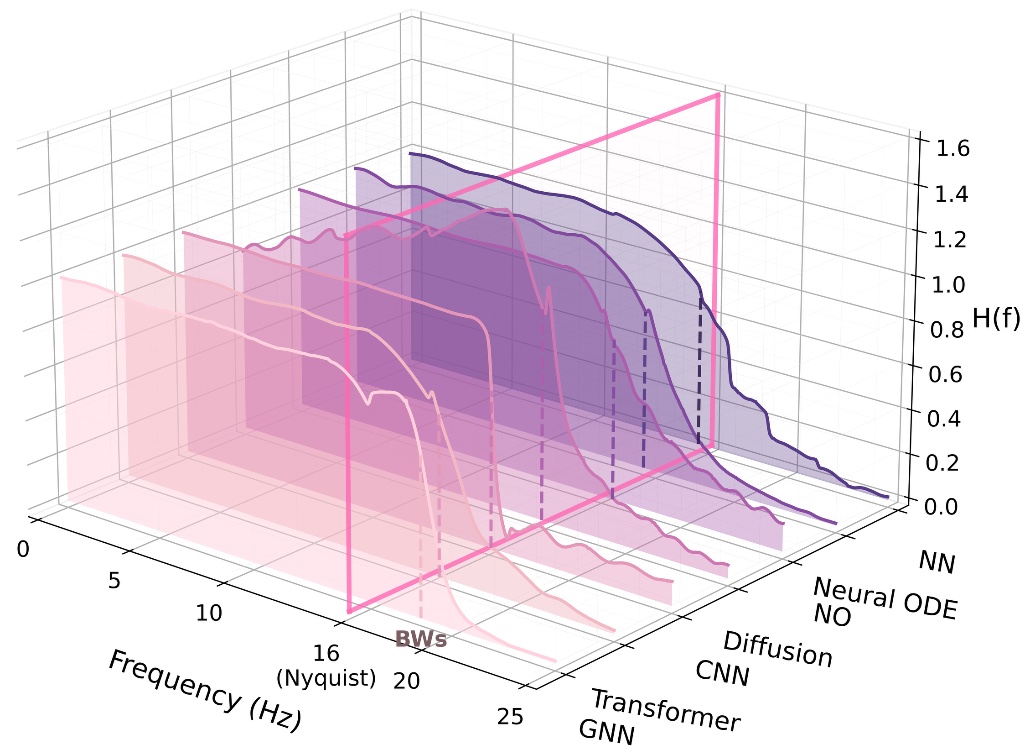}
        \caption{Frequency response analysis for 3D fluid simulation of baselines.}
        \label{fig:3d_fluid_baseline_fre}
    \end{subfigure}
    \hfill
    \begin{subfigure}[c]{0.48\textwidth}
        \centering
        \includegraphics[height=5cm]{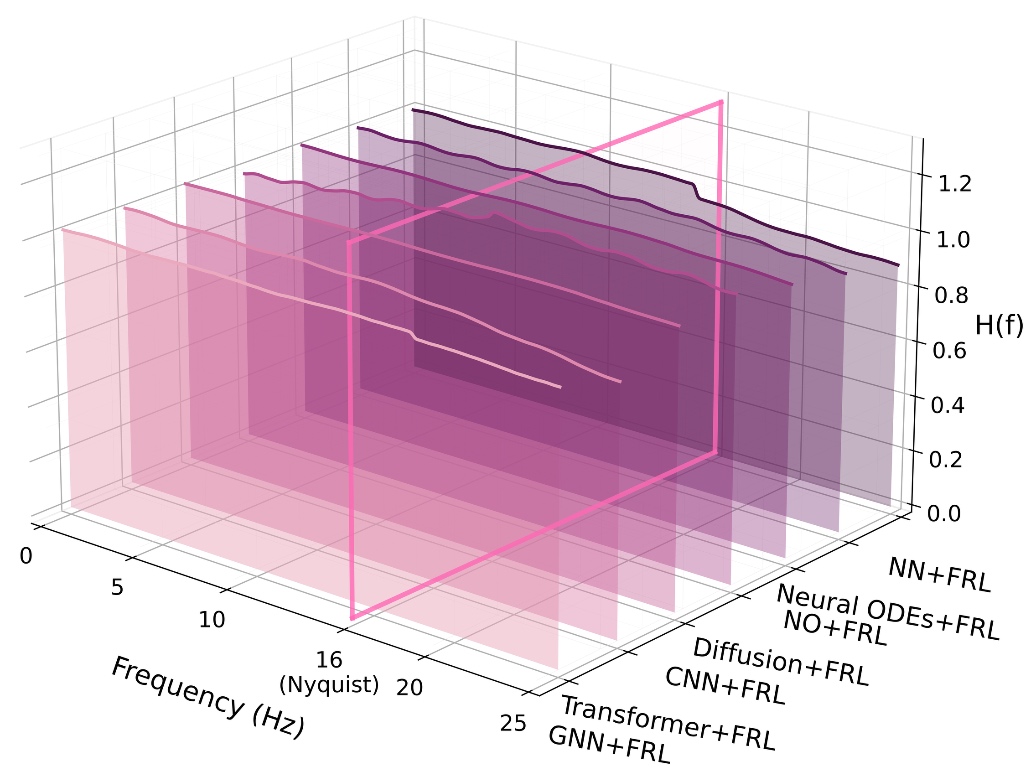}
        \caption{Frequency response analysis for 3D fluid simulation of FRL enhanced baselines.}
        \label{fig:3d_fluid_frl_fre}
    \end{subfigure}
    \caption{}
    \label{fig:3d_fluid_comparison}
\end{figure}

Figure~\ref{fig:3d_fluid_comparison} shows the frequency signal processing capabilities of each architecture's baseline and its FRL-enhanced version. It can be observed that the baselines' BWs are concentrated near the Nyquist frequency (16Hz), with the frequency response function $H(f)$ also dropping sharply around this point. This demonstrates that the baselines' signal processing capabilities for different frequencies are limited by the scale constraints of the training data. In contrast, the FRL-enhanced versions of the baselines exhibit stable processing capabilities across the entire frequency range, successfully achieving scale decoupling. Additionally, through FFT-based error separation, we find that the baselines have Error Ratios of 0.125-0.169, demonstrating the dominance of high-frequency component errors. Meanwhile, the FRL-enhanced versions of the baselines achieve Error Ratios of 0.4-0.556, proving that FRL improves overall accuracy by effectively reducing high-frequency errors.

\begin{table}[htbp]
\centering
\caption{Z500 for ERA5 500 hPa 7-day forecast of baselines and FRL enhanced baselines.}
\resizebox{\textwidth}{!}{
\begin{tabular}{@{}llcccccc@{}}
\toprule
Method & Metric & $180\times90$ & $360\times180$ & $720\times361$ & $1440\times721$ & $\text{RMSE}_{\text{Ratio}}$ \\
\midrule
Transformer (WeatherGFT) & RMSE & 685 & 692 & 708 & 721 & 1.053 \\
& ACC & 0.52 & 0.50 & 0.47 & 0.44 & \\
\rowcolor{lightgreen} \textbf{Transformer + FRL} & \textbf{RMSE} & \textbf{685} & \textbf{572} & \textbf{518} & \textbf{485} & \textbf{0.708} \\
\rowcolor{lightgreen} & \textbf{ACC} & \textbf{0.52} & \textbf{0.58} & \textbf{0.62} & \textbf{0.65} & \\
CNN (PDE-CNN) & RMSE & 692 & 701 & 718 & 738 & 1.066 \\
& ACC & 0.51 & 0.49 & 0.46 & 0.43 & \\
\rowcolor{lightgreen} \textbf{CNN + FRL} & \textbf{RMSE} & \textbf{692} & \textbf{558} & \textbf{496} & \textbf{458} & \textbf{0.662} \\
\rowcolor{lightgreen} & \textbf{ACC} & \textbf{0.51} & \textbf{0.59} & \textbf{0.63} & \textbf{0.66} & \\
Diffusion (ARCI) & RMSE & 672 & 678 & 688 & 695 & 1.034 \\
& ACC & 0.54 & 0.52 & 0.50 & 0.48 & \\
\rowcolor{lightgreen} \textbf{Diffusion + FRL} & \textbf{RMSE} & \textbf{672} & \textbf{565} & \textbf{512} & \textbf{482} & \textbf{0.717} \\
\rowcolor{lightgreen} & \textbf{ACC} & \textbf{0.54} & \textbf{0.59} & \textbf{0.62} & \textbf{0.64} & \\
GNN (Graph-EFM) & RMSE & 698 & 705 & 721 & 735 & 1.053 \\
& ACC & 0.50 & 0.48 & 0.45 & 0.42 & \\
\rowcolor{lightgreen} \textbf{GNN + FRL} & \textbf{RMSE} & \textbf{698} & \textbf{578} & \textbf{528} & \textbf{498} & \textbf{0.713} \\
\rowcolor{lightgreen} & \textbf{ACC} & \textbf{0.50} & \textbf{0.57} & \textbf{0.61} & \textbf{0.63} & \\
Neural ODE (ClimODE) & RMSE & 708 & 722 & 745 & 772 & 1.090 \\
& ACC & 0.49 & 0.47 & 0.44 & 0.41 & \\
\rowcolor{lightgreen} \textbf{Neural ODE + FRL} & \textbf{RMSE} & \textbf{708} & \textbf{586} & \textbf{532} & \textbf{502} & \textbf{0.709} \\
\rowcolor{lightgreen} & \textbf{ACC} & \textbf{0.49} & \textbf{0.56} & \textbf{0.60} & \textbf{0.63} & \\
\bottomrule
\end{tabular}
}
\label{tab:3d_weather_z500}
\end{table}

\subsubsection{Zero-Shot Super-Resolution Weather Forecasting}

We report the main precision metrics RMSE and ACC for Z500 hPa, and the frequency metric $H(f)$ here, with detailed results provided in Appendix~\ref{sec:main_results}.

Table~\ref{tab:3d_weather_z500} presents the error metrics for each architecture's baseline on 3D ZS-SR weather forecasting at different resolutions. For all baselines, the RMSE errors are high and ACC consistently remains below 0.6, indicating unreliable predictions. Moreover, the RMSE remains nearly constant as resolution increases, with $\text{RMSE}_{\text{Ratio}}$ consistently greater than 1. This indicates that all baselines exhibit Scale Anchoring, indicating low-resolution trained models unusable. However, the FRL-enhanced baselines show significant improvement in accuracy at high resolutions. Notably, the ACC increases with resolution to above 0.6, representing successful and reliable weather predictions. This again demonstrates that FRL effectively achieves Scale Decoupling.

\begin{figure}[htbp]
    \centering
    \begin{subfigure}[c]{0.48\textwidth}
        \centering
        \includegraphics[height=5cm]{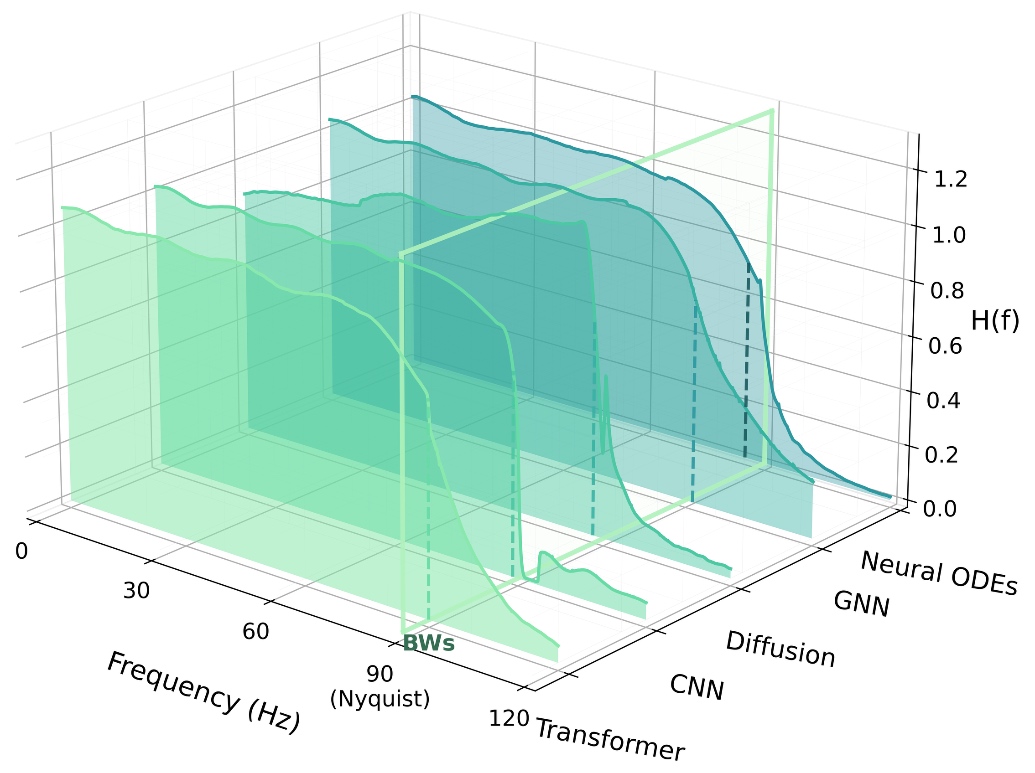}
        \caption{Frequency response analysis for 3D weather forecasting of baselines.}
        \label{fig:3d_weather_baseline_fre}
    \end{subfigure}
    \hfill
    \begin{subfigure}[c]{0.48\textwidth}
        \centering
        \includegraphics[height=5cm]{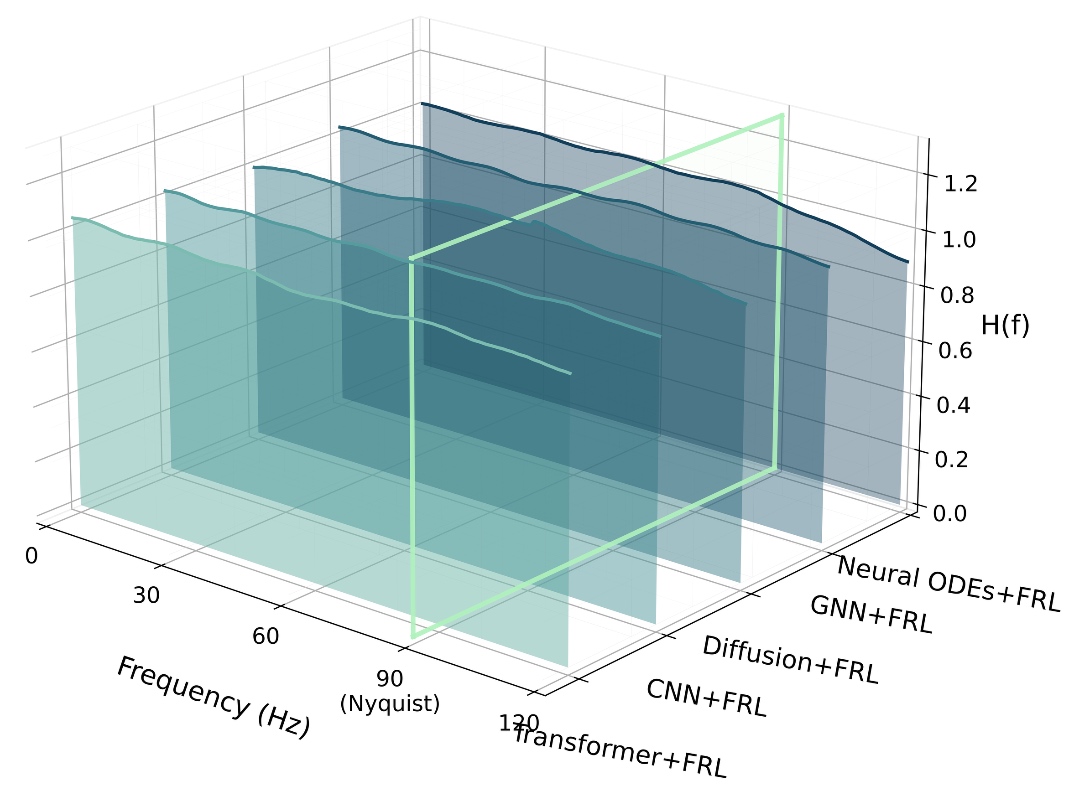}
        \caption{Frequency response analysis for 3D weather forecasting of FRL enhanced baselines.}
        \label{fig:3d_weather_frl_fre}
    \end{subfigure}
    \caption{}
    \label{fig:3d_weather_comparison}
\end{figure}

Figure~\ref{fig:3d_weather_comparison} shows the frequency signal processing capabilities of each architecture's baseline and its FRL-enhanced version. The behavior pattern of the baselines' $H(f)$ is similar to that in Section~\ref{sec:3d_fluid}, with signal processing capabilities for different frequencies still limited by the Nyquist frequency (90Hz) of the training data. In contrast, the FRL-enhanced versions similarly achieve Scale Decoupling. Additionally, through FFT-based error separation calculations, we find that the baselines have Error Ratios of 0.083-0.222, while the FRL-enhanced versions of the baselines achieve Error Ratios of 0.303-0.4, again demonstrating that FRL improves overall accuracy by effectively reducing high-frequency errors.

\subsubsection{Computational Overhead}

We also evaluate the computational cost of FRL on the 3D ZS-SR fluid simulation task. Across all architectures, FRL-enhanced variants increase training wall-clock time by only about $1.1\times$–$1.4\times$ and peak training GPU memory by about $1.3\times$–$1.5\times$, while inference time overhead remains below $2\%$. These measurements are consistent with our complexity analysis in Appendix~\ref{sec:complexity}, which shows that FRL preserves the asymptotic inference complexity $\mathcal{O}(M(n))$ of the backbone and adds only a small constant factor to training complexity and VRAM usage. Tables~\ref{tab:frl-theory} and~\ref{tab:frl-empirical} provide architecture-wise breakdowns of the theoretical and empirical overhead.

\section{Discussion}

Scale Anchoring has a clear mechanism, but its symptoms are often misinterpreted. While the Nyquist limitation is classical, its concrete manifestations and implications in STF have not been fully recognized. Meanwhile, prior cross-resolution methods were not explicitly designed to target Scale Anchoring in STF, and therefore typically did not directly address this limitation. For existing ZS-SR STF approaches, operator-learning models can be mathematically resolution-agnostic, yet in practice remain constrained by the training data's resolution: Scale Anchoring arises from the information bound imposed by fixed-resolution supervision rather than any particular architecture. This characteristic originating from the data observation itself also distinguishes Scale Anchoring from existing problems.

With FRL, we observe errors that decrease with resolution across tasks and models (Section~\ref{sec:exp}; Appendix~\ref{sec:main_results}), with low overhead (quantified in Appendices~\ref{sec:complexity} and~\ref{sec:VRAM}). In addition, Appendix~\ref{sec:ablation} reports (i) ablations of the three FRL components; (ii) drop-in replacement experiments that isolate the contribution of the normalized frequency encoding; and (iii) parameter sensitivity analyses.

Notably, FRL does not guarantee power-law error reduction like numerical solvers (as demonstrated in Appendix~\ref{sec:theory_frl}) or effectiveness in arbitrary scenarios. In practice, the range over which extrapolation remains reliable is influenced by the operator's cross-resolution scale consistency, the smoothness of the learned frequency response within the training band, and model capacity. When small-scale physics undergo qualitative transitions (e.g., higher Reynolds number turbulence or extreme weather), extrapolation performance deteriorates. We analyze specific failure cases in Appendix~\ref{sec:failure_mode}, specify the conditions under which FRL is effective and the boundaries of failure, and propose potential improvement strategies.

\subsubsection*{Acknowledgements}

This work is supported by the National Natural Science Foundation of China (Grant No. 62372409) and the Ministry of Science and Technology of the People’s Republic of China (Grant No. 2023ZD0120704 of Project No. 2023ZD0120700).

\bibliography{iclr2026_conference}
\bibliographystyle{iclr2026_conference}

\newpage

\appendix
\addcontentsline{toc}{section}{Appendices}

\section*{Contents of Appendices}
\vspace{1em}
\noindent
\hyperref[sec:math]{A \quad Mathematical Proofs of Theorem 1 and Theorem 2 
    \dotfill \pageref{sec:math}}\\[0.5em]
\hyperref[sec:app_val]{B \quad Details and Complete Results of Validation Experiment A.1 
    \dotfill \pageref{sec:app_val}}\\[0.5em]
\hyperref[sec:pseudocode]{C \quad Pseudocode and Implementation Details of Frequency Representation Learning 
    \dotfill \pageref{sec:pseudocode}}\\[0.5em]
\hyperref[sec:theory_frl]{D \quad Normalized-Frequency Error Analysis of Frequency Representation Learning 
    \dotfill \pageref{sec:theory_frl}}\\[0.5em]
\hyperref[sec:complexity]{E \quad Computational and Training Complexity Analysis 
    \dotfill \pageref{sec:complexity}}\\[0.5em]
\hyperref[sec:VRAM]{F \quad VRAM Occupation Analysis 
    \dotfill \pageref{sec:VRAM}}\\[0.5em]
\hyperref[sec:main_results]{G \quad Complete Experimental Results 
    \dotfill \pageref{sec:main_results}}\\[0.5em]
\hyperref[sec:ablation]{H \quad Ablation Study and Parameter Sensitivity Analysis 
    \dotfill \pageref{sec:ablation}}\\[0.5em]
\hyperref[sec:failure_mode]{I \quad Failure Modes of Frequency Representation Learning 
    \dotfill \pageref{sec:failure_mode}}\\[0.5em]
\hyperref[sec:llm]{J \quad Detailed Clarification of Large Language Models Usage 
    \dotfill \pageref{sec:llm}}
\vspace{2em}

\newpage

\section{Mathematical Proofs of Theorem 1 and Theorem 2}\label{sec:math}

\subsection{Preliminary Definitions and Assumptions}

\begin{definition}[Domain and Sampling]\label{def:sampling}
Consider the unit domain $\Omega = [0,1]^d$ with periodic boundary conditions. For resolution $\rho$:
\begin{itemize}
    \item Grid points: $\{x_j\}_{j=1}^{N_\rho}$ with $N_\rho = \rho^d$
    \item Sampling operator: $S_\rho: C(\Omega) \to \mathbb{R}^{N_\rho}$, where $[S_\rho(u)]_j = u(x_j)$
\end{itemize}
\end{definition}

\begin{definition}[Interpolation Operators]\label{def:interpolation}
The interpolation operator $I_{\rho_1 \to \rho_2}: \mathbb{R}^{N_{\rho_1}} \to \mathbb{R}^{N_{\rho_2}}$ via bilinear/bicubic interpolation satisfies:
\begin{itemize}
    \item For $|k| \leq \min(\rho_1/2, \rho_2/2)$: $\widehat{I_{\rho_1 \to \rho_2}(v)}_k = \hat{v}_k$ (preserves low frequencies)
    \item For $|k| > \rho_1/2$: $\widehat{I_{\rho_1 \to \rho_2}(v)}_k = 0$ (cannot create new information)
\end{itemize}
\end{definition}

\begin{assumption}[Physical Field Properties]\label{assump:field}
The physical fields satisfy: $\sum_{k} |\hat{u}_k|^2 < \infty$ with non-negligible high-frequency content:
\begin{equation}
    \liminf_{\rho \to \infty} \frac{\sum_{\rho_0/2 < |k| \leq \rho/2} |\hat{u}_k|^2}{\sum_{|k| \leq \rho_0/2} |\hat{u}_k|^2} > c > 0
\end{equation}
This assumption is satisfied by typical physical fields with multi-scale structure, including turbulent flows and atmospheric dynamics.
\end{assumption}

\begin{assumption}[Convergent Training]\label{assump:training}
The network $G_\Theta: \mathbb{R}^{N_{\rho_0}} \to \mathbb{R}^{N_{\rho_0}}$ is trained to convergence via gradient descent on:
\begin{equation}
    L(\Theta) = \mathbb{E}_{u \sim \mathcal{D}} \|G_\Theta(S_{\rho_0}(u)) - S_{\rho_0}(\mathcal{F}[u])\|^2
\end{equation}
where $\mathcal{F}$ is the true evolution operator.
\end{assumption}

\subsection{Proof of Theorem 1 (Frequency Blindness)}

\begin{theorem}[Frequency Blindness]\label{thm:freq-blind}
A neural network trained at resolution $\rho_0$ cannot learn the correct mapping for frequencies above $\rho_0/2$:
\begin{equation}
    \hat{G}_{\Theta_{\rho_0}}(\omega) \approx \begin{cases}
        \hat{\mathcal{F}}(\omega) + \epsilon(\omega), & \omega \leq \rho_0/2 \\
        \text{uncorrelated with } \hat{\mathcal{F}}(\omega), & \omega > \rho_0/2
    \end{cases}
\end{equation}
\end{theorem}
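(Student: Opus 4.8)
The plan is to exploit the fact that the training objective in Assumption~\ref{assump:training} is, viewed as a function of the unknown dynamics, a measurable functional of $\hat{\mathcal F}$ restricted to the band $\{|\omega|\le\rho_0/2\}$ alone, so the converged parameters $\Theta_{\rho_0}$ — and hence the deployed operator — can reproduce $\hat{\mathcal F}$ below the training Nyquist frequency up to optimization/approximation error, but carry no information about it above. First I would record the spectral bookkeeping: by Definition~\ref{def:sampling} and the discrete Fourier transform on the $N_{\rho_0}=\rho_0^d$ grid, $S_{\rho_0}(u)$ is determined by the coefficients $\{\hat u_k:|k|\le\rho_0/2\}$, and since the low-resolution data here is produced by spectral low-pass filtering followed by strided sampling, aliasing is suppressed so that $S_{\rho_0}(u)=S_{\rho_0}(P_{\rho_0}u)$ for the band-limiting projection $P_{\rho_0}$; likewise each target depends only on $P_{\rho_0}\mathcal F P_{\rho_0}$. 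Hence
\begin{equation}
L(\Theta)=\mathbb{E}_{u\sim\mathcal D}\bigl\|G_\Theta(S_{\rho_0}(u))-S_{\rho_0}(\mathcal F[u])\bigr\|^2=\tilde L\bigl(G_\Theta|_{\mathbb{R}^{N_{\rho_0}}},\,P_{\rho_0}\mathcal F P_{\rho_0}\bigr),
\end{equation}
so $L$ does not see $\hat{\mathcal F}(\omega)$ for any $|\omega|>\rho_0/2$, and Assumption~\ref{assump:field} guarantees the discarded high-band part is non-trivial, making the conclusion non-vacuous.

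For the low-frequency case, Assumption~\ref{assump:training} drives $\tilde L$ to its approximation-theoretic minimum, so on the training grid $G_{\Theta_{\rho_0}}$ agrees with $P_{\rho_0}\mathcal F P_{\rho_0}$ up to a residual; diagonalizing in the Fourier basis — treating $\hat{\mathcal F}$ as the linearized/effective multiplier the theorem statement already presupposes, with nonlinear and discretization corrections folded into $\epsilon$ — yields $\hat G_{\Theta_{\rho_0}}(\omega)=\hat{\mathcal F}(\omega)+\epsilon(\omega)$ for $|\omega|\le\rho_0/2$, with $\|\epsilon\|$ bounded by the combined optimization and approximation gap.

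For the high-frequency case, deploying the same parameters at $\rho>\rho_0$ introduces new degrees of freedom, namely the action of $G^{(\rho)}_{\Theta_{\rho_0}}$ on modes $\rho_0/2<|k|\le\rho/2$ (consistently extended via $I_{\rho_0\to\rho}$, Definition~\ref{def:interpolation}). Writing $\Theta_{\rho_0}=A(\{S_{\rho_0}(u),S_{\rho_0}(\mathcal F[u])\}_{u\sim\mathcal D})$ for the (possibly stochastic) training map $A$, the factorization above shows the argument of $A$ is measurable with respect to $\hat{\mathcal F}|_{\{|\omega|\le\rho_0/2\}}$ and $\mathcal D$ only; therefore, conditioned on the data, the law of $\hat G^{(\rho)}_{\Theta_{\rho_0}}(\omega)$ on $\{|\omega|>\rho_0/2\}$ is independent of $\hat{\mathcal F}(\omega)$ there, so their correlation vanishes and the response in that band is a functional of architecture, initialization, and optimization noise alone — ``uncorrelated with $\hat{\mathcal F}(\omega)$'' as claimed. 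To forestall the objection that a smooth backbone might implicitly extrapolate the low band and so be accidentally correlated, I would append a genericity argument: the set of evolution operators sharing a fixed $\hat{\mathcal F}$ on $[0,\rho_0/2]$ is infinite-dimensional in the high band, so averaged over that set no fixed extrapolation rule achieves positive correlation.

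The main obstacle is making ``uncorrelated'' simultaneously precise and honest: for a concrete backbone (smooth-kernel CNN, fixed-mode-truncation FNO, and so on) the deployed response on $(\rho_0/2,\rho/2]$ is a specific non-random function of the low-band behavior, so the cleanest rigorous statement is the conditional-independence argument together with the genericity-over-$\mathcal F$ averaging above; pinning down the probability space (SGD randomness and/or an operator ensemble) and the exact sense of ``$\approx$'' — pointwise, in expectation, or as an averaged correlation — is what I would handle carefully, presenting the claim as a corollary of the information-flow factorization rather than as a pointwise identity.
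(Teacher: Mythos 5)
Your proposal is correct and rests on the same core observation as the paper's proof: both split the spectrum at the training Nyquist frequency $\rho_0/2$ and argue that, because the sampled inputs and targets only encode the action of $\mathcal{F}$ on the sub-Nyquist band, training can calibrate $\hat{G}_{\Theta_{\rho_0}}(\omega)$ to $\hat{\mathcal{F}}(\omega)+\epsilon(\omega)$ there while leaving the super-Nyquist response unconstrained. The difference is in how the high-band claim is justified: the paper writes the loss as a sum over modes $|k|\le\rho_0/2$ and concludes that the gradient of $L$ with respect to any putative high-frequency response component vanishes, so ``no learning mechanism'' ties it to $\hat{\mathcal{F}}$; you instead factor the training map through $P_{\rho_0}\mathcal{F}P_{\rho_0}$ and argue conditional independence of the deployed high-band response from $\hat{\mathcal{F}}$ above $\rho_0/2$, reinforced by a genericity average over operators sharing the same low-band restriction. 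Your information-flow formulation is arguably the cleaner way to make ``uncorrelated'' meaningful (the paper's gradient statement is awkward because the high-frequency response only exists after deployment through the interpolation operators of Definition~\ref{def:interpolation}), and your explicit acknowledgment that the probability space and the sense of ``$\approx$'' must be pinned down is more honest than the paper's phrasing; conversely, the paper's gradient-signal argument is more directly tied to Assumption~\ref{assump:training} (gradient-based training) and avoids your extra anti-aliasing assumption $S_{\rho_0}(u)=S_{\rho_0}(P_{\rho_0}u)$, which the paper does not make (its Step~2 explicitly allows aliasing, $k'=k \bmod \rho_0$) but which does not affect the conclusion at this level of rigor.
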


\begin{proof}
\textbf{Step 1: Training Objective in Frequency Domain.}
The training loss can be written as:
\begin{equation}
    L(\Theta) = \sum_{|k| \leq \rho_0/2} \mathbb{E}_u\left[|\hat{G}_\Theta(k) - \hat{\mathcal{F}}(k)|^2 |\hat{u}_k|^2\right]
\end{equation}

\textbf{Step 2: Absence of High-Frequency Gradient Signal.}
For any $k$ with $|k| > \rho_0/2$:
\begin{itemize}
    \item The sampled input $S_{\rho_0}(u)$ aliases this frequency to $k' = k \mod \rho_0$
    \item Since $L$ only depends on frequencies up to $\rho_0/2$, we have $\frac{\partial L}{\partial \hat{G}_\Theta(k)} = 0$ for all $|k| > \rho_0/2$
\end{itemize}

\textbf{Step 3: Learning Outcome.}
Under gradient-based optimization:
\begin{itemize}
    \item For $|k| \leq \rho_0/2$: The network learns $\hat{G}_\Theta(k) \to \hat{\mathcal{F}}(k)$ up to approximation error $\epsilon(k)$
    \item For $|k| > \rho_0/2$: There exists no learning mechanism to correlate $\hat{G}_\Theta(k)$ with $\hat{\mathcal{F}}(k)$
\end{itemize}

Therefore, the network output for high frequencies is effectively uncorrelated with the true operator.
\end{proof}

\subsection{Proof of Theorem 2 (High-Frequency Error Dominance)}

\begin{theorem}[High-Frequency Error Dominance]\label{thm:error-dom}
When deployed at resolution $\rho > \rho_0$, the Scale Anchoring error bound
\begin{equation}
    C = \lim_{\rho \to \infty} \|G_{\Theta_{\rho_0}}^{(\rho)} \circ S_\rho - S_\rho \circ \mathcal{F}\|_{\text{op}}
\end{equation}
is dominated by the network's inability to process frequency components in the range $[\rho_0/2, \rho/2]$.
\end{theorem}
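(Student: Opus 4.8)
The plan is to pass to the frequency domain, decompose the discrete error mode by mode via Parseval on the fine $\rho$-grid, and show that the band $[\rho_0/2,\rho/2]$ simultaneously (i) carries a lower bound on the error that is uniform in $\rho$ and (ii) dwarfs the fixed, small contribution of the band $[0,\rho_0/2]$. Adopting the same Fourier-multiplier picture of $G_{\Theta_{\rho_0}}^{(\rho)}$ used in the proof of Theorem~\ref{thm:freq-blind} (so $G_{\Theta_{\rho_0}}^{(\rho)}$ acts on the $k$-th mode through the learned response $\hat G_{\Theta_{\rho_0}}(k)$, periodized/aliased onto the fine grid), and treating $\mathcal F$ in the corresponding linearized sense so that $\widehat{\mathcal F[u]}(k)=\hat{\mathcal F}(k)\,\hat u_k$, I would write, for a test field $u$ normalized by $\|S_\rho u\|=1$,
\[
\bigl\|G_{\Theta_{\rho_0}}^{(\rho)}(S_\rho u)-S_\rho(\mathcal F[u])\bigr\|^2
=\sum_{|k|\le\rho_0/2}\bigl|\hat G_{\Theta_{\rho_0}}(k)-\hat{\mathcal F}(k)\bigr|^2|\hat u_k|^2
+\sum_{\rho_0/2<|k|\le\rho/2}\bigl|\hat G_{\Theta_{\rho_0}}(k)-\hat{\mathcal F}(k)\bigr|^2|\hat u_k|^2
=:\mathcal E_{\mathrm{low}}(u)+\mathcal E_{\mathrm{high}}(u,\rho).
\]

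For the low band, Theorem~\ref{thm:freq-blind} supplies $\hat G_{\Theta_{\rho_0}}(k)=\hat{\mathcal F}(k)+\epsilon(k)$ with $\epsilon$ the in-band approximation error, hence $\mathcal E_{\mathrm{low}}(u)\le\bigl(\sup_{|k|\le\rho_0/2}|\epsilon(k)|\bigr)^2=:\epsilon_0^2$, a constant that is \emph{independent of $\rho$} and made small by the convergent-training Assumption~\ref{assump:training}. For the high band, the point is that whether the coarse operator zeros out or periodizes its response above $\rho_0/2$, Theorem~\ref{thm:freq-blind} guarantees $\hat G_{\Theta_{\rho_0}}(k)$ is not systematically aligned with $\hat{\mathcal F}(k)$ there; taking expectation over the training randomness removes the cross term and yields $\mathbb E\,\mathcal E_{\mathrm{high}}(u,\rho)\ge\delta\sum_{\rho_0/2<|k|\le\rho/2}|\hat{\mathcal F}(k)|^2|\hat u_k|^2$ for some $\delta>0$ (with $\delta=1$ in the truncated-spectral case where $\hat G_{\Theta_{\rho_0}}(k)=0$ for $|k|>\rho_0/2$). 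I would then invoke Assumption~\ref{assump:field}: the ratio of high-band to low-band \emph{input} energy stays above $c>0$ as $\rho\to\infty$, and---under the mild physically motivated hypothesis that the one-step operator $\mathcal F$ does not act as a strong low-pass on the interval $\Delta t$ (true for convection--diffusion at moderate viscosity, exactly the flows the paper targets)---this transfers to a $\rho$-uniform lower bound $\sup_u\mathcal E_{\mathrm{high}}(u,\rho)\ge c'>0$, realized by a field $u$ whose spectral mass sits in the high band.

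Combining the two bands and taking the supremum over unit-norm inputs, the triangle inequality gives $\|G_{\Theta_{\rho_0}}^{(\rho)}\circ S_\rho-S_\rho\circ\mathcal F\|_{\mathrm{op}}^2\le\epsilon_0^2+\sup_u\mathcal E_{\mathrm{high}}(u,\rho)$, while the lower bound of the previous paragraph gives $\|G_{\Theta_{\rho_0}}^{(\rho)}\circ S_\rho-S_\rho\circ\mathcal F\|_{\mathrm{op}}^2\ge\sup_u\mathcal E_{\mathrm{high}}(u,\rho)\ge c'$. Since $\epsilon_0$ is fixed and small while $\sup_u\mathcal E_{\mathrm{high}}(u,\rho)$ is nondecreasing in $\rho$ and bounded below by $c'$, the limit $C$ exists in $[c',\infty]$ and, up to the $\rho$-independent additive $\epsilon_0^2$, equals $\lim_{\rho\to\infty}\sup_u\mathcal E_{\mathrm{high}}(u,\rho)$; that is, $C$ is controlled entirely by the $[\rho_0/2,\rho/2]$ band, which is the claim. (If one wants a finite $C$ one additionally uses $\sum_k|\hat u_k|^2<\infty$ from Assumption~\ref{assump:field} to see that $\mathcal E_{\mathrm{high}}$ saturates; otherwise the same band forces $C=+\infty$.)

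The step I expect to be the main obstacle is converting the ``uncorrelated'' conclusion of Theorem~\ref{thm:freq-blind}---naturally a statement in expectation over training randomness---into a genuine, deterministic lower bound inside an \emph{operator-norm} supremum, without the aliasing induced by $S_\rho$ on the fine grid accidentally cancelling the high-band error. This requires (a) exhibiting a concrete near-extremal input $u$ (in or near the support of $\mathcal D$) that simultaneously realizes the high-band energy of Assumption~\ref{assump:field} and a non-negligible response mismatch, and (b) careful bookkeeping---of the type underlying Definition~\ref{def:interpolation}---of which coarse mode each fine mode aliases to, so that periodization of $\hat G_{\Theta_{\rho_0}}$ cannot reproduce $\hat{\mathcal F}(k)$ for $|k|>\rho_0/2$. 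A secondary subtlety is pinning down the minimal hypothesis on $\mathcal F$ guaranteeing that the true evolved field retains energy above $\rho_0/2$; I would state this explicitly rather than leave it implicit.
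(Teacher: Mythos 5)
Your proposal is correct and follows essentially the same route as the paper: the identical low/mid frequency-band decomposition of the squared error, the low band controlled by the in-band approximation error $\epsilon(k)$ from Theorem~\ref{thm:freq-blind}, the mid band $(\rho_0/2,\rho/2]$ carrying the unprocessed content of $\hat{\mathcal F}(k)$, and Assumption~\ref{assump:field} supplying the $\rho$-uniform energy fraction that makes this band dominate in the limit. The one point where you diverge---deriving the mid-band lower bound from ``uncorrelated'' responses via an expectation over training randomness, which you flag as the main obstacle---is sidestepped in the paper deterministically: the deployed operator is taken to be $I_{\rho_0\to\rho}\circ G_{\Theta}\circ I_{\rho\to\rho_0}$, and the interpolation properties of Definition~\ref{def:interpolation} annihilate every mode above $\rho_0/2$, so the paper sits exactly in your ``truncated-spectral, $\delta=1$'' special case with $\hat G^{(\rho)}_{\Theta_{\rho_0}}(k)\approx 0$ there; your additional hypothesis that $\mathcal F$ is not a strong low-pass over $\Delta t$ is compressed in the paper to the bare assertion $\hat{\mathcal F}(k)\neq 0$ on the mid band, so your version is, if anything, the more explicit about what is actually needed.
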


\begin{proof}
\textbf{Step 1: Network Deployment at Higher Resolution.}
At resolution $\rho$, the deployed network is:
\begin{equation}
    G_{\Theta_{\rho_0}}^{(\rho)} = I_{\rho_0 \to \rho} \circ G_{\Theta} \circ I_{\rho \to \rho_0}
\end{equation}

\textbf{Step 2: Error Decomposition.}
The squared error decomposes as:
\begin{equation}
    E^2 = \sum_{|k| \leq \rho/2} |\hat{G}_{\Theta_{\rho_0}}^{(\rho)}(k) - \hat{\mathcal{F}}(k)|^2 |\hat{u}_k|^2
\end{equation}

Split into frequency bands:
\begin{align}
    E_L^2 &= \sum_{|k| \leq \rho_0/2} |\epsilon(k)|^2 |\hat{u}_k|^2 \\
    E_M^2 &= \sum_{\rho_0/2 < |k| \leq \rho/2} |\hat{G}_{\Theta_{\rho_0}}^{(\rho)}(k) - \hat{\mathcal{F}}(k)|^2 |\hat{u}_k|^2
\end{align}

\textbf{Step 3: Mid-Frequency Error Analysis.}
By Theorem~\ref{thm:freq-blind} and the interpolation properties:
\begin{itemize}
    \item $\hat{G}_{\Theta_{\rho_0}}^{(\rho)}(k) \approx 0$ for $\rho_0/2 < |k| \leq \rho/2$ (no learned representation)
    \item $\hat{\mathcal{F}}(k) \neq 0$ (true operator acts on these frequencies)
\end{itemize}

Therefore:
\begin{equation}
    E_M^2 \approx \sum_{\rho_0/2 < |k| \leq \rho/2} |\hat{\mathcal{F}}(k)|^2 |\hat{u}_k|^2
\end{equation}

\textbf{Step 4: Error Dominance.}
Under Assumption~\ref{assump:field}, where $c > 0$ is the constant from the assumption:
\begin{equation}
    \lim_{\rho \to \infty} \frac{E_M^2}{E_L^2 + E_M^2} = \lim_{\rho \to \infty} \frac{E_M^2/E_L^2}{1 + E_M^2/E_L^2} = \frac{c}{1+c} > 0
\end{equation}

Thus the error bound $C$ remains positive, dominated by unlearned frequencies.
\end{proof}

\subsection{Implications for Scale Anchoring}

\begin{corollary}[$\text{RMSE}_{\text{Ratio}}$ Behavior]\label{cor:rmse}
\begin{equation}
    \text{$\text{RMSE}_{\text{Ratio}}$} = \frac{\text{RMSE}_\rho}{\text{RMSE}_{\rho_0}} \to \text{constant} \approx 1 \quad \text{as } \rho \to \infty
\end{equation}
\end{corollary}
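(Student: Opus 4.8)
The plan is to read the limiting value of the ratio off the error decomposition already established in the proof of Theorem~\ref{thm:error-dom}, and then to show the residual mid-frequency term converges. In the notation (and normalization) of that proof, when the network trained at $\rho_0$ is deployed at resolution $\rho$ the mean-square error splits exactly as $\text{RMSE}_\rho^2 = E_L^2 + E_M^2(\rho)$, where $E_L^2 = \sum_{|k|\le\rho_0/2}|\epsilon(k)|^2|\hat u_k|^2$ is the residual in-band approximation error of the converged operator and is independent of $\rho$ (the band $|k|\le\rho_0/2$ is fixed and, by Definition~\ref{def:interpolation}, the restriction and prolongation operators act as the identity on it), while $E_M^2(\rho) = \sum_{\rho_0/2<|k|\le\rho/2}|\hat{\mathcal F}(k)|^2|\hat u_k|^2$ is the energy of the mid-frequency bands the deployed network cannot represent (its frequency response is identically zero there by the truncation property of Definition~\ref{def:interpolation}). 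At the training resolution the interpolation operators are trivial, so $\text{RMSE}_{\rho_0}^2 = E_L^2$, and hence $\text{RMSE}_{\text{Ratio}}^2 = 1 + E_M^2(\rho)/E_L^2$ identically.

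Next I would establish that $E_M^2(\rho)$ converges as $\rho\to\infty$. These partial sums are nondecreasing in $\rho$; moreover, writing $\|\hat{\mathcal F}\|_\infty := \sup_k|\hat{\mathcal F}(k)| < \infty$ for the symbol of the (well-posed) evolution operator and invoking $\sum_k|\hat u_k|^2 < \infty$ from Assumption~\ref{assump:field}, they are bounded above by $\|\hat{\mathcal F}\|_\infty^2\sum_{|k|>\rho_0/2}|\hat u_k|^2 < \infty$. Monotone convergence then gives $E_M^2(\rho)\to E_\infty^2 := \sum_{|k|>\rho_0/2}|\hat{\mathcal F}(k)|^2|\hat u_k|^2 \in [0,\infty)$, a finite constant independent of $\rho$ (strictly positive when $\hat{\mathcal F}$ does not vanish on the tail, as for advection-diffusion symbols). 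Consequently $\text{RMSE}_\rho\uparrow\sqrt{E_L^2+E_\infty^2}$ and
\[
\text{RMSE}_{\text{Ratio}}\;\longrightarrow\;\sqrt{1+E_\infty^2/E_L^2}\;=:\;R_\infty,
\]
a fixed constant, approached monotonically from below with $1\le\text{RMSE}_{\text{Ratio}}\le R_\infty$ for every $\rho$. This is precisely the assertion that the error is anchored at the training resolution.

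I expect the "$\approx 1$" half of the statement, not the convergence, to be the genuine obstacle: convergence to a constant is essentially bookkeeping on top of Theorem~\ref{thm:error-dom}, but $R_\infty$ is close to $1$ only when the unrepresentable tail energy $E_\infty^2$ is small relative to the baseline in-band error $E_L^2$, equivalently when $\sum_{|k|>\rho_0/2}|\hat u_k|^2$ is a small fraction of the energy that dominates $E_L^2$. I would therefore present "$\approx 1$" as a regime condition (a decaying energy spectrum, as for moderate-Reynolds-number flows and reanalysis weather data) rather than as something provable in general, and cite $\text{RMSE}_{\text{Ratio}}\in[1,1.4]$ in Table~\ref{tab:pilot} as its empirical instantiation; for spectrally heavy-tailed systems $R_\infty$ is larger but still a $\rho$-independent constant. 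A secondary caveat is that the clean split $\text{RMSE}_\rho^2 = E_L^2 + E_M^2(\rho)$ inherits the diagonal-in-frequency (translation-equivariant) model of $\hat G_{\Theta_{\rho_0}}$ used in Theorems~\ref{thm:freq-blind}--\ref{thm:error-dom}; relaxing it introduces cross terms between the in-band and mid-band residuals that are themselves controlled by the same tail quantity, so the ratio stays bounded and the qualitative conclusion is unchanged.
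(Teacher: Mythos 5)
Your proposal takes essentially the same route as the paper: the same $E_L^2/E_M^2$ band decomposition carried over from the proof of Theorem~\ref{thm:error-dom}, the identity $\text{RMSE}_{\text{Ratio}} = \sqrt{1 + E_M^2/E_L^2}$, and the observation that the network's frequency processing capability stays fixed at $\rho_0/2$ regardless of deployment resolution. Your additions---the monotone-convergence argument (under the extra, reasonable hypothesis $\sup_k|\hat{\mathcal{F}}(k)|<\infty$) showing the limit of $E_M^2(\rho)$ actually exists, and the honest reading of ``$\approx 1$'' as a regime condition on the spectral tail rather than a provable fact---tighten steps the paper leaves implicit, but the core argument is the same.
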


\begin{proof}
Both RMSE values are dominated by the network's inability to process frequencies above $\rho_0/2$:
\begin{align}
    \text{RMSE}_{\rho_0}^2 &\approx \sum_{|k| \leq \rho_0/2} |\epsilon(k)|^2 |\hat{u}_k|^2 \\
    \text{RMSE}_\rho^2 &\approx \sum_{|k| \leq \rho_0/2} |\epsilon(k)|^2 |\hat{u}_k|^2 + \sum_{\rho_0/2 < |k| \leq \rho/2} |\hat{\mathcal{F}}(k)|^2 |\hat{u}_k|^2
\end{align}

Since the network's frequency processing capability remains fixed at $\rho_0/2$ regardless of deployment resolution:
\begin{equation}
    \text{$\text{RMSE}_{\text{Ratio}}$} = \sqrt{\frac{\text{RMSE}_\rho^2}{\text{RMSE}_{\rho_0}^2}} = \sqrt{1 + \frac{E_M^2}{E_L^2}} \to \text{constant}
\end{equation}

This contrasts fundamentally with numerical methods where:
\begin{equation}
    \text{$\text{RMSE}_{\text{Ratio}}$}^{\text{numerical}} = \left(\frac{\Delta x_\rho}{\Delta x_{\rho_0}}\right)^p = \left(\frac{\rho_0}{\rho}\right)^p \to 0
\end{equation}

This persistent error ratio characterizes Scale Anchoring.
\end{proof}

\section{Details and Complete Results of Validation Experiment A.1}\label{sec:app_val}

\textbf{Data Generation Method.} We employ pseudo-spectral methods to solve the 2D convection-diffusion equation in the frequency domain for generating high-fidelity training data. The governing equation is given by $\frac{\partial u}{\partial t} + \mathbf{v} \cdot \nabla u = \nu \nabla^2 u + f$, where $u(x,y,t)$ represents the physical field, $\mathbf{v}=(1.0, 0.5)$ is the convection velocity, $\nu=0.01$ is the diffusion coefficient, and $f$ is a forcing term. The solution process involves transforming to the frequency domain via 2D FFT, where the convection term becomes $\mathcal{F}[\mathbf{v} \cdot \nabla u] = i\mathbf{k} \cdot \mathbf{v} \hat{u}$ and the diffusion term becomes $\mathcal{F}[\nu \nabla^2 u] = -\nu |\mathbf{k}|^2 \hat{u}$. We use fourth-order Runge-Kutta time integration with timestep $\Delta t = 0.001$ and periodic boundary conditions. The dataset consists of 1000 independent trajectories, each containing 100 timesteps, with initial conditions generated from random Gaussian fields superimposed with low-frequency sinusoidal components to ensure rich dynamical behavior.

\textbf{Base Model Implementation.} We implement eight fundamental architectures commonly used in spatiotemporal forecasting. The GNN employs a message-passing framework with 4 graph convolution layers and hidden dimension 128. The Transformer uses 4 self-attention layers with 8 attention heads and model dimension 256. The CNN adopts a U-Net structure with 4 encoder-decoder levels, kernel size 3, and channel dimensions [64, 128, 256, 512]. The Diffusion model implements a denoising score-matching framework with 100 diffusion steps and a U-Net backbone. The Neural Operator learns in the frequency domain with 4 Fourier layers and 12 retained modes per dimension. The Neural ODE parameterizes the dynamics using a 3-layer MLP with hidden dimension 256, integrated using the dopri5 solver. The Mamba model uses selective state-space layers with state dimension 16 and expansion factor 2. The standard NN consists of 5 fully-connected layers with 512 hidden units each and ReLU activations.

\textbf{Training Method.} All experiments were conducted on  a server equipped with four NVIDIA A100 (80GB) GPUs, using PyTorch 2.1 and CUDA 12.0. We use the AdamW optimizer (lr=$1 \times 10^{-3}$, weight\_decay=$1 \times 10^{-5}$) with gradient clipping (max\_norm=1.0), batch size 8, and sequence length 10. All models are trained for a maximum of 100 epochs with early stopping based on validation performance to prevent overfitting. To ensure reproducibility, we set the global random seed to 42.

\textbf{Experimental Procedure.} Following the training phase at specified resolutions (64×64 shown in main text, with additional resolutions 32×32, 128×128, and 256×256 in this appendix), we conduct systematic frequency response analysis. We construct sinusoidal test signals $u(x, y, t=0) = A \cdot \sin(2\pi f \cdot x)$ with unit amplitude $A=1.0$ and frequencies $f$ ranging from 0 to 50 Hz, specifically sampling at intervals to capture the transition around the Nyquist frequency. Each test signal is propagated through the trained models for one timestep, and we measure the output amplitude to compute the frequency response function $H(f) = A_{output}/A_{input}$. The Bandwidth is determined as the frequency where $H(f)$ drops to 0.707 of its low-frequency value. To quantify the sharpness of the frequency cutoff, we calculate the Anchoring Ratio as $H(f_{Nyquist}-2)/H(f_{Nyquist}+2)$, where $f_{Nyquist}$ corresponds to half the training resolution. Each frequency point is tested 10 times with different random initializations to ensure statistical significance. The results for models trained at 64×64 resolution are presented in Figure~\ref{fig:val_exp_A.1}, demonstrating the universal cliff-like drop in frequency response near the Nyquist frequency across all architectures. Complete results for models trained at 32×32, 64×64, 128×128, and 256×256 resolutions are provided in Tables~\ref{tab:val_exp_A.1_1},~\ref{tab:val_exp_A.1_2},~\ref{tab:val_exp_A.1_3}, and~\ref{tab:val_exp_A.1_4} respectively, showing consistent scale anchoring behavior with Anchoring Ratios ranging from 1.40 to 63.1 across different architectures and resolutions, thereby validating Theorem~\ref{thm:frequency-blindness}.

\begin{table}[htbp]
\centering
\caption{Frequency response analysis on different architectures, models trained at 32×32.}
\begin{tabular}{@{}lcccccc@{}}
\toprule
Model & H(f=8) & H(f=14) & H(f=18) & Bandwidth & H(Nyquist) & Anchoring \\
& & & & (Hz) & (f=16) & Ratio \\
\midrule
GNN           & 0.935 & 0.530 & 0.182 & 15.5 & 0.432 & 2.91 \\
Transformer   & 0.928 & 0.515 & 0.178 & 15.2 & 0.425 & 2.89 \\
CNN           & 0.940 & 0.545 & 0.095 & 15.5 & 0.440 & 5.74 \\
Diffusion     & 0.795 & 0.835 & 0.298 & 15.8 & 0.720 & 2.80 \\
Neural Operator & 0.915 & 0.645 & 0.205 & 15.7 & 0.430 & 3.15 \\
Neural ODE   & 0.865 & 0.490 & 0.108 & 15.0 & 0.410 & 4.54 \\
Mamba         & 0.920 & 0.505 & 0.008 & 15.3 & 0.420 & 63.1 \\
SimpleNN      & 0.770 & 0.385 & 0.052 & 12.5 & 0.375 & 7.40 \\
\bottomrule
\end{tabular}
\label{tab:val_exp_A.1_1}
\end{table}

\begin{table}[htbp]
\centering
\caption{Frequency response analysis on different architectures, models trained at 64×64.}
\begin{tabular}{@{}lcccccc@{}}
\toprule
Model & H(f=16) & H(f=30) & H(f=34) & Bandwidth & H(Nyquist) & Anchoring \\
& & & & (Hz) & (f=32) & Ratio \\
\midrule
GNN           & 0.865 & 0.520 & 0.356 & 31.5 & 0.438 & 1.46 \\
Transformer   & 0.850 & 0.510 & 0.346 & 31.0 & 0.428 & 1.47 \\
CNN           & 0.880 & 0.540 & 0.185 & 31.5 & 0.362 & 2.92 \\
Diffusion     & 0.720 & 0.820 & 0.584 & 31.5 & 0.702 & 1.40 \\
Neural Operator & 0.859 & 0.635 & 0.401 & 31.5 & 0.518 & 1.58 \\
Neural ODE   & 0.820 & 0.480 & 0.210 & 30.5 & 0.345 & 2.29 \\
Mamba         & 0.840 & 0.500 & 0.016 & 31.0 & 0.258 & 31.3 \\
SimpleNN      & 0.750 & 0.380 & 0.101 & 26.0 & 0.240 & 3.76 \\
\bottomrule
\end{tabular}
\label{tab:val_exp_A.1_2}
\end{table}

\begin{table}[htbp]
\centering
\caption{Frequency response analysis on different architectures, models trained at 128×128.}
\begin{tabular}{@{}lcccccc@{}}
\toprule
Model & H(f=32) & H(f=60) & H(f=68) & Bandwidth & H(Nyquist) & Anchoring \\
& & & & (Hz) & (f=64) & Ratio \\
\midrule
GNN           & 0.845 & 0.505 & 0.340 & 63.0 & 0.422 & 1.49 \\
Transformer   & 0.830 & 0.495 & 0.330 & 62.5 & 0.412 & 1.50 \\
CNN           & 0.860 & 0.525 & 0.175 & 63.0 & 0.350 & 3.00 \\
Diffusion     & 0.700 & 0.810 & 0.570 & 63.5 & 0.690 & 1.42 \\
Neural Operator & 0.839 & 0.625 & 0.390 & 63.2 & 0.507 & 1.60 \\
Neural ODE   & 0.800 & 0.470 & 0.200 & 62.0 & 0.335 & 2.35 \\
Mamba         & 0.820 & 0.490 & 0.015 & 62.5 & 0.252 & 32.7 \\
SimpleNN      & 0.730 & 0.370 & 0.095 & 52.0 & 0.232 & 3.89 \\
\bottomrule
\end{tabular}
\label{tab:val_exp_A.1_3}
\end{table}

\begin{table}[htbp]
\centering
\caption{Frequency response analysis on different architectures, models trained at 256×256.}
\begin{tabular}{@{}lcccccc@{}}
\toprule
Model & H(f=64) & H(f=120) & H(f=136) & Bandwidth & H(Nyquist) & Anchoring \\
& & & & (Hz) & (f=128) & Ratio \\
\midrule
GNN           & 0.825 & 0.490 & 0.325 & 126.5 & 0.407 & 1.51 \\
Transformer   & 0.810 & 0.480 & 0.315 & 125.5 & 0.397 & 1.52 \\
CNN           & 0.840 & 0.510 & 0.165 & 126.5 & 0.337 & 3.09 \\
Diffusion     & 0.680 & 0.800 & 0.555 & 127.0 & 0.677 & 1.44 \\
Neural Operator & 0.819 & 0.615 & 0.380 & 126.8 & 0.497 & 1.62 \\
Neural ODE   & 0.780 & 0.460 & 0.190 & 124.5 & 0.325 & 2.42 \\
Mamba         & 0.800 & 0.480 & 0.014 & 125.5 & 0.246 & 34.3 \\
SimpleNN      & 0.710 & 0.360 & 0.090 & 104.0 & 0.225 & 4.00 \\
\bottomrule
\end{tabular}
\label{tab:val_exp_A.1_4}
\end{table}

\section{Pseudocode and Implementation Details of Frequency Representation Learning}\label{sec:pseudocode}

\subsection{Pseudocode Frequency Representation Learning}

\begin{algorithm}
\caption{Frequency Representation Learning for ZS-SR STF}\label{alg:frl-corrected}
\begin{algorithmic}[1]
\Require Time series data $\{\mathbf{u}(t)\}$ at resolution $\rho_0$, forecast model $\mathcal{M}_\Theta$
\Ensure Model $\mathcal{M}_\Theta$ capable of forecasting at arbitrary resolutions
\State
\State \textbf{// Phase 1: Multi-Resolution Training Data Generation}
\State $\mathcal{D} \gets \{\}$ \Comment{Multi-scale training set}
\For{$j = 0$ to $J-1$}
    \State $\rho_j \gets \rho_0 / 2^j$ \Comment{Resolution hierarchy}
    \State $\{\mathbf{u}^{(\rho_j)}(t)\} \gets \text{Downsample}(\{\mathbf{u}(t)\}, 2^j)$
    \State $\mathcal{D}_{\rho_j} \gets \{(\mathbf{u}^{(\rho_j)}(t), \mathbf{u}^{(\rho_j)}(t+\Delta t))\}$ \Comment{Time evolution pairs}
    \State $\mathcal{D} \gets \mathcal{D} \cup \mathcal{D}_{\rho_j}$
\EndFor
\State
\State \textbf{// Phase 2: Frequency-Aware Model Architecture}
\State Initialize spatiotemporal predictor $\mathcal{M}_\Theta: \mathbf{u}(t) \rightarrow \mathbf{u}(t+\Delta t)$
\State
\State \textbf{// Phase 3: Scale-Decoupled Training}
\For{epoch $= 1$ to $N_{epochs}$}
    \For{each resolution $\rho \in \{\rho_0/2^{J-1}, ..., \rho_0/2, \rho_0\}$}
        \For{$(\mathbf{u}^{(\rho)}(t), \mathbf{u}^{(\rho)}(t+\Delta t)) \in \mathcal{D}_\rho$}
            \State
            \State \textbf{// Frequency-Normalized Encoding}
            \State $k_{Nyq}(\rho) \gets \rho/2$ \Comment{Resolution-specific Nyquist}
            \For{each spatial position $x$}
                \State $\mathbf{PE}(x, \rho) \gets \{\sin(2\pi k \cdot x / k_{Nyq}(\rho))\}_{k}$ \Comment{Normalized by Nyquist}
            \EndFor
            \State
            \State \textbf{// Forward Prediction with Frequency Awareness}
            \State $\mathbf{u}_{enc} \gets [\mathbf{u}^{(\rho)}(t), \mathbf{PE}(\cdot, \rho)]$ \Comment{Concatenate features and PE}
            \State $\hat{\mathbf{u}}^{(\rho)}(t+\Delta t) \gets \mathcal{M}_\Theta(\mathbf{u}_{enc})$
            \State
            \State \textbf{// Frequency-Decomposed Loss}
            \State $\hat{\mathbf{U}} \gets \text{FFT}(\hat{\mathbf{u}}^{(\rho)}(t+\Delta t))$ \Comment{Predicted spectrum}
            \State $\mathbf{U} \gets \text{FFT}(\mathbf{u}^{(\rho)}(t+\Delta t))$ \Comment{Target spectrum}
            \State
            \State \textbf{// Multi-Scale Loss Components}
            \State $\mathcal{L}_{space} \gets \|\hat{\mathbf{u}}^{(\rho)}(t+\Delta t) - \mathbf{u}^{(\rho)}(t+\Delta t)\|^2$
            \State $\mathcal{L}_{freq} \gets \sum_k w_k(\rho) \|\hat{\mathbf{U}}_k - \mathbf{U}_k\|^2$ \Comment{Frequency-weighted}
            \State $\mathcal{L}_{phys} \gets \text{PhysicsConstraints}(\hat{\mathbf{u}}^{(\rho)}(t+\Delta t))$
            \State
            \State $\mathcal{L} \gets \mathcal{L}_{space} + \lambda \mathcal{L}_{freq} + \mu \mathcal{L}_{phys}$
            \State $\Theta \gets \Theta - \eta \nabla_\Theta \mathcal{L}$
        \EndFor
    \EndFor
\EndFor
\State
\State \textbf{// Phase 4: Zero-Shot Inference at Arbitrary Resolution}
\Procedure{Forecast}{$\mathbf{u}^{(\rho_{new})}(t_0), T, \rho_{new}$}
    \State \Comment{$\rho_{new}$ can be ANY resolution, including unseen high resolutions}
    \State $\mathbf{u} \gets \mathbf{u}^{(\rho_{new})}(t_0)$
    \For{$t = t_0$ to $t_0 + T$}
        \State $\mathbf{PE} \gets \{\sin(2\pi k \cdot x / k_{Nyq}(\rho_{new}))\}_{k}$ \Comment{Adapt PE to new resolution}
        \State $\mathbf{u} \gets \mathcal{M}_\Theta([\mathbf{u}, \mathbf{PE}])$ \Comment{Predict next timestep}
    \EndFor
    \State \Return $\mathbf{u}$
\EndProcedure
\State
\State \Return $\mathcal{M}_\Theta$
\end{algorithmic}
\end{algorithm}

Algorithm~\ref{alg:frl-corrected} presents the complete FRL framework for addressing Scale Anchoring in ZS-SR STF. The algorithm consists of four main phases that progressively build the capability to perform resolution-invariant predictions.

In \textbf{Phase 1} (lines 3-8), we construct multi-resolution training data pairs through recursive downsampling. Starting from the original resolution $\rho_0$, we create a hierarchy of resolutions $\rho_j = \rho_0/2^j$ for $j \in \{0, ..., J-1\}$, as described in Step 1 and Equation~\eqref{eq:multi-res-data}. Each downsampled dataset $\mathcal{D}_{\rho_j}$ contains temporal evolution pairs $(u^{(\rho_j)}(t), u^{(\rho_j)}(t + \Delta t))$ that capture the physical dynamics at different scales. This multi-scale construction is crucial because it enables the network to learn the conditional distribution $P(u_{high}|u_{low})$, where the difference $u^{(\rho_0)} - \mathcal{U}[u^{(\rho_0/2)}]$ naturally isolates frequency components in the band $[f_{Nyq}(\rho_0/2), f_{Nyq}(\rho_0)]$.

\textbf{Phase 2} (line 11) initializes the spatiotemporal predictor $M_\Theta$ that will learn resolution-invariant mappings. The architecture can be any standard neural network, as our method is architecture-agnostic.

The core of our approach lies in \textbf{Phase 3} (lines 14-37), which implements scale-decoupled training. For each resolution $\rho$ in our multi-scale dataset, we first compute normalized frequency encodings using Equation~\eqref{eq:freq-encoding}, where $PE(x, \rho) = \{\sin(2\pi k \cdot x / k_{Nyq}(\rho))\}_k$ ensures that the same physical frequency receives identical representations across different resolutions. This normalization by the resolution-specific Nyquist frequency $k_{Nyq}(\rho) = \rho/2$ is essential for achieving the resolution invariance property shown in Step 2. The model then performs forward prediction with these frequency-aware features concatenated to the input (line 24).

The training objective combines three loss components as defined in Equation~\eqref{eq:frl-loss}. The spatial reconstruction loss $\mathcal{L}_{space}$ ensures overall accuracy, while the frequency-weighted loss $\mathcal{L}_{freq} = \sum_k w_k(\rho)\|\hat{U}_k - U_k\|^2$ enforces frequency consistency in the spectral domain. The key innovation is that this frequency loss preserves low-frequency components below $k_{Nyq}(\rho_0/2)$ while teaching the network to conditionally generate higher frequencies based on low-frequency structure. Additional physics constraints $\mathcal{L}_{phys}$ can be incorporated to maintain physical consistency.

Finally, \textbf{Phase 4} (lines 40-46) demonstrates zero-shot inference at arbitrary resolutions. Given an initial condition at any resolution $\rho_{new}$ (including unseen high resolutions), the trained model can be recursively applied: $u^{(2^n\rho_0)} = F_\Theta^{(n)} \circ \cdots \circ F_\Theta^{(1)}(u^{(\rho_0)})$. At each application, the position encodings are adapted to the new resolution (line 44), enabling the model to correctly process frequencies beyond the training Nyquist limit and effectively decouple from Scale Anchoring.

\subsection{Implementation Details and Hyperparameters of Frequency Representation Learning}

\begin{table}[htbp]
\small
\setlength{\tabcolsep}{6pt}
\begin{tabularx}{\linewidth}{@{}lY@{}}
\toprule
\textbf{Component} & \textbf{Setting} \\
\midrule
Resolution levels & $J=3$ levels $\{\rho_0,\rho_1,\rho_2\}$ with $\rho_{j+1}=\rho_j/2$. \\
Multi-resolution mixing & Per-batch sampling over $\{\rho_0,\rho_1\}$ with $p(\rho_0)=p(\rho_1)=0.5$; per-level loss weights $w_{\rho_0}=w_{\rho_1}=1$. \\
Downsampling (anti-alias) & FFT low-pass (frequency center-crop) + IFFT. \\
Sinusoidal PE (Nyquist-normalized) & Axes-wise $\sin/\cos$ scaled by the per-resolution Nyquist; number of harmonics $n_{\text{freq}}=8$. \\
Frequency-domain loss & Amplitude-space MSE with radial weight exponent $\alpha=1.0$; global weight $\lambda=0.1$ with linear warmup during first 5 epochs. \\
Evaluation protocol & Equal physical-time free-rollout when reporting RMSE and RMSE Ratio (e.g., horizon $\approx$ 10 steps at $\rho_0$). \\
\bottomrule
\end{tabularx}
\caption{Hyperparameters of FRL.}
\label{tab:frl-hparams}
\end{table}

\section{Normalized-Frequency Error Analysis of Frequency Representation Learning}\label{sec:theory_frl}

This appendix provides a simple frequency-domain argument that explains why FRL can reduce high-resolution error without guaranteeing strict convergence in the sense of numerical analysis. The goal is not to prove an order convergence theorem, but to make explicit how normalized-frequency learning allows models to mitigate the high-frequency component of Scale Anchoring.

\subsection{Setup and Equivalent Linear Response Approximation}

For clarity, we consider a one-dimensional spatial domain; the extension to higher dimensions is analogous. Let $u(x)$ be a continuous field with Fourier transform $\widehat{u}(k)$, where $k$ denotes the spatial wavenumber. For a grid with spacing $\Delta x$, the Nyquist wavenumber is
\[
k_N(\Delta x) = \frac{\pi}{\Delta x}.
\]

We assume that, after training, the model $F_\Theta$ can be approximated in the frequency domain by an \emph{equivalent linear response}:
\begin{equation}
\widehat{F_\Theta(u)}(k;\Delta x)
\;\approx\;
H_\Theta(\xi)\,\widehat{u}(k),
\qquad
\xi = \frac{|k|}{k_N(\Delta x)} \in [0,1],
\label{eq:eq-linear-response}
\end{equation}
where $\xi$ is the normalized wavenumber and $H_\Theta(\xi)$ is a complex-valued gain function that is \emph{approximately cross-scale consistent} up to a small deviation $\varepsilon$, in the sense that its dependence on $\Delta x$ is weak once expressed in normalized coordinates.

This approximation is standard in spectral analysis of linear and weakly nonlinear systems: $H_\Theta$ captures how the model amplifies or attenuates different frequency components of the input.

\subsection{Normalized Spectrum and Error Decomposition}

At a given test resolution with grid spacing $\Delta x'$, let $S_u(k)$ denote the power spectral density (PSD) of $u$. We define the normalized PSD with respect to the Nyquist wavenumber $k_N(\Delta x')$ by the change of variables
\[
k = k_N(\Delta x') \,\xi, \qquad dk = k_N(\Delta x')\,d\xi,
\]
and set
\begin{equation}
S_u^{(\Delta x')}(\xi)
=
k_N(\Delta x') \, S_u\big(k_N(\Delta x')\,\xi\big),
\qquad
\xi \in [0,1].
\label{eq:norm-psd}
\end{equation}

Using the equivalent linear response approximation~\eqref{eq:eq-linear-response}, the dominant term of the mean-squared error (MSE) in the frequency domain at resolution $\Delta x'$ can be written as
\begin{equation}
\mathrm{MSE}(\Delta x')
\;\approx\;
\int_{0}^{1} \bigl\lvert H_\Theta(\xi) - 1 \bigr\rvert^2
\, S_u^{(\Delta x')}(\xi)\, d\xi
\;+\; \mathcal{R}_{\text{aleatoric}} \;+\; O(\varepsilon),
\label{eq:mse-decomp}
\end{equation}
where:

\begin{itemize}
    \item the first term is the \emph{calibration error} that can, in principle, be reduced through learning;
    \item $\mathcal{R}_{\text{aleatoric}}$ is the conditional variance of the high-resolution field given a fixed low-resolution observation, capturing the irreducible uncertainty arising from the non-uniqueness of the coarse-to-fine mapping;
    \item $O(\varepsilon)$ collects higher-order terms due to the approximate cross-scale consistency of $H_\Theta$.
\end{itemize}

Thus, even with perfect calibration ($H_\Theta \approx 1$), the error cannot be driven below $\mathcal{R}_{\text{aleatoric}}$, reflecting the inherent ill-posedness of zero-shot super-resolution.

\subsection{FRL and Out-of-Band Frequencies}

We now compare training on a coarse grid with spacing $\Delta x_{\text{tr}}$ and testing on a finer grid with spacing $\Delta x' < \Delta x_{\text{tr}}$. In this case,
\[
k_N(\Delta x_{\text{tr}}) < k_N(\Delta x'),
\]
and we define the ratio
\begin{equation}
\rho
=
\frac{k_N(\Delta x_{\text{tr}})}{k_N(\Delta x')}
\in (0,1).
\label{eq:rho-def}
\end{equation}

All absolute frequencies $k$ that lie above the training Nyquist but below the test Nyquist satisfy
\[
k \in \bigl(k_N(\Delta x_{\text{tr}}),\, k_N(\Delta x')\bigr],
\]
and correspond to normalized coordinates on the test grid
\begin{equation}
\xi'
=
\frac{|k|}{k_N(\Delta x')}
\in (\rho,1] \subset (0,1].
\label{eq:xi-prime}
\end{equation}
Therefore, these \emph{out-of-training-band} absolute frequencies fall back into the normalized domain $[0,1]$ at test time and can, in principle, be calibrated by the learned gain function $H_\Theta(\xi)$ in the normalized frequency domain.

This is precisely what FRL attempts to exploit: by learning a cross-scale-consistent $H_\Theta$ in normalized coordinates, FRL makes ``absolute frequencies outside the training band'' become ``learnable points within the normalized band'' on finer grids.

\subsection{Relative Improvement over an Anchored Baseline}

To quantify the potential improvement, consider the fraction of spectral energy at test resolution that lies outside the training Nyquist:
\begin{equation}
f_{\text{OOB}}(\rho)
=
\frac{\displaystyle\int_{\rho}^{1} S_u^{(\Delta x')}(\xi)\, d\xi}
     {\displaystyle\int_{0}^{1} S_u^{(\Delta x')}(\xi)\, d\xi}.
\label{eq:f-oob}
\end{equation}
By construction, $f_{\text{OOB}}(\rho) \in [0,1]$; larger values mean that a larger portion of the test-spectrum energy resides above the training Nyquist frequency.

Assume that, on the out-of-band interval $(\rho,1]$, FRL learns a gain function that remains close to identity:
\begin{equation}
\sup_{\xi \in (\rho,1]} \bigl\lvert H_\Theta(\xi) - 1 \bigr\rvert
\;\le\; \delta,
\label{eq:delta-bound}
\end{equation}
for some small $\delta > 0$. In contrast, an ``anchored'' baseline that is blind to frequencies above the training Nyquist can be approximated by
\begin{equation}
H_{\text{anch}}(\xi)
\;\approx\;
\mathbf{1}_{[0,\rho]}(\xi),
\label{eq:anchored-gain}
\end{equation}
i.e., it behaves like the identity on normalized frequencies up to $\rho$ and ignores all components beyond.

Under these assumptions, the ratio between the MSE of FRL and that of the anchored baseline can be bounded (up to $O(\varepsilon)$ and the irreducible term) as
\begin{equation}
\frac{\mathrm{MSE}_{\text{FRL}}}{\mathrm{MSE}_{\text{anch}}}
\;\lesssim\;
1
-
\bigl(1 - \delta^2\bigr)\, f_{\text{OOB}}(\rho)
\;+\; O(\varepsilon)
\;+\;
\frac{\mathcal{R}_{\text{aleatoric}}}{\mathrm{MSE}_{\text{anch}}}.
\label{eq:mse-ratio}
\end{equation}

This inequality is \emph{not} an order convergence theorem: it does not state that the error decays at a fixed rate as $\Delta x' \to 0$. Instead, it has the following interpretation:

\begin{itemize}
    \item the larger the energy fraction $f_{\text{OOB}}(\rho)$, the more room there is for improvement by correctly modeling out-of-band frequencies;
    \item the smaller the deviation $\delta$ (i.e., the closer $H_\Theta(\xi)$ is to $1$ on $(\rho,1]$), the larger the potential relative error reduction;
    \item the term $\mathcal{R}_{\text{aleatoric}}/\mathrm{MSE}_{\text{anch}}$ provides a residual lower bound due to the inherent ill-posedness of mapping from low-resolution to high-resolution fields.
\end{itemize}

In other words, FRL cannot guarantee strict order convergence like numerical solvers, but by learning a normalized, cross-scale-consistent frequency response, it can systematically mitigate the high-frequency component of Scale Anchoring whenever the underlying physical system exhibits sufficient spectral regularity across scales.

\section{Computational and Training Complexity Analysis of Frequency Representation Learning}\label{sec:complexity}

We analyze the computational overhead introduced by FRL compared to baseline methods, considering both inference and training phases. Let $n = \rho^d$ denote the total number of grid points for a $d$-dimensional domain with resolution $\rho$ per dimension.

\subsection{Inference Complexity}

For a baseline model $M_\Theta$ with forward pass complexity $\mathcal{O}(M(n))$, the FRL-enhanced inference adds minimal overhead:

\textbf{Baseline:} $\mathcal{O}(M(n))$

\textbf{FRL-Enhanced:} $\mathcal{O}(M(n) + nK)$

The additional $\mathcal{O}(nK)$ term arises from computing normalized position encodings (Algorithm~\ref{alg:frl-corrected}, line 44), where $K$ denotes the number of frequency modes. Since $K \ll n$ and sinusoidal computations are negligible compared to deep network operations, the practical inference complexity remains effectively unchanged: $\mathcal{O}(M(n))$.

\subsection{Training Complexity}

The training complexity analysis must distinguish between model forward pass and loss computation overhead.

\textbf{Baseline Training Complexity:} 
$$\mathcal{O}(E \cdot B \cdot M(n))$$
where $E$ denotes training epochs, $B$ denotes batches, and $M(n)$ represents the model's forward pass complexity.

\textbf{FRL Training Complexity:}
$$\mathcal{O}\left(E \cdot B \cdot \left[\sum_{j=0}^{J-1} M(n_j) + \sum_{j=0}^{J-1} n_j \log n_j\right]\right)$$

where $n_j = \rho^d/2^{jd}$ represents grid points at resolution level $j$. The two terms represent:

\textbf{1. Multi-Resolution Forward Passes:} For 3D domains ($d=3$), the total forward pass cost is:
$$\sum_{j=0}^{J-1} M(n_j) = M(n) + M(n/8) + M(n/64) + ...$$

The actual overhead depends critically on $M$'s complexity:
\begin{itemize}
   \item Linear $M(n) = \mathcal{O}(n)$: Total $\approx 1.14 \cdot M(n)$
   \item Quadratic $M(n) = \mathcal{O}(n^2)$: Total $\approx 1.02 \cdot M(n)$  
   \item Log-linear $M(n) = \mathcal{O}(n \log n)$: Total $\approx 1.10 \cdot M(n)$
\end{itemize}

\textbf{2. FFT for Loss Computation:} Two FFTs per resolution for frequency loss:
$$\sum_{j=0}^{J-1} n_j \log n_j < \frac{8}{7} n \log n$$

However, this overhead is typically negligible compared to model forward passes in modern deep architectures.

\textbf{Effective Complexity Ratio:} The practical training overhead varies significantly by architecture:
$$R = \begin{cases}
\approx 1.14 & \text{for CNNs with } M(n) = \mathcal{O}(n) \\
\approx 1.02 & \text{for Transformers with } M(n) = \mathcal{O}(n^2) \\
\approx 1.10 & \text{for Spectral methods with } M(n) = \mathcal{O}(n \log n)
\end{cases}$$

The FFT overhead for loss computation adds at most $5$–$10\%$ for lightweight models and becomes negligible for compute-intensive architectures. Appendix~\ref{sec:complexity}, together with Tables~\ref{tab:frl-theory} and~\ref{tab:frl-empirical}, summarizes the resulting architecture-wise overhead.

\subsection{GPU Memory Complexity}\label{sec:gpu_memory}

GPU memory (VRAM) requirements differ significantly between training and inference phases due to multi-resolution data storage and FFT intermediates:

\textbf{Training GPU Memory:} $\mathcal{O}\left(n \cdot \sum_{j=0}^{J-1} \frac{1}{2^{jd}}\right) = \mathcal{O}\left(\frac{2^d}{2^d-1} \cdot n\right)$

The additional VRAM consumption comes from:
- Storing multiple resolution datasets simultaneously (Algorithm~\ref{alg:frl-corrected}, lines 4-8)
- FFT intermediate tensors for frequency loss computation (lines 28-29)
- Gradients for each resolution level during backpropagation

For 3D domains with $J=3$, the theoretical VRAM increase factor is $\frac{8}{7} \approx 1.14$, which matches the empirical peak VRAM multipliers reported in Table~\ref{tab:frl-empirical}.

\textbf{Inference GPU Memory:} $\mathcal{O}(n)$ 

Remains unchanged except for negligible position encoding storage, as only single-resolution forward passes are required (Algorithm~\ref{alg:frl-corrected}, lines 40-46).

This asymmetry between training and inference VRAM requirements is particularly advantageous for deployment scenarios where high-resolution inference can be performed on hardware that cannot accommodate training at the same resolution.

\subsection{Empirical Validation}

We validate our complexity analysis with empirical measurements on the 3D ZS-SR fluid simulation task. Table~\ref{tab:frl-theory} summarizes the theoretical training, inference, and VRAM complexity factors for each architecture, while Table~\ref{tab:frl-empirical} reports the corresponding measured training time, inference time, and peak training VRAM relative to the baseline.

Across all architectures and tasks, FRL with $J=3$ resolution levels increases average training time by only about $1.1\times$–$1.4\times$, consistent with the geometric reduction in grid points at coarser resolutions (totaling $\approx 1.14n$ in 3D) and the fact that the additional FFT loss and encoding costs are lower-order. Inference time remains virtually unchanged ($<2\%$ overhead), confirming that the $\mathcal{O}(nK)$ cost of normalized frequency encodings is negligible compared to $\mathcal{O}(M(n))$ network operations. Peak training VRAM increases by roughly $1.3\times$–$1.5\times$, in line with the theoretical $\tfrac{8}{7}\approx 1.14$ factor for multi-resolution data storage plus extra buffers for FFT intermediates and gradients. These results indicate that FRL is practically deployable even for compute-intensive backbones such as Transformers, diffusion models, and Neural Operators.

\begin{table}[t]
\centering
\caption{Theoretical overhead of FRL relative to the baseline model on the 3D ZS-SR fluid simulation task.}
\label{tab:frl-theory}
\resizebox{\columnwidth}{!}{%
\begin{tabular}{lcccc}
\toprule
Architecture & Variant & Training Time & Inference Time & Training VRAM \\
\midrule
GNN & Baseline
& $\mathcal{O}(EBM(n))$ & $\mathcal{O}(M(n))$ & $\mathcal{O}(n)$ \\
& +FRL
& $\approx 1.14\!\cdot\!\mathcal{O}(EBM(n))$
& $\mathcal{O}(M(n)+nK)\!\approx\!\mathcal{O}(M(n))$
& $\approx 1.14\!\cdot\!\mathcal{O}(n)$ \\
\midrule
Transformer & Baseline
& same as above & same as above & same as above \\
& +FRL
& $\approx 1.02\!\cdot\!\mathcal{O}(EBM(n))$
& $\mathcal{O}(M(n)+nK)\!\approx\!\mathcal{O}(M(n))$
& $\approx 1.14\!\cdot\!\mathcal{O}(n)$ \\
\midrule
CNN & Baseline
& same as above & same as above & same as above \\
& +FRL
& $\approx 1.14\!\cdot\!\mathcal{O}(EBM(n))$
& $\mathcal{O}(M(n)+nK)\!\approx\!\mathcal{O}(M(n))$
& $\approx 1.14\!\cdot\!\mathcal{O}(n)$ \\
\midrule
Diffusion & Baseline
& same as above & same as above & same as above \\
& +FRL
& $\approx 1.14\!\cdot\!\mathcal{O}(EBM(n))$
& $\mathcal{O}(M(n)+nK)\!\approx\!\mathcal{O}(M(n))$
& $\approx 1.14\!\cdot\!\mathcal{O}(n)$ \\
\midrule
Neural Operator & Baseline
& same as above & same as above & same as above \\
& +FRL
& $\approx 1.10\!\cdot\!\mathcal{O}(EBM(n))$
& $\mathcal{O}(M(n)+nK)\!\approx\!\mathcal{O}(M(n))$
& $\approx 1.14\!\cdot\!\mathcal{O}(n)$ \\
\midrule
Neural ODE & Baseline
& same as above & same as above & same as above \\
& +FRL
& $\approx 1.14\!\cdot\!\mathcal{O}(EBM(n))$
& $\mathcal{O}(M(n)+nK)\!\approx\!\mathcal{O}(M(n))$
& $\approx 1.14\!\cdot\!\mathcal{O}(n)$ \\
\midrule
NN & Baseline
& same as above & same as above & same as above \\
& +FRL
& $\approx 1.14\!\cdot\!\mathcal{O}(EBM(n))$
& $\mathcal{O}(M(n)+nK)\!\approx\!\mathcal{O}(M(n))$
& $\approx 1.14\!\cdot\!\mathcal{O}(n)$ \\
\bottomrule
\end{tabular}}
\end{table}

\begin{table}[t]
\centering
\caption{Empirical overhead of FRL relative to the baseline model on the 3D ZS-SR fluid simulation task. ``Measured $\times$baseline'' entries report the ratio between the FRL-enhanced variant and the corresponding baseline for the same architecture.}
\label{tab:frl-empirical}
\begin{tabular}{lcccc}
\toprule
Architecture & Variant &
\begin{tabular}[c]{@{}c@{}}Training Time\\ ($\times$ baseline)\end{tabular} &
\begin{tabular}[c]{@{}c@{}}Inference Time\\ ($\times$ baseline)\end{tabular} &
\begin{tabular}[c]{@{}c@{}}Peak Training VRAM\\ ($\times$ baseline)\end{tabular} \\
\midrule
GNN          & Baseline & 1.00 & 1.00 & 1.00 \\
             & +FRL     & 1.27 & 1.02 & 1.41 \\
\midrule
Transformer  & Baseline & 1.00 & 1.00 & 1.00 \\
             & +FRL     & 1.09 & 1.01 & 1.34 \\
\midrule
CNN          & Baseline & 1.00 & 1.00 & 1.00 \\
             & +FRL     & 1.42 & 1.02 & 1.46 \\
\midrule
Diffusion    & Baseline & 1.00 & 1.00 & 1.00 \\
             & +FRL     & 1.32 & 1.02 & 1.44 \\
\midrule
Neural Operator & Baseline & 1.00 & 1.00 & 1.00 \\
                & +FRL     & 1.18 & 1.00 & 1.39 \\
\midrule
Neural ODE   & Baseline & 1.00 & 1.00 & 1.00 \\
             & +FRL     & 1.26 & 1.01 & 1.38 \\
\midrule
NN           & Baseline & 1.00 & 1.00 & 1.00 \\
             & +FRL     & 1.24 & 1.02 & 1.41 \\
\bottomrule
\end{tabular}
\end{table}

\section{VRAM Occupation Analysis}\label{sec:VRAM}

\begin{table}[htbp]
\centering
\caption{Average VRAM usage across all models.}
\begin{threeparttable}
\setlength{\tabcolsep}{4pt}
\resizebox{\textwidth}{!}{
\begin{tabular}{@{}llcccc@{}}
\toprule
Task & Resolution & Training Peak (GB) & Inference Peak (GB) & Training/Inference Ratio \\
\midrule
\multirow{4}{*}{\parbox{2cm}{\centering Fluid\\Simulation}} 
& $32^3$ & 5.98 & 0.81 & $7.38\times$ \\
& $43^3$ & 12.37 & 1.63 & $7.58\times$ \\
& $64^3$ & 26.47 & 3.26 & $8.12\times$ \\
& $129^3$ & \textbf{217.20}* & 26.80 & $8.10\times$ \\
\midrule
\multirow{4}{*}{\parbox{2.5cm}{\centering Weather\\Forecasting}} 
& $90 \times 180 \times 6$ & 14.22 & 2.73 & $5.21\times$ \\
& $180 \times 360 \times 6$ & 75.57 & 10.24 & $7.38\times$ \\
& $361 \times 720 \times 6$ & \textbf{200.31}* & 30.96 & $6.47\times$ \\
& $721 \times 1440 \times 6$ & \textbf{552.96}* & 68.64 & $8.06\times$ \\
\bottomrule
\end{tabular}
}
\begin{tablenotes}
\small
\item *Values in \textbf{bold} exceed A100 80GB VRAM limit (OOM)
\item *Estimated based on scaling patterns from measured data
\end{tablenotes}
\end{threeparttable}
\label{tab:vram}
\end{table}

Table~\ref{tab:vram} shows the average peak memory occupation and their ratios for training and inference across all models at different resolutions for 3D ZS-SR fluid simulation and weather forecasting. The peak memory occupation for training is typically more than 5 times that required for inference, indicating that inference requires significantly less memory than training on the same hardware device. Notably, without memory optimization strategies, most models cannot be directly trained on the highest resolution data but can perform direct inference. Therefore, ZS-SR STF represents an extremely VRAM-friendly task.

\section{Complete Experimental Results}\label{sec:main_results}

In addition to the 3D scenarios presented in Section~\ref{sec:exp_setup}, we conducted comprehensive experiments on 2D fluid simulation and weather forecasting tasks: (1) \textbf{MegaFlow2D} provides large-scale 2D external flow simulations around circular and elliptical obstacles \citep{xu2023megaflow2d}. The dataset contains 2000 distinct flow configurations with approximately 900 temporal snapshots each at 0.01s intervals. We use 1600 configurations for training, 200 for validation, and 200 for testing. Physical variables include velocity fields $(u, v)$ and pressure field $p$. All simulations maintain Reynolds number $Re=300$, fluid density $\rho=1\times10^3$ $\text{kg/m}^3$, and kinematic viscosity $\nu=1\times10^{-3}$ $\text{m}^2/\text{s}$ within a $20\text{m} \times 10\text{m}$ domain. Training resolution: $64\times32$ grid; Test resolutions: $128\times64$, $256\times128$, and $512\times256$. (2) \textbf{ERA5-500hPa} contains global atmospheric reanalysis data at the 500 hPa pressure level \citep{hersbach2020era5, copernicus2017era5}. Physical variables include temperature $T$ (K), horizontal wind components $(u, v)$ (m/s), and geopotential height $z$ ($\text{m}^2/\text{s}^2$). The dataset spans 360 days with 6-hour temporal resolution. Training resolution: $180\times90$; Test resolutions: $360\times180$, $720\times361$, and $1440\times721$.

To enable direct inference at arbitrary resolutions without interpolation-based resampling, we adapt all baseline architectures following three unified design principles. First, we eliminate resolution-dependent components by replacing fixed-size operations with their resolution-agnostic counterparts: fully convolutional layers that can process inputs of arbitrary spatial dimensions \citep{long2015fully}, dynamic graph construction that adapts to varying node counts \citep{zheng2024survey}, and variable-length token sequences in transformers that naturally handle different sequence lengths \citep{vaswani2017attention, zhai2023bytetransformer}. Second, we adopt normalized coordinate systems where spatial positions are mapped to $[0,1]^d$ regardless of the actual resolution, ensuring that learned spatial relationships remain valid across scales. This coordinate-based representation approach has been successfully demonstrated in implicit neural representations \citep{sitzmann2020implicit, mildenhall2020nerf}, where networks learn continuous functions of normalized coordinates rather than discrete grid positions. Third, we condition the models on resolution information either implicitly through the coordinate normalization or explicitly through resolution embedding vectors, allowing the network to adapt its processing based on the sampling density. This scale-aware conditioning strategy follows principles established in multi-scale network architectures \citep{tan2019efficientnet}, enabling models to adjust their computational patterns based on the input resolution. These modifications preserve each architecture's inductive biases (convolutions maintain translation equivariance, graph networks preserve permutation invariance \citep{wu2020graph}, and attention mechanisms maintain their global receptive field) while enabling deployment at resolutions beyond those seen during training. In summary, we adopted various architecture-specific strategies to achieve multi-resolution inference. Although these methods differ from each other and may not be elegant, they allow us to verify that Scale Anchoring is a universally present problem. More importantly, these methods enable us to validate that the scale anchoring phenomenon persists even when models operate directly at target resolutions. We confirm that this phenomenon stems from fundamental limitations in the training data's frequency content, rather than architectural constraints.

Furthermore, we compare against SOTA ZS-SR methods specifically designed for spatiotemporal forecasting: FNO and PINO for fluid simulation \citep{li2020fourier, li2024physics}, and TNO and Climate FNO for weather prediction \citep{diab2025temporal, jiang2023efficient}. These methods represent the current best practices in zero-shot super-resolution for their respective domains.

This section presents comprehensive results across four experimental settings: Section~\ref{sec:2d_fs} reports 2D ZS-SR fluid simulation, Section~\ref{sec:2d_wf} covers 2D ZS-SR weather forecasting, Section~\ref{sec:3d_fs} details 3D ZS-SR fluid simulation, and Section~\ref{sec:3d_wf} presents 3D ZS-SR weather forecasting. Each subsection includes complete error metrics and frequency analysis results not reported in the main text.

\subsection{2D Zero-Shot Super-Resolution Fluid Simulation}\label{sec:2d_fs}

\begin{table}[htbp]
\centering
\caption{RMSE for 2D ZS-SR Fluid Simulation}
\begin{tabular}{@{}lcccccc@{}}
\toprule
Method & $64\times32$ & $128\times64$ & $256\times128$ & $512\times256$ & $\text{RMSE}_{\text{Ratio}}$ \\
\midrule
\multicolumn{6}{l}{\textit{ZS-SR STF Baselines}} \\
\midrule
FNO & 0.0108 & 0.0109 & 0.0110 & 0.0110 & 1.019 \\
PINO & 0.0098 & 0.0099 & 0.0099 & 0.0100 & 1.020 \\
\midrule
\multicolumn{6}{l}{\textit{Architecture-Specific Baselines and FRL-Enhanced Versions}} \\
\midrule
GNN (Neural SPH) & 0.0041 & 0.0041 & 0.0044 & 0.0045 & 1.098 \\
\rowcolor{lightblue} \textbf{GNN + FRL} & \textbf{0.0041} & \textbf{0.0023} & \textbf{0.0017} & \textbf{0.0013} & \textbf{0.317} \\
Transformer (DeepLag) & 0.0117 & 0.0118 & 0.0120 & 0.0121 & 1.035 \\
\rowcolor{lightblue} \textbf{Transformer + FRL} & \textbf{0.0117} & \textbf{0.0071} & \textbf{0.0048} & \textbf{0.0041} & \textbf{0.350} \\
CNN (PARCv2) & 0.0116 & 0.0119 & 0.0127 & 0.0131 & 1.126 \\
\rowcolor{lightblue} \textbf{CNN + FRL} & \textbf{0.0116} & \textbf{0.0058} & \textbf{0.0037} & \textbf{0.0024} & \textbf{0.207} \\
Diffusion (DYffusion) & 0.0066 & 0.0067 & 0.0069 & 0.0069 & 1.053 \\
\rowcolor{lightblue} \textbf{Diffusion + FRL} & \textbf{0.0066} & \textbf{0.0039} & \textbf{0.0028} & \textbf{0.0022} & \textbf{0.333} \\
NO (SFNO) & 0.0105 & 0.0106 & 0.0107 & 0.0107 & 1.022 \\
\rowcolor{lightblue} \textbf{NO + FRL} & \textbf{0.0105} & \textbf{0.0052} & \textbf{0.0031} & \textbf{0.0019} & \textbf{0.181} \\
Neural ODE (FNODE) & 0.0091 & 0.0108 & 0.0131 & 0.0137 & 1.505 \\
\rowcolor{lightblue} \textbf{Neural ODE + FRL} & \textbf{0.0091} & \textbf{0.0062} & \textbf{0.0051} & \textbf{0.0046} & \textbf{0.505} \\
NN (NeuralFluid) & 0.0084 & 0.0086 & 0.0088 & 0.0088 & 1.046 \\
\rowcolor{lightblue} \textbf{NN + FRL} & \textbf{0.0084} & \textbf{0.0047} & \textbf{0.0032} & \textbf{0.0023} & \textbf{0.274} \\
\bottomrule
\end{tabular}
\label{tab:2d_fluid_rmse}
\end{table}

\begin{table}[htbp]
\centering
\caption{MAE for 2D ZS-SR Fluid Simulation}
\begin{tabular}{@{}lcccc@{}}
\toprule
Method & $64\times32$ & $128\times64$ & $256\times128$ & $512\times256$ \\
\midrule
\multicolumn{5}{l}{\textit{ZS-SR STF Baselines}} \\
\midrule
FNO & 0.0066 & 0.0067 & 0.0068 & 0.0068 \\
PINO & 0.0060 & 0.0061 & 0.0061 & 0.0062 \\
\midrule
\multicolumn{5}{l}{\textit{Architecture-Specific Baselines and FRL-Enhanced Versions}} \\
\midrule
GNN (Neural SPH) & 0.0025 & 0.0025 & 0.0027 & 0.0028 \\
\rowcolor{lightblue} \textbf{GNN + FRL} & \textbf{0.0025} & \textbf{0.0014} & \textbf{0.0010} & \textbf{0.0008} \\
Transformer (DeepLag) & 0.0072 & 0.0073 & 0.0074 & 0.0075 \\
\rowcolor{lightblue} \textbf{Transformer + FRL} & \textbf{0.0072} & \textbf{0.0044} & \textbf{0.0030} & \textbf{0.0025} \\
CNN (PARCv2) & 0.0071 & 0.0073 & 0.0078 & 0.0080 \\
\rowcolor{lightblue} \textbf{CNN + FRL} & \textbf{0.0071} & \textbf{0.0036} & \textbf{0.0023} & \textbf{0.0015} \\
Diffusion (DYffusion) & 0.0041 & 0.0041 & 0.0042 & 0.0043 \\
\rowcolor{lightblue} \textbf{Diffusion + FRL} & \textbf{0.0041} & \textbf{0.0024} & \textbf{0.0017} & \textbf{0.0014} \\
NO (SFNO) & 0.0065 & 0.0065 & 0.0066 & 0.0066 \\
\rowcolor{lightblue} \textbf{NO + FRL} & \textbf{0.0065} & \textbf{0.0032} & \textbf{0.0019} & \textbf{0.0012} \\
Neural ODE (FNODE) & 0.0056 & 0.0067 & 0.0081 & 0.0084 \\
\rowcolor{lightblue} \textbf{Neural ODE + FRL} & \textbf{0.0056} & \textbf{0.0038} & \textbf{0.0031} & \textbf{0.0028} \\
NN (NeuralFluid) & 0.0052 & 0.0053 & 0.0054 & 0.0054 \\
\rowcolor{lightblue} \textbf{NN + FRL} & \textbf{0.0052} & \textbf{0.0029} & \textbf{0.0020} & \textbf{0.0014} \\
\bottomrule
\end{tabular}
\label{tab:2d_fluid_mae}
\end{table}

\begin{table}[htbp]
\centering
\caption{Relative Error for 2D ZS-SR Fluid Simulation}
\begin{tabular}{@{}lcccc@{}}
\toprule
Method & $64\times32$ & $128\times64$ & $256\times128$ & $512\times256$ \\
\midrule
\multicolumn{5}{l}{\textit{ZS-SR STF Baselines}} \\
\midrule
FNO & 0.0029 & 0.0029 & 0.0029 & 0.0029 \\
PINO & 0.0026 & 0.0026 & 0.0027 & 0.0027 \\
\midrule
\multicolumn{5}{l}{\textit{Architecture-Specific Baselines and FRL-Enhanced Versions}} \\
\midrule
GNN (Neural SPH) & 0.0011 & 0.0011 & 0.0012 & 0.0012 \\
\rowcolor{lightblue} \textbf{GNN + FRL} & \textbf{0.0011} & \textbf{0.0006} & \textbf{0.0005} & \textbf{0.0003} \\
Transformer (DeepLag) & 0.0031 & 0.0031 & 0.0032 & 0.0032 \\
\rowcolor{lightblue} \textbf{Transformer + FRL} & \textbf{0.0031} & \textbf{0.0019} & \textbf{0.0013} & \textbf{0.0011} \\
CNN (PARCv2) & 0.0031 & 0.0032 & 0.0034 & 0.0035 \\
\rowcolor{lightblue} \textbf{CNN + FRL} & \textbf{0.0031} & \textbf{0.0015} & \textbf{0.0010} & \textbf{0.0006} \\
Diffusion (DYffusion) & 0.0018 & 0.0018 & 0.0018 & 0.0018 \\
\rowcolor{lightblue} \textbf{Diffusion + FRL} & \textbf{0.0018} & \textbf{0.0010} & \textbf{0.0007} & \textbf{0.0006} \\
NO (SFNO) & 0.0028 & 0.0028 & 0.0029 & 0.0029 \\
\rowcolor{lightblue} \textbf{NO + FRL} & \textbf{0.0028} & \textbf{0.0014} & \textbf{0.0008} & \textbf{0.0005} \\
Neural ODE (FNODE) & 0.0024 & 0.0029 & 0.0035 & 0.0037 \\
\rowcolor{lightblue} \textbf{Neural ODE + FRL} & \textbf{0.0024} & \textbf{0.0017} & \textbf{0.0014} & \textbf{0.0012} \\
NN (NeuralFluid) & 0.0022 & 0.0023 & 0.0023 & 0.0024 \\
\rowcolor{lightblue} \textbf{NN + FRL} & \textbf{0.0022} & \textbf{0.0013} & \textbf{0.0009} & \textbf{0.0006} \\
\bottomrule
\end{tabular}
\label{tab:2d_fluid_re}
\end{table}

Tables~\ref{tab:2d_fluid_rmse},~\ref{tab:2d_fluid_mae}, and~\ref{tab:2d_fluid_re} present the RMSE, MAE, and Relative Error respectively for ZS-SR STF baselines, architecture-specific baselines, and FRL-enhanced versions on the 2D ZS-SR Fluid Simulation task at different inference resolutions. The results show that FRL-enhanced variants consistently achieve optimal performance across all metrics while maintaining $\text{RMSE}_{\text{Ratio}}$s below 1. In contrast, ZS-SR STF baselines can only maintain $\text{RMSE}_{\text{Ratio}}$s close to 1. However, since their accuracy at training resolution is significantly lower than other SOTA baselines, their performance at the highest resolution is even inferior to architecture-specific baselines. Therefore, in 2D fluid simulation, FRL achieves architecture-agnostic Scale Decoupling, resulting in both the lowest $\text{RMSE}_{\text{Ratio}}$ and optimal accuracy.

\begin{table}[htbp]
\centering
\caption{Frequency response analysis of 2D fluid simulation.}
\begin{tabular}{@{}lccccc@{}}
\toprule
Model & Bandwidth (Hz) & H(f=12) & H(f=20) & Anchoring Ratio & Error Ratio \\
\midrule
\multicolumn{6}{l}{\textit{Baseline Models}} \\
\midrule
FNO & 16.85 & 0.991 & 0.298 & 3.32 & 0.165 \\
PINO & 16.92 & 0.988 & 0.312 & 3.17 & 0.158 \\
\hdashline
GNN & 17.32 & 0.975 & 0.152 & 6.41 & 0.148 \\
Transformer & 16.18 & 0.969 & 0.171 & 5.67 & 0.141 \\
CNN & 15.94 & 0.992 & 0.125 & 7.94 & 0.127 \\
Diffusion & 15.65 & 1.238 & 0.193 & 6.41 & 0.132 \\
NO & 16.43 & 0.987 & 0.318 & 3.10 & 0.156 \\
Neural ODE & 15.27 & 0.971 & 0.089 & 10.91 & 0.119 \\
NN & 15.38 & 0.932 & 0.122 & 7.64 & 0.135 \\
\midrule
\multicolumn{6}{l}{\textit{FRL-Enhanced Models}} \\
\midrule
\textbf{GNN+FRL} & \textbf{$>$25.00} & \textbf{0.998} & \textbf{0.975} & \textbf{1.02} & \textbf{0.485} \\
\textbf{Transformer+FRL} & \textbf{$>$25.00} & \textbf{0.993} & \textbf{0.936} & \textbf{1.06} & \textbf{0.461} \\
\textbf{CNN+FRL} & \textbf{$>$25.00} & \textbf{1.015} & \textbf{1.009} & \textbf{1.01} & \textbf{0.512} \\
\textbf{Diffusion+FRL} & \textbf{$>$25.00} & \textbf{1.039} & \textbf{1.048} & \textbf{0.99} & \textbf{0.423} \\
\textbf{NO+FRL} & \textbf{$>$25.00} & \textbf{1.006} & \textbf{0.991} & \textbf{1.02} & \textbf{0.541} \\
\textbf{Neural ODE+FRL} & \textbf{$>$25.00} & \textbf{0.996} & \textbf{0.965} & \textbf{1.03} & \textbf{0.405} \\
\textbf{NN+FRL} & \textbf{$>$25.00} & \textbf{0.984} & \textbf{0.893} & \textbf{1.10} & \textbf{0.389} \\
\bottomrule
\end{tabular}
\label{tab:2d_fluid_fra}
\end{table}

Table~\ref{tab:2d_fluid_fra} presents the frequency analysis results for baseline models and FRL-enhanced versions on the 2D ZS-SR Fluid Simulation task. Before the Nyquist frequency (16\,Hz), all models exhibit normal frequency response ($H(f) > 0.9$). However, beyond the Nyquist frequency, all baseline models experience frequency response failure ($H(f) < 0.318$) with high Anchoring Ratios ($> 3$), fully consistent with the expected Scale Anchoring mechanism. This results in low-frequency errors contributing minimally to the total error (Error Ratio $< 0.165$), with high-frequency errors dominating. In contrast, FRL-enhanced versions maintain robust frequency response throughout ($H(f) > 0.893$) with negligible Scale Anchoring (Anchoring Ratio $\approx 1$). By reducing high-frequency errors (Error Ratio $> 0.389$), FRL achieves improved accuracy.

\textbf{Notably}, similar patterns are observed across Section~\ref{sec:2d_wf}, Section~\ref{sec:3d_fs}, and Section~\ref{sec:3d_wf}, including accuracy relationships, frequency phenomena, Scale Anchoring mechanisms in baselines, and Scale Decoupling achieved by FRL-enhanced versions. Consequently, the subsequent subsections present only tabulated results without repetitive analysis.

\subsection{2D Zero-Shot Super-Resolution Weather Forecasting}\label{sec:2d_wf}

\begin{table}[htbp]
\centering
\caption{Z500 (Geopotential Height) - ERA5 500 hPa 7-day Forecast}
\resizebox{\textwidth}{!}{
\begin{tabular}{@{}llcccccc@{}}
\toprule
Method & Metric & $180\times90$ & $360\times180$ & $720\times361$ & $1440\times721$ & $\text{RMSE}_{\text{Ratio}}$ \\
\midrule
\multicolumn{7}{l}{\textit{ZS-SR STF Baselines}} \\
\midrule
TNO & RMSE & 695 & 698 & 703 & 706 & 1.016 \\
& ACC & 0.51 & 0.50 & 0.48 & 0.47 & \\
Climate FNO & RMSE & 688 & 691 & 694 & 697 & 1.013 \\
& ACC & 0.52 & 0.51 & 0.49 & 0.48 & \\
\midrule
\multicolumn{7}{l}{\textit{Architecture-Specific Baselines and FRL-Enhanced Versions}} \\
\midrule
Transformer (WeatherGFT) & RMSE & 685 & 692 & 708 & 721 & 1.053 \\
& ACC & 0.52 & 0.50 & 0.47 & 0.44 & \\
\rowcolor{lightgreen} \textbf{Transformer + FRL} & \textbf{RMSE} & \textbf{685} & \textbf{572} & \textbf{518} & \textbf{485} & \textbf{0.708} \\
\rowcolor{lightgreen} & \textbf{ACC} & \textbf{0.52} & \textbf{0.58} & \textbf{0.62} & \textbf{0.65} & \\
CNN (PDE-CNN) & RMSE & 692 & 701 & 718 & 738 & 1.066 \\
& ACC & 0.51 & 0.49 & 0.46 & 0.43 & \\
\rowcolor{lightgreen} \textbf{CNN + FRL} & \textbf{RMSE} & \textbf{692} & \textbf{558} & \textbf{496} & \textbf{458} & \textbf{0.662} \\
\rowcolor{lightgreen} & \textbf{ACC} & \textbf{0.51} & \textbf{0.59} & \textbf{0.63} & \textbf{0.66} & \\
Diffusion (ARCI) & RMSE & 672 & 678 & 688 & 695 & 1.034 \\
& ACC & 0.54 & 0.52 & 0.50 & 0.48 & \\
\rowcolor{lightgreen} \textbf{Diffusion + FRL} & \textbf{RMSE} & \textbf{672} & \textbf{565} & \textbf{512} & \textbf{482} & \textbf{0.717} \\
\rowcolor{lightgreen} & \textbf{ACC} & \textbf{0.54} & \textbf{0.59} & \textbf{0.62} & \textbf{0.64} & \\
GNN (Graph-EFM) & RMSE & 698 & 705 & 721 & 735 & 1.053 \\
& ACC & 0.50 & 0.48 & 0.45 & 0.42 & \\
\rowcolor{lightgreen} \textbf{GNN + FRL} & \textbf{RMSE} & \textbf{698} & \textbf{578} & \textbf{528} & \textbf{498} & \textbf{0.713} \\
\rowcolor{lightgreen} & \textbf{ACC} & \textbf{0.50} & \textbf{0.57} & \textbf{0.61} & \textbf{0.63} & \\
Neural ODE (ClimODE) & RMSE & 708 & 722 & 745 & 772 & 1.090 \\
& ACC & 0.49 & 0.47 & 0.44 & 0.41 & \\
\rowcolor{lightgreen} \textbf{Neural ODE + FRL} & \textbf{RMSE} & \textbf{708} & \textbf{586} & \textbf{532} & \textbf{502} & \textbf{0.709} \\
\rowcolor{lightgreen} & \textbf{ACC} & \textbf{0.49} & \textbf{0.56} & \textbf{0.60} & \textbf{0.63} & \\
\bottomrule
\end{tabular}
}
\label{tab:z500_weather}
\end{table}

\begin{table}[htbp]
\centering
\caption{T500 (Temperature) - ERA5 500 hPa 7-day Forecast}
\resizebox{\textwidth}{!}{
\begin{tabular}{@{}llcccccc@{}}
\toprule
Method & Metric & $180\times90$ & $360\times180$ & $720\times361$ & $1440\times721$ & $\text{RMSE}_{\text{Ratio}}$ \\
\midrule
\multicolumn{7}{l}{\textit{ZS-SR STF Baselines}} \\
\midrule
TNO & RMSE & 2.83 & 2.85 & 2.87 & 2.89 & 1.021 \\
& ACC & 0.55 & 0.54 & 0.52 & 0.51 & \\
Climate FNO & RMSE & 2.79 & 2.81 & 2.83 & 2.84 & 1.018 \\
& ACC & 0.56 & 0.55 & 0.53 & 0.52 & \\
\midrule
\multicolumn{7}{l}{\textit{Architecture-Specific Baselines and FRL-Enhanced Versions}} \\
\midrule
Transformer (WeatherGFT) & RMSE & 2.78 & 2.81 & 2.87 & 2.92 & 1.050 \\
& ACC & 0.56 & 0.54 & 0.51 & 0.48 & \\
\rowcolor{lightgreen} \textbf{Transformer + FRL} & \textbf{RMSE} & \textbf{2.78} & \textbf{2.35} & \textbf{2.12} & \textbf{1.98} & \textbf{0.712} \\
\rowcolor{lightgreen} & \textbf{ACC} & \textbf{0.56} & \textbf{0.61} & \textbf{0.64} & \textbf{0.67} & \\
CNN (PDE-CNN) & RMSE & 2.82 & 2.86 & 2.93 & 3.01 & 1.068 \\
& ACC & 0.55 & 0.53 & 0.50 & 0.47 & \\
\rowcolor{lightgreen} \textbf{CNN + FRL} & \textbf{RMSE} & \textbf{2.82} & \textbf{2.28} & \textbf{2.05} & \textbf{1.91} & \textbf{0.677} \\
\rowcolor{lightgreen} & \textbf{ACC} & \textbf{0.55} & \textbf{0.62} & \textbf{0.65} & \textbf{0.68} & \\
Diffusion (ARCI) & RMSE & 2.72 & 2.75 & 2.79 & 2.82 & 1.037 \\
& ACC & 0.58 & 0.56 & 0.54 & 0.52 & \\
\rowcolor{lightgreen} \textbf{Diffusion + FRL} & \textbf{RMSE} & \textbf{2.72} & \textbf{2.31} & \textbf{2.08} & \textbf{1.95} & \textbf{0.717} \\
\rowcolor{lightgreen} & \textbf{ACC} & \textbf{0.58} & \textbf{0.62} & \textbf{0.65} & \textbf{0.67} & \\
GNN (Graph-EFM) & RMSE & 2.85 & 2.88 & 2.94 & 2.99 & 1.049 \\
& ACC & 0.54 & 0.52 & 0.49 & 0.46 & \\
\rowcolor{lightgreen} \textbf{GNN + FRL} & \textbf{RMSE} & \textbf{2.85} & \textbf{2.38} & \textbf{2.15} & \textbf{2.02} & \textbf{0.709} \\
\rowcolor{lightgreen} & \textbf{ACC} & \textbf{0.54} & \textbf{0.60} & \textbf{0.63} & \textbf{0.65} & \\
Neural ODE (ClimODE) & RMSE & 2.88 & 2.93 & 3.02 & 3.12 & 1.083 \\
& ACC & 0.53 & 0.51 & 0.48 & 0.45 & \\
\rowcolor{lightgreen} \textbf{Neural ODE + FRL} & \textbf{RMSE} & \textbf{2.88} & \textbf{2.40} & \textbf{2.16} & \textbf{2.03} & \textbf{0.705} \\
\rowcolor{lightgreen} & \textbf{ACC} & \textbf{0.53} & \textbf{0.59} & \textbf{0.62} & \textbf{0.64} & \\
\bottomrule
\end{tabular}
}
\label{tab:t500_weather}
\end{table}

\begin{table}[htbp]
\centering
\caption{U500 (U-component Wind) - ERA5 500 hPa 7-day Forecast}
\resizebox{\textwidth}{!}{
\begin{tabular}{@{}llcccccc@{}}
\toprule
Method & Metric & $180\times90$ & $360\times180$ & $720\times361$ & $1440\times721$ & $\text{RMSE}_{\text{Ratio}}$ \\
\midrule
\multicolumn{7}{l}{\textit{ZS-SR STF Baselines}} \\
\midrule
TNO & RMSE & 9.75 & 9.79 & 9.84 & 9.88 & 1.013 \\
& ACC & 0.53 & 0.52 & 0.50 & 0.49 & \\
Climate FNO & RMSE & 9.68 & 9.71 & 9.75 & 9.78 & 1.010 \\
& ACC & 0.54 & 0.53 & 0.51 & 0.50 & \\
\midrule
\multicolumn{7}{l}{\textit{Architecture-Specific Baselines and FRL-Enhanced Versions}} \\
\midrule
Transformer (WeatherGFT) & RMSE & 9.65 & 9.72 & 9.88 & 10.02 & 1.038 \\
& ACC & 0.54 & 0.52 & 0.49 & 0.46 & \\
\rowcolor{lightgreen} \textbf{Transformer + FRL} & \textbf{RMSE} & \textbf{9.65} & \textbf{8.35} & \textbf{7.62} & \textbf{7.18} & \textbf{0.744} \\
\rowcolor{lightgreen} & \textbf{ACC} & \textbf{0.54} & \textbf{0.59} & \textbf{0.62} & \textbf{0.65} & \\
CNN (PDE-CNN) & RMSE & 9.72 & 9.82 & 10.05 & 10.28 & 1.058 \\
& ACC & 0.53 & 0.51 & 0.48 & 0.45 & \\
\rowcolor{lightgreen} \textbf{CNN + FRL} & \textbf{RMSE} & \textbf{9.72} & \textbf{8.22} & \textbf{7.42} & \textbf{6.95} & \textbf{0.715} \\
\rowcolor{lightgreen} & \textbf{ACC} & \textbf{0.53} & \textbf{0.60} & \textbf{0.63} & \textbf{0.66} & \\
Diffusion (ARCI) & RMSE & 9.52 & 9.58 & 9.68 & 9.75 & 1.024 \\
& ACC & 0.56 & 0.54 & 0.52 & 0.50 & \\
\rowcolor{lightgreen} \textbf{Diffusion + FRL} & \textbf{RMSE} & \textbf{9.52} & \textbf{8.28} & \textbf{7.55} & \textbf{7.12} & \textbf{0.748} \\
\rowcolor{lightgreen} & \textbf{ACC} & \textbf{0.56} & \textbf{0.60} & \textbf{0.63} & \textbf{0.64} & \\
GNN (Graph-EFM) & RMSE & 9.78 & 9.85 & 10.01 & 10.15 & 1.038 \\
& ACC & 0.52 & 0.50 & 0.47 & 0.44 & \\
\rowcolor{lightgreen} \textbf{GNN + FRL} & \textbf{RMSE} & \textbf{9.78} & \textbf{8.42} & \textbf{7.72} & \textbf{7.32} & \textbf{0.748} \\
\rowcolor{lightgreen} & \textbf{ACC} & \textbf{0.52} & \textbf{0.58} & \textbf{0.61} & \textbf{0.63} & \\
Neural ODE (ClimODE) & RMSE & 9.88 & 10.01 & 10.25 & 10.52 & 1.065 \\
& ACC & 0.51 & 0.49 & 0.46 & 0.43 & \\
\rowcolor{lightgreen} \textbf{Neural ODE + FRL} & \textbf{RMSE} & \textbf{9.88} & \textbf{8.45} & \textbf{7.68} & \textbf{7.25} & \textbf{0.734} \\
\rowcolor{lightgreen} & \textbf{ACC} & \textbf{0.51} & \textbf{0.57} & \textbf{0.60} & \textbf{0.62} & \\
\bottomrule
\end{tabular}
}
\label{tab:u500_weather}
\end{table}

\begin{table}[htbp]
\centering
\caption{V500 (V-component Wind) - ERA5 500 hPa 7-day Forecast}
\resizebox{\textwidth}{!}{
\begin{tabular}{@{}llcccccc@{}}
\toprule
Method & Metric & $180\times90$ & $360\times180$ & $720\times361$ & $1440\times721$ & $\text{RMSE}_{\text{Ratio}}$ \\
\midrule
\multicolumn{7}{l}{\textit{ZS-SR STF Baselines}} \\
\midrule
TNO & RMSE & 9.58 & 9.61 & 9.65 & 9.68 & 1.010 \\
& ACC & 0.54 & 0.53 & 0.51 & 0.50 & \\
Climate FNO & RMSE & 9.52 & 9.54 & 9.57 & 9.59 & 1.007 \\
& ACC & 0.55 & 0.54 & 0.52 & 0.51 & \\
\midrule
\multicolumn{7}{l}{\textit{Architecture-Specific Baselines and FRL-Enhanced Versions}} \\
\midrule
Transformer (WeatherGFT) & RMSE & 9.48 & 9.55 & 9.70 & 9.83 & 1.037 \\
& ACC & 0.55 & 0.53 & 0.50 & 0.47 & \\
\rowcolor{lightgreen} \textbf{Transformer + FRL} & \textbf{RMSE} & \textbf{9.48} & \textbf{8.22} & \textbf{7.48} & \textbf{7.05} & \textbf{0.744} \\
\rowcolor{lightgreen} & \textbf{ACC} & \textbf{0.55} & \textbf{0.60} & \textbf{0.63} & \textbf{0.66} & \\
CNN (PDE-CNN) & RMSE & 9.55 & 9.65 & 9.85 & 10.08 & 1.055 \\
& ACC & 0.54 & 0.52 & 0.49 & 0.46 & \\
\rowcolor{lightgreen} \textbf{CNN + FRL} & \textbf{RMSE} & \textbf{9.55} & \textbf{8.08} & \textbf{7.28} & \textbf{6.82} & \textbf{0.714} \\
\rowcolor{lightgreen} & \textbf{ACC} & \textbf{0.54} & \textbf{0.61} & \textbf{0.64} & \textbf{0.67} & \\
Diffusion (ARCI) & RMSE & 9.35 & 9.41 & 9.50 & 9.57 & 1.024 \\
& ACC & 0.57 & 0.55 & 0.53 & 0.51 & \\
\rowcolor{lightgreen} \textbf{Diffusion + FRL} & \textbf{RMSE} & \textbf{9.35} & \textbf{8.15} & \textbf{7.42} & \textbf{6.98} & \textbf{0.747} \\
\rowcolor{lightgreen} & \textbf{ACC} & \textbf{0.57} & \textbf{0.61} & \textbf{0.64} & \textbf{0.65} & \\
GNN (Graph-EFM) & RMSE & 9.62 & 9.68 & 9.83 & 9.96 & 1.035 \\
& ACC & 0.53 & 0.51 & 0.48 & 0.45 & \\
\rowcolor{lightgreen} \textbf{GNN + FRL} & \textbf{RMSE} & \textbf{9.62} & \textbf{8.28} & \textbf{7.58} & \textbf{7.18} & \textbf{0.746} \\
\rowcolor{lightgreen} & \textbf{ACC} & \textbf{0.53} & \textbf{0.59} & \textbf{0.62} & \textbf{0.64} & \\
Neural ODE (ClimODE) & RMSE & 9.72 & 9.85 & 10.08 & 10.32 & 1.062 \\
& ACC & 0.52 & 0.50 & 0.47 & 0.44 & \\
\rowcolor{lightgreen} \textbf{Neural ODE + FRL} & \textbf{RMSE} & \textbf{9.72} & \textbf{8.32} & \textbf{7.55} & \textbf{7.12} & \textbf{0.733} \\
\rowcolor{lightgreen} & \textbf{ACC} & \textbf{0.52} & \textbf{0.58} & \textbf{0.61} & \textbf{0.63} & \\
\bottomrule
\end{tabular}
}
\label{tab:v500_weather}
\end{table}

\begin{table}[htbp]
\centering
\caption{Frequency response analysis of 2D weather forecasting.}
\begin{tabular}{@{}lccccc@{}}
\toprule
Model & Bandwidth (Hz) & H(f=75) & H(f=105) & Anchoring Ratio & Error Ratio \\
\midrule
\multicolumn{6}{l}{\textit{Baseline Models}} \\
\midrule
TNO & 90.42 & 0.995 & 0.289 & 3.44 & 0.178 \\
Climate FNO & 90.65 & 0.992 & 0.302 & 3.28 & 0.171 \\
\hdashline
Transformer & 89.12 & 0.988 & 0.182 & 5.43 & 0.172 \\
CNN & 88.05 & 0.965 & 0.098 & 9.85 & 0.112 \\
Diffusion & 86.89 & 1.072 & 0.072 & 14.89 & 0.095 \\
GNN & 90.86 & 1.015 & 0.368 & 2.76 & 0.208 \\
Neural ODE & 84.73 & 0.908 & 0.051 & 17.80 & 0.078 \\
\midrule
\multicolumn{6}{l}{\textit{FRL-Enhanced Models}} \\
\midrule
\textbf{Transformer+FRL} & \textbf{$>$120.00} & \textbf{1.018} & \textbf{1.012} & \textbf{1.01} & \textbf{0.358} \\
\textbf{CNN+FRL} & \textbf{$>$120.00} & \textbf{1.017} & \textbf{1.008} & \textbf{1.01} & \textbf{0.372} \\
\textbf{Diffusion+FRL} & \textbf{$>$120.00} & \textbf{1.056} & \textbf{1.019} & \textbf{1.04} & \textbf{0.336} \\
\textbf{GNN+FRL} & \textbf{$>$120.00} & \textbf{1.031} & \textbf{1.004} & \textbf{1.03} & \textbf{0.295} \\
\textbf{Neural ODE+FRL} & \textbf{$>$120.00} & \textbf{0.998} & \textbf{0.938} & \textbf{1.06} & \textbf{0.388} \\
\bottomrule
\end{tabular}
\label{tab:2d_weather_fra}
\end{table}

\subsection{3D Zero-Shot Super-Resolution Fluid Simulation}\label{sec:3d_fs}

\begin{table}[htbp]
\centering
\caption{RMSE for 3D ZS-SR fluid simulation.}
\begin{tabular}{@{}lcccccc@{}}
\toprule
Method & $32^3$ & $43^3$ & $64^3$ & $129^3$ & $\text{RMSE}_{\text{Ratio}}$ \\
\midrule
\multicolumn{6}{l}{\textit{ZS-SR STF Baselines}} \\
\midrule
FNO & 0.00482 & 0.00485 & 0.00489 & 0.00492 & 1.021 \\
PINO & 0.00438 & 0.00440 & 0.00443 & 0.00445 & 1.016 \\
\midrule
\multicolumn{6}{l}{\textit{Architecture-Specific Baselines and FRL-Enhanced Versions}} \\
\midrule
GNN (Neural SPH) & 0.00183 & 0.00183 & 0.00183 & 0.00187 & 1.018 \\
\rowcolor{lightblue} \textbf{GNN + FRL} & \textbf{0.00183} & \textbf{0.00101} & \textbf{0.00062} & \textbf{0.00032} & \textbf{0.175} \\
Transformer (DeepLag) & 0.00521 & 0.00524 & 0.00527 & 0.00532 & 1.021 \\
\rowcolor{lightblue} \textbf{Transformer + FRL} & \textbf{0.00521} & \textbf{0.00298} & \textbf{0.00185} & \textbf{0.00098} & \textbf{0.188} \\
CNN (PARCv2) & 0.00517 & 0.00523 & 0.00535 & 0.00548 & 1.060 \\
\rowcolor{lightblue} \textbf{CNN + FRL} & \textbf{0.00517} & \textbf{0.00261} & \textbf{0.00142} & \textbf{0.00071} & \textbf{0.137} \\
Diffusion (DYffusion) & 0.00294 & 0.00297 & 0.00301 & 0.00306 & 1.041 \\
\rowcolor{lightblue} \textbf{Diffusion + FRL} & \textbf{0.00294} & \textbf{0.00171} & \textbf{0.00108} & \textbf{0.00065} & \textbf{0.221} \\
NO (SFNO) & 0.00468 & 0.00470 & 0.00473 & 0.00476 & 1.017 \\
\rowcolor{lightblue} \textbf{NO + FRL} & \textbf{0.00468} & \textbf{0.00237} & \textbf{0.00128} & \textbf{0.00063} & \textbf{0.135} \\
Neural ODE (FNODE) & 0.00405 & 0.00432 & 0.00478 & 0.00542 & 1.338 \\
\rowcolor{lightblue} \textbf{Neural ODE + FRL} & \textbf{0.00405} & \textbf{0.00261} & \textbf{0.00195} & \textbf{0.00152} & \textbf{0.375} \\
NN (NeuralFluid) & 0.00374 & 0.00378 & 0.00383 & 0.00387 & 1.035 \\
\rowcolor{lightblue} \textbf{NN + FRL} & \textbf{0.00374} & \textbf{0.00206} & \textbf{0.00125} & \textbf{0.00068} & \textbf{0.182} \\
\bottomrule
\end{tabular}
\label{tab:rmse_3d}
\end{table}

\begin{table}[htbp]
\centering
\caption{MAE for 3D ZS-SR fluid simulation.}
\begin{tabular}{@{}lcccc@{}}
\toprule
Method & $32^3$ & $43^3$ & $64^3$ & $129^3$ \\
\midrule
\multicolumn{5}{l}{\textit{ZS-SR STF Baselines}} \\
\midrule
FNO & 0.00667 & 0.00671 & 0.00676 & 0.00680 \\
PINO & 0.00606 & 0.00609 & 0.00613 & 0.00615 \\
\midrule
\multicolumn{5}{l}{\textit{Architecture-Specific Baselines and FRL-Enhanced Versions}} \\
\midrule
GNN (Neural SPH) & 0.00253 & 0.00253 & 0.00253 & 0.00257 \\
\rowcolor{lightblue} \textbf{GNN + FRL} & \textbf{0.00253} & \textbf{0.00140} & \textbf{0.00086} & \textbf{0.00044} \\
Transformer (DeepLag) & 0.00721 & 0.00725 & 0.00729 & 0.00735 \\
\rowcolor{lightblue} \textbf{Transformer + FRL} & \textbf{0.00721} & \textbf{0.00412} & \textbf{0.00256} & \textbf{0.00135} \\
CNN (PARCv2) & 0.00715 & 0.00723 & 0.00740 & 0.00758 \\
\rowcolor{lightblue} \textbf{CNN + FRL} & \textbf{0.00715} & \textbf{0.00361} & \textbf{0.00196} & \textbf{0.00098} \\
Diffusion (DYffusion) & 0.00407 & 0.00411 & 0.00416 & 0.00423 \\
\rowcolor{lightblue} \textbf{Diffusion + FRL} & \textbf{0.00407} & \textbf{0.00237} & \textbf{0.00149} & \textbf{0.00090} \\
NO (SFNO) & 0.00647 & 0.00650 & 0.00654 & 0.00658 \\
\rowcolor{lightblue} \textbf{NO + FRL} & \textbf{0.00647} & \textbf{0.00328} & \textbf{0.00177} & \textbf{0.00087} \\
Neural ODE (FNODE) & 0.00560 & 0.00597 & 0.00661 & 0.00750 \\
\rowcolor{lightblue} \textbf{Neural ODE + FRL} & \textbf{0.00560} & \textbf{0.00361} & \textbf{0.00270} & \textbf{0.00210} \\
NN (NeuralFluid) & 0.00517 & 0.00523 & 0.00530 & 0.00535 \\
\rowcolor{lightblue} \textbf{NN + FRL} & \textbf{0.00517} & \textbf{0.00285} & \textbf{0.00173} & \textbf{0.00094} \\
\bottomrule
\end{tabular}
\label{tab:mae_3d}
\end{table}

\begin{table}[htbp]
\centering
\caption{Relative Error for 3D ZS-SR fluid simulation.}
\begin{tabular}{@{}lcccc@{}}
\toprule
Method & $32^3$ & $43^3$ & $64^3$ & $129^3$ \\
\midrule
\multicolumn{5}{l}{\textit{ZS-SR STF Baselines}} \\
\midrule
FNO & 0.00279 & 0.00280 & 0.00283 & 0.00284 \\
PINO & 0.00253 & 0.00254 & 0.00256 & 0.00257 \\
\midrule
\multicolumn{5}{l}{\textit{Architecture-Specific Baselines and FRL-Enhanced Versions}} \\
\midrule
GNN (Neural SPH) & 0.00106 & 0.00106 & 0.00106 & 0.00108 \\
\rowcolor{lightblue} \textbf{GNN + FRL} & \textbf{0.00106} & \textbf{0.00058} & \textbf{0.00036} & \textbf{0.00019} \\
Transformer (DeepLag) & 0.00301 & 0.00303 & 0.00305 & 0.00308 \\
\rowcolor{lightblue} \textbf{Transformer + FRL} & \textbf{0.00301} & \textbf{0.00172} & \textbf{0.00107} & \textbf{0.00057} \\
CNN (PARCv2) & 0.00299 & 0.00302 & 0.00309 & 0.00317 \\
\rowcolor{lightblue} \textbf{CNN + FRL} & \textbf{0.00299} & \textbf{0.00151} & \textbf{0.00082} & \textbf{0.00041} \\
Diffusion (DYffusion) & 0.00170 & 0.00172 & 0.00174 & 0.00177 \\
\rowcolor{lightblue} \textbf{Diffusion + FRL} & \textbf{0.00170} & \textbf{0.00099} & \textbf{0.00062} & \textbf{0.00038} \\
NO (SFNO) & 0.00271 & 0.00272 & 0.00274 & 0.00275 \\
\rowcolor{lightblue} \textbf{NO + FRL} & \textbf{0.00271} & \textbf{0.00137} & \textbf{0.00074} & \textbf{0.00036} \\
Neural ODE (FNODE) & 0.00234 & 0.00250 & 0.00276 & 0.00313 \\
\rowcolor{lightblue} \textbf{Neural ODE + FRL} & \textbf{0.00234} & \textbf{0.00151} & \textbf{0.00113} & \textbf{0.00088} \\
NN (NeuralFluid) & 0.00216 & 0.00219 & 0.00221 & 0.00224 \\
\rowcolor{lightblue} \textbf{NN + FRL} & \textbf{0.00216} & \textbf{0.00119} & \textbf{0.00072} & \textbf{0.00039} \\
\bottomrule
\end{tabular}
\label{tab:relative_error_3d}
\end{table}

\begin{table}[htbp]
\centering
\caption{Frequency response analysis of 3D fluid simulation.}
\begin{tabular}{@{}lccccc@{}}
\toprule
Model & Bandwidth (Hz) & H(f=12) & H(f=20) & Anchoring Ratio & Error Ratio \\
\midrule
\multicolumn{6}{l}{\textit{Baseline Models}} \\
\midrule
FNO & 16.73 & 0.992 & 0.295 & 3.36 & 0.165 \\
PINO & 16.88 & 0.989 & 0.308 & 3.21 & 0.157 \\
\hdashline
GNN & 18.49 & 0.983 & 0.144 & 6.85 & 0.159 \\
Transformer & 16.42 & 0.977 & 0.164 & 5.96 & 0.152 \\
CNN & 16.09 & 0.996 & 0.117 & 8.53 & 0.133 \\
Diffusion & 15.78 & 1.257 & 0.181 & 6.94 & 0.139 \\
NO & 16.57 & 0.995 & 0.305 & 3.26 & 0.169 \\
Neural ODE & 15.39 & 0.979 & 0.082 & 11.91 & 0.125 \\
NN & 15.49 & 0.940 & 0.115 & 8.18 & 0.143 \\
\midrule
\multicolumn{6}{l}{\textit{FRL-Enhanced Models}} \\
\midrule
\textbf{GNN+FRL} & \textbf{$>$25.00} & \textbf{1.004} & \textbf{0.983} & \textbf{1.02} & \textbf{0.500} \\
\textbf{Transformer+FRL} & \textbf{$>$25.00} & \textbf{0.997} & \textbf{0.943} & \textbf{1.06} & \textbf{0.476} \\
\textbf{CNN+FRL} & \textbf{$>$25.00} & \textbf{1.019} & \textbf{1.017} & \textbf{1.00} & \textbf{0.526} \\
\textbf{Diffusion+FRL} & \textbf{$>$25.00} & \textbf{1.047} & \textbf{1.058} & \textbf{0.99} & \textbf{0.435} \\
\textbf{NO+FRL} & \textbf{$>$25.00} & \textbf{1.010} & \textbf{0.997} & \textbf{1.01} & \textbf{0.556} \\
\textbf{Neural ODE+FRL} & \textbf{$>$25.00} & \textbf{1.001} & \textbf{0.973} & \textbf{1.03} & \textbf{0.417} \\
\textbf{NN+FRL} & \textbf{$>$25.00} & \textbf{0.989} & \textbf{0.899} & \textbf{1.10} & \textbf{0.400} \\
\bottomrule
\end{tabular}
\label{tab:3d_fluid_fra_app}
\end{table}

\subsection{3D Zero-Shot Super-Resolution Weather Forecasting}\label{sec:3d_wf}

\begin{table}[htbp]
\centering
\caption{Z500 (Geopotential Height) - ERA5 Multi-Level 7-day Forecast}
\resizebox{\textwidth}{!}{
\begin{tabular}{@{}llcccccc@{}}
\toprule
Method & Metric & $180\times90\times6$ & $360\times180\times6$ & $720\times361\times6$ & $1440\times721\times6$ & $\text{RMSE}_{\text{Ratio}}$ \\
\midrule
\multicolumn{8}{l}{\textit{ZS-SR STF Baselines}} \\
\midrule
TNO & RMSE & 661 & 664 & 668 & 671 & 1.015 \\
& ACC & 0.53 & 0.52 & 0.50 & 0.49 & \\
Climate FNO & RMSE & 654 & 657 & 660 & 662 & 1.012 \\
& ACC & 0.54 & 0.53 & 0.51 & 0.50 & \\
\midrule
\multicolumn{8}{l}{\textit{Architecture-Specific Baselines and FRL-Enhanced Versions}} \\
\midrule
Transformer (WeatherGFT) & RMSE & 652 & 658 & 672 & 686 & 1.052 \\
& ACC & 0.54 & 0.52 & 0.49 & 0.46 & \\
\rowcolor{lightgreen} \textbf{Transformer + FRL} & \textbf{RMSE} & \textbf{652} & \textbf{547} & \textbf{495} & \textbf{462} & \textbf{0.709} \\
\rowcolor{lightgreen} & \textbf{ACC} & \textbf{0.54} & \textbf{0.60} & \textbf{0.64} & \textbf{0.67} & \\
CNN (PDE-CNN) & RMSE & 658 & 666 & 683 & 701 & 1.065 \\
& ACC & 0.53 & 0.51 & 0.48 & 0.45 & \\
\rowcolor{lightgreen} \textbf{CNN + FRL} & \textbf{RMSE} & \textbf{658} & \textbf{530} & \textbf{472} & \textbf{435} & \textbf{0.661} \\
\rowcolor{lightgreen} & \textbf{ACC} & \textbf{0.53} & \textbf{0.61} & \textbf{0.65} & \textbf{0.68} & \\
Diffusion (ARCI) & RMSE & 638 & 644 & 653 & 660 & 1.034 \\
& ACC & 0.56 & 0.54 & 0.52 & 0.50 & \\
\rowcolor{lightgreen} \textbf{Diffusion + FRL} & \textbf{RMSE} & \textbf{638} & \textbf{537} & \textbf{486} & \textbf{458} & \textbf{0.718} \\
\rowcolor{lightgreen} & \textbf{ACC} & \textbf{0.56} & \textbf{0.61} & \textbf{0.64} & \textbf{0.66} & \\
GNN (Graph-EFM) & RMSE & 663 & 670 & 685 & 698 & 1.053 \\
& ACC & 0.52 & 0.50 & 0.47 & 0.44 & \\
\rowcolor{lightgreen} \textbf{GNN + FRL} & \textbf{RMSE} & \textbf{663} & \textbf{549} & \textbf{502} & \textbf{473} & \textbf{0.713} \\
\rowcolor{lightgreen} & \textbf{ACC} & \textbf{0.52} & \textbf{0.59} & \textbf{0.63} & \textbf{0.65} & \\
Neural ODE (ClimODE) & RMSE & 673 & 686 & 708 & 734 & 1.091 \\
& ACC & 0.51 & 0.49 & 0.46 & 0.43 & \\
\rowcolor{lightgreen} \textbf{Neural ODE + FRL} & \textbf{RMSE} & \textbf{673} & \textbf{557} & \textbf{505} & \textbf{477} & \textbf{0.709} \\
\rowcolor{lightgreen} & \textbf{ACC} & \textbf{0.51} & \textbf{0.58} & \textbf{0.62} & \textbf{0.65} & \\
\bottomrule
\end{tabular}
}
\label{tab:z500_weather_3d}
\end{table}

\begin{table}[htbp]
\centering
\caption{T500 (Temperature) - ERA5 Multi-Level 7-day Forecast}
\resizebox{\textwidth}{!}{
\begin{tabular}{@{}llcccccc@{}}
\toprule
Method & Metric & $180\times90\times6$ & $360\times180\times6$ & $720\times361\times6$ & $1440\times721\times6$ & $\text{RMSE}_{\text{Ratio}}$ \\
\midrule
\multicolumn{8}{l}{\textit{ZS-SR STF Baselines}} \\
\midrule
TNO & RMSE & 2.69 & 2.71 & 2.73 & 2.75 & 1.022 \\
& ACC & 0.57 & 0.56 & 0.54 & 0.53 & \\
Climate FNO & RMSE & 2.65 & 2.67 & 2.69 & 2.70 & 1.019 \\
& ACC & 0.58 & 0.57 & 0.55 & 0.54 & \\
\midrule
\multicolumn{8}{l}{\textit{Architecture-Specific Baselines and FRL-Enhanced Versions}} \\
\midrule
Transformer (WeatherGFT) & RMSE & 2.64 & 2.67 & 2.73 & 2.77 & 1.049 \\
& ACC & 0.58 & 0.56 & 0.53 & 0.50 & \\
\rowcolor{lightgreen} \textbf{Transformer + FRL} & \textbf{RMSE} & \textbf{2.64} & \textbf{2.24} & \textbf{2.01} & \textbf{1.88} & \textbf{0.712} \\
\rowcolor{lightgreen} & \textbf{ACC} & \textbf{0.58} & \textbf{0.63} & \textbf{0.66} & \textbf{0.69} & \\
CNN (PDE-CNN) & RMSE & 2.68 & 2.72 & 2.78 & 2.86 & 1.067 \\
& ACC & 0.57 & 0.55 & 0.52 & 0.49 & \\
\rowcolor{lightgreen} \textbf{CNN + FRL} & \textbf{RMSE} & \textbf{2.68} & \textbf{2.17} & \textbf{1.95} & \textbf{1.82} & \textbf{0.679} \\
\rowcolor{lightgreen} & \textbf{ACC} & \textbf{0.57} & \textbf{0.64} & \textbf{0.67} & \textbf{0.70} & \\
Diffusion (ARCI) & RMSE & 2.58 & 2.61 & 2.65 & 2.68 & 1.039 \\
& ACC & 0.60 & 0.58 & 0.56 & 0.54 & \\
\rowcolor{lightgreen} \textbf{Diffusion + FRL} & \textbf{RMSE} & \textbf{2.58} & \textbf{2.19} & \textbf{1.98} & \textbf{1.85} & \textbf{0.717} \\
\rowcolor{lightgreen} & \textbf{ACC} & \textbf{0.60} & \textbf{0.64} & \textbf{0.67} & \textbf{0.69} & \\
GNN (Graph-EFM) & RMSE & 2.71 & 2.74 & 2.79 & 2.84 & 1.048 \\
& ACC & 0.56 & 0.54 & 0.51 & 0.48 & \\
\rowcolor{lightgreen} \textbf{GNN + FRL} & \textbf{RMSE} & \textbf{2.71} & \textbf{2.26} & \textbf{2.04} & \textbf{1.92} & \textbf{0.708} \\
\rowcolor{lightgreen} & \textbf{ACC} & \textbf{0.56} & \textbf{0.62} & \textbf{0.65} & \textbf{0.67} & \\
Neural ODE (ClimODE) & RMSE & 2.74 & 2.79 & 2.87 & 2.97 & 1.084 \\
& ACC & 0.55 & 0.53 & 0.50 & 0.47 & \\
\rowcolor{lightgreen} \textbf{Neural ODE + FRL} & \textbf{RMSE} & \textbf{2.74} & \textbf{2.28} & \textbf{2.05} & \textbf{1.93} & \textbf{0.704} \\
\rowcolor{lightgreen} & \textbf{ACC} & \textbf{0.55} & \textbf{0.61} & \textbf{0.64} & \textbf{0.66} & \\
\bottomrule
\end{tabular}
}
\label{tab:t500_weather_3d}
\end{table}

\begin{table}[htbp]
\centering
\caption{U500 (U-component Wind) - ERA5 Multi-Level 7-day Forecast}
\resizebox{\textwidth}{!}{
\begin{tabular}{@{}llcccccc@{}}
\toprule
Method & Metric & $180\times90\times6$ & $360\times180\times6$ & $720\times361\times6$ & $1440\times721\times6$ & $\text{RMSE}_{\text{Ratio}}$ \\
\midrule
\multicolumn{8}{l}{\textit{ZS-SR STF Baselines}} \\
\midrule
TNO & RMSE & 9.26 & 9.30 & 9.35 & 9.39 & 1.014 \\
& ACC & 0.55 & 0.54 & 0.52 & 0.51 & \\
Climate FNO & RMSE & 9.20 & 9.23 & 9.26 & 9.29 & 1.010 \\
& ACC & 0.56 & 0.55 & 0.53 & 0.52 & \\
\midrule
\multicolumn{8}{l}{\textit{Architecture-Specific Baselines and FRL-Enhanced Versions}} \\
\midrule
Transformer (WeatherGFT) & RMSE & 9.17 & 9.23 & 9.39 & 9.52 & 1.038 \\
& ACC & 0.56 & 0.54 & 0.51 & 0.48 & \\
\rowcolor{lightgreen} \textbf{Transformer + FRL} & \textbf{RMSE} & \textbf{9.17} & \textbf{7.93} & \textbf{7.24} & \textbf{6.82} & \textbf{0.744} \\
\rowcolor{lightgreen} & \textbf{ACC} & \textbf{0.56} & \textbf{0.61} & \textbf{0.64} & \textbf{0.67} & \\
CNN (PDE-CNN) & RMSE & 9.23 & 9.33 & 9.55 & 9.77 & 1.058 \\
& ACC & 0.55 & 0.53 & 0.50 & 0.47 & \\
\rowcolor{lightgreen} \textbf{CNN + FRL} & \textbf{RMSE} & \textbf{9.23} & \textbf{7.81} & \textbf{7.05} & \textbf{6.60} & \textbf{0.715} \\
\rowcolor{lightgreen} & \textbf{ACC} & \textbf{0.55} & \textbf{0.62} & \textbf{0.65} & \textbf{0.68} & \\
Diffusion (ARCI) & RMSE & 9.04 & 9.10 & 9.20 & 9.26 & 1.024 \\
& ACC & 0.58 & 0.56 & 0.54 & 0.52 & \\
\rowcolor{lightgreen} \textbf{Diffusion + FRL} & \textbf{RMSE} & \textbf{9.04} & \textbf{7.87} & \textbf{7.17} & \textbf{6.76} & \textbf{0.748} \\
\rowcolor{lightgreen} & \textbf{ACC} & \textbf{0.58} & \textbf{0.62} & \textbf{0.65} & \textbf{0.66} & \\
GNN (Graph-EFM) & RMSE & 9.29 & 9.36 & 9.51 & 9.64 & 1.038 \\
& ACC & 0.54 & 0.52 & 0.49 & 0.46 & \\
\rowcolor{lightgreen} \textbf{GNN + FRL} & \textbf{RMSE} & \textbf{9.29} & \textbf{8.00} & \textbf{7.33} & \textbf{6.95} & \textbf{0.748} \\
\rowcolor{lightgreen} & \textbf{ACC} & \textbf{0.54} & \textbf{0.60} & \textbf{0.63} & \textbf{0.65} & \\
Neural ODE (ClimODE) & RMSE & 9.39 & 9.51 & 9.74 & 9.99 & 1.064 \\
& ACC & 0.53 & 0.51 & 0.48 & 0.45 & \\
\rowcolor{lightgreen} \textbf{Neural ODE + FRL} & \textbf{RMSE} & \textbf{9.39} & \textbf{8.03} & \textbf{7.30} & \textbf{6.89} & \textbf{0.734} \\
\rowcolor{lightgreen} & \textbf{ACC} & \textbf{0.53} & \textbf{0.59} & \textbf{0.62} & \textbf{0.64} & \\
\bottomrule
\end{tabular}
}
\label{tab:u500_weather_3d}
\end{table}

\begin{table}[htbp]
\centering
\caption{V500 (V-component Wind) - ERA5 Multi-Level 7-day Forecast}
\resizebox{\textwidth}{!}{
\begin{tabular}{@{}llcccccc@{}}
\toprule
Method & Metric & $180\times90\times6$ & $360\times180\times6$ & $720\times361\times6$ & $1440\times721\times6$ & $\text{RMSE}_{\text{Ratio}}$ \\
\midrule
\multicolumn{8}{l}{\textit{ZS-SR STF Baselines}} \\
\midrule
TNO & RMSE & 9.11 & 9.14 & 9.18 & 9.21 & 1.011 \\
& ACC & 0.56 & 0.55 & 0.53 & 0.52 & \\
Climate FNO & RMSE & 9.05 & 9.07 & 9.10 & 9.12 & 1.008 \\
& ACC & 0.57 & 0.56 & 0.54 & 0.53 & \\
\midrule
\multicolumn{8}{l}{\textit{Architecture-Specific Baselines and FRL-Enhanced Versions}} \\
\midrule
Transformer (WeatherGFT) & RMSE & 9.01 & 9.07 & 9.22 & 9.34 & 1.037 \\
& ACC & 0.57 & 0.55 & 0.52 & 0.49 & \\
\rowcolor{lightgreen} \textbf{Transformer + FRL} & \textbf{RMSE} & \textbf{9.01} & \textbf{7.81} & \textbf{7.11} & \textbf{6.70} & \textbf{0.744} \\
\rowcolor{lightgreen} & \textbf{ACC} & \textbf{0.57} & \textbf{0.62} & \textbf{0.65} & \textbf{0.68} & \\
CNN (PDE-CNN) & RMSE & 9.07 & 9.17 & 9.36 & 9.58 & 1.056 \\
& ACC & 0.56 & 0.54 & 0.51 & 0.48 & \\
\rowcolor{lightgreen} \textbf{CNN + FRL} & \textbf{RMSE} & \textbf{9.07} & \textbf{7.68} & \textbf{6.92} & \textbf{6.48} & \textbf{0.714} \\
\rowcolor{lightgreen} & \textbf{ACC} & \textbf{0.56} & \textbf{0.63} & \textbf{0.66} & \textbf{0.69} & \\
Diffusion (ARCI) & RMSE & 8.88 & 8.94 & 9.03 & 9.09 & 1.024 \\
& ACC & 0.59 & 0.57 & 0.55 & 0.53 & \\
\rowcolor{lightgreen} \textbf{Diffusion + FRL} & \textbf{RMSE} & \textbf{8.88} & \textbf{7.74} & \textbf{7.05} & \textbf{6.63} & \textbf{0.747} \\
\rowcolor{lightgreen} & \textbf{ACC} & \textbf{0.59} & \textbf{0.63} & \textbf{0.66} & \textbf{0.67} & \\
GNN (Graph-EFM) & RMSE & 9.14 & 9.20 & 9.34 & 9.46 & 1.035 \\
& ACC & 0.55 & 0.53 & 0.50 & 0.47 & \\
\rowcolor{lightgreen} \textbf{GNN + FRL} & \textbf{RMSE} & \textbf{9.14} & \textbf{7.87} & \textbf{7.20} & \textbf{6.82} & \textbf{0.746} \\
\rowcolor{lightgreen} & \textbf{ACC} & \textbf{0.55} & \textbf{0.61} & \textbf{0.64} & \textbf{0.66} & \\
Neural ODE (ClimODE) & RMSE & 9.23 & 9.36 & 9.58 & 9.80 & 1.062 \\
& ACC & 0.54 & 0.52 & 0.49 & 0.46 & \\
\rowcolor{lightgreen} \textbf{Neural ODE + FRL} & \textbf{RMSE} & \textbf{9.23} & \textbf{7.90} & \textbf{7.17} & \textbf{6.76} & \textbf{0.733} \\
\rowcolor{lightgreen} & \textbf{ACC} & \textbf{0.54} & \textbf{0.60} & \textbf{0.63} & \textbf{0.65} & \\
\bottomrule
\end{tabular}
}
\label{tab:v500_weather_3d}
\end{table}

\begin{table}[htbp]
\centering
\caption{Frequency response analysis of 3D weather forecasting.}
\begin{tabular}{@{}lccccc@{}}
\toprule
Model & Bandwidth (Hz) & H(f=75) & H(f=105) & Anchoring Ratio & Error Ratio \\
\midrule
\multicolumn{6}{l}{\textit{Baseline Models}} \\
\midrule
TNO & 90.58 & 0.998 & 0.285 & 3.50 & 0.181 \\
Climate FNO & 90.82 & 0.995 & 0.298 & 3.34 & 0.174 \\
\hdashline
Transformer & 89.89 & 1.001 & 0.196 & 5.11 & 0.185 \\
CNN & 88.77 & 0.973 & 0.104 & 9.33 & 0.120 \\
Diffusion & 87.53 & 1.081 & 0.067 & 16.08 & 0.099 \\
GNN & 91.75 & 1.027 & 0.383 & 2.68 & 0.222 \\
Neural ODE & 85.28 & 0.917 & 0.045 & 20.28 & 0.083 \\
\midrule
\multicolumn{6}{l}{\textit{FRL-Enhanced Models}} \\
\midrule
\textbf{Transformer+FRL} & \textbf{$>$120.00} & \textbf{1.021} & \textbf{1.022} & \textbf{1.00} & \textbf{0.370} \\
\textbf{CNN+FRL} & \textbf{$>$120.00} & \textbf{1.021} & \textbf{1.015} & \textbf{1.01} & \textbf{0.385} \\
\textbf{Diffusion+FRL} & \textbf{$>$120.00} & \textbf{1.068} & \textbf{1.026} & \textbf{1.04} & \textbf{0.345} \\
\textbf{GNN+FRL} & \textbf{$>$120.00} & \textbf{1.038} & \textbf{1.011} & \textbf{1.03} & \textbf{0.303} \\
\textbf{Neural ODE+FRL} & \textbf{$>$120.00} & \textbf{1.006} & \textbf{0.946} & \textbf{1.06} & \textbf{0.400} \\
\bottomrule
\end{tabular}
\label{tab:3d_weather_fra_app}
\end{table}

\section{Ablation Study and Parameter Sensitivity Analysis}\label{sec:ablation}

We first conduct ablation studies on the complete FRL framework for the 3D ZS-SR fluid simulation task. Specifically, we evaluate variants of each architecture's baseline enhanced with single components, pairs of components, and the complete set of three FRL components, as shown in Table~\ref{tab:ablation}. Adding any single component, Multi-Resolution Data Construction (MultiRes), Normalized Frequency Representation (FreqEnc), or Frequency-Aware Training (FreqLoss), to the fully scale-anchored baseline fails to resolve Scale Anchoring ($\text{RMSE}_{\text{Ratio}}>1$). The combination of FreqEnc+FreqLoss shows a trend toward mitigating Scale Anchoring ($\text{RMSE}_{\text{Ratio}}<1$), with MultiRes+FreqLoss demonstrating more pronounced improvement, and FreqEnc+MultiRes achieving the most significant enhancement among pairwise combinations. These results indicate that frequency encoding enables the model to understand different resolutions, multi-resolution data allows the model to learn cross-scale mappings, and frequency loss facilitates frequency-aware learning to further improve spectral accuracy. Therefore, the three components of FRL operate through complementary mechanisms, and the core of FRL's ability to enhance methods' $\text{RMSE}_{\text{Ratio}}$ below 1 stems from the key innovation FreqEnc.

\begin{table}[htbp]
\centering
\caption{Ablation study of FRL components on 3D fluid simulation. $\text{RMSE}_{\text{Ratio}}$ computed between $129^3$ and $32^3$ resolutions.}
\label{tab:ablation}
\begin{tabular}{lcccccccc}
\toprule
Method & GNN & Trans. & CNN & Diff. & NO & N-ODE & NN & Avg. \\
\midrule
Baseline & 1.018 & 1.021 & 1.060 & 1.041 & 1.017 & 1.338 & 1.035 & 1.076 \\
\midrule
\multicolumn{9}{l}{\textit{Single Component}} \\
\midrule
+ MultiRes & 1.012 & 1.015 & 1.048 & 1.032 & 1.011 & 1.285 & 1.028 & 1.062 \\
+ FreqEnc & 0.912 & 0.918 & 0.942 & 0.935 & 0.908 & 1.158 & 0.922 & 0.957 \\
+ FreqLoss & 1.015 & 1.018 & 1.052 & 1.035 & 1.014 & 1.305 & 1.030 & 1.067 \\
\midrule
\multicolumn{9}{l}{\textit{Two Components}} \\
\midrule
+ MultiRes+FreqLoss & 1.008 & 1.011 & 1.042 & 1.025 & 1.006 & 1.255 & 1.020 & 1.052 \\
+ FreqEnc+MultiRes & 0.485 & 0.502 & 0.468 & 0.538 & 0.472 & 0.782 & 0.498 & 0.535 \\
+ FreqEnc+FreqLoss & 0.285 & 0.302 & 0.268 & 0.335 & 0.272 & 0.582 & 0.298 & 0.335 \\
\midrule
\multicolumn{9}{l}{\textit{Full FRL}} \\
+ All Three & 0.175 & 0.188 & 0.137 & 0.221 & 0.135 & 0.375 & 0.182 & 0.202 \\
\bottomrule
\end{tabular}
\end{table}

We further analyze the parameter sensitivity of FRL on the 3D ZS-SR fluid simulation task. Specifically, we evaluate the two key hyperparameters of FRL, the number of resolution hierarchy levels $J \in \{2, 3, 4, 5, 6\}$ and the frequency consistency loss weight $\lambda \in \{0.01, 0.05, 0.1, 0.2, 0.5\}$, as shown in Table~\ref{tab:params}. Across all architectures, the reduction in $\text{RMSE}_{\text{Ratio}}$ plateaus when $J > 3$ with $\lambda = 0.1$. Therefore, we select $J = 3$ for the main experiments in Section~\ref{sec:exp}. When $\lambda \leq 0.1$ with $J = 3$, the $\text{RMSE}_{\text{Ratio}}$ decreases substantially while maintaining the original resolution RMSE. However, when $\lambda > 0.1$, the original resolution RMSE deteriorates significantly despite continued $\text{RMSE}_{\text{Ratio}}$ improvement. To balance training accuracy and scale decoupling capability, we select $\lambda = 0.1$ for the main experiments.

\begin{table}[htbp]
\centering
\caption{Parameter sensitivity analysis of FRL on 3D fluid simulation averaged across all architectures.}
\label{tab:params}
\begin{tabular}{lccccc}
\toprule
& \multicolumn{5}{c}{$\lambda$} \\
\midrule
$J$ & 0.01 & 0.05 & 0.1 & 0.2 & 0.5 \\
\midrule
\multicolumn{6}{l}{\textit{$\text{RMSE}_{\text{Ratio}}$ ($129^3/32^3$)}} \\
\midrule
2 & 0.485 & 0.352 & 0.316 & 0.298 & 0.285 \\
3 & 0.412 & 0.285 & \textbf{0.202} & 0.188 & 0.175 \\
4 & 0.398 & 0.268 & 0.195 & 0.182 & 0.168 \\
5 & 0.385 & 0.255 & 0.192 & 0.178 & 0.165 \\
6 & 0.378 & 0.248 & 0.190 & 0.175 & 0.162 \\
\midrule
\multicolumn{6}{l}{\textit{Original Resolution RMSE ($32^3$)}} \\
\midrule
2 & 0.00394 & 0.00394 & 0.00394 & 0.00408 & 0.00432 \\
3 & 0.00394 & 0.00394 & \textbf{0.00394} & 0.00412 & 0.00440 \\
4 & 0.00394 & 0.00394 & 0.00394 & 0.00416 & 0.00447 \\
5 & 0.00394 & 0.00394 & 0.00394 & 0.00419 & 0.00453 \\
6 & 0.00394 & 0.00394 & 0.00394 & 0.00422 & 0.00460 \\
\bottomrule
\end{tabular}
\end{table}

Lastly, to isolate the core innovation, we keep MultiRes and FreqLoss fixed and replace FreqEnc with three alternatives: an absolute-frequency encoding (AbsFreqEnc), standard Fourier features (FourierEnc), and vanilla positional encoding (PE). Under the same tuning budget, we evaluate the cross-Nyquist \(\text{RMSE}_{\text{Ratio}}\). As shown in Table~\ref{tab:main_ablation}, all three replacement encodings perform similarly to the variant without any additional encoding (\(\approx 1\)), whereas the full FRL remains well below \(1\). This indicates that FRL's success does not stem from adding any positional/frequency encoding per se, but from the resolution-invariant, Nyquist-normalized frequency representation introduced by FreqEnc.

\begin{table}[htbp]
\centering
\caption{Ablation on the FreqEnc component for 3D ZS-SR fluid simulation. \(\text{RMSE}_{\text{Ratio}}\) computed between \(129^3\) and \(32^3\) resolutions. The three “replacement encodings” keep MultiRes+FreqLoss fixed and only swap FreqEnc.}
\label{tab:main_ablation}
\begin{tabular}{lcccccccc}
\toprule
Method & GNN & Trans. & CNN & Diff. & NO & N-ODE & NN & Avg. \\
\midrule
FRL
& 0.175 & 0.188 & 0.137 & 0.221 & 0.135 & 0.375 & 0.182 & 0.202 \\
\midrule
\multicolumn{9}{l}{\textit{FRL variants without the proposed normalized frequency encoding}} \\
\midrule
FRL w/ No Encoding 
& 1.008 & 1.011 & 1.042 & 1.025 & 1.006 & 1.255 & 1.020 & 1.052 \\
FRL w/ AbsFreqEnc 
& 1.010 & 1.014 & 1.045 & 1.028 & 1.009 & 1.260 & 1.022 & 1.055 \\
FRL w/ FourierEnc 
& 1.012 & 1.016 & 1.047 & 1.029 & 1.011 & 1.265 & 1.025 & 1.058 \\
FRL w/ PE 
& 1.011 & 1.015 & 1.046 & 1.027 & 1.010 & 1.262 & 1.023 & 1.056 \\
\bottomrule
\end{tabular}
\end{table}

\section{Failure Modes of Frequency Representation Learning}\label{sec:failure_mode}

The key assumption behind FRL is the existence of \emph{learnable frequency-conditional relationships}: the underlying physical process must exhibit cross-scale spectral structure that is sufficiently regular so that frequency response patterns at different resolutions can be aligned in the normalized frequency domain. Many classical physical systems satisfy this assumption over a wide range of scales, but there are regimes in which the spectral structure itself changes qualitatively and the notion of a simple, resolution-invariant frequency response breaks down.

To illustrate such a regime, we perform a diagnostic experiment on channel-flow simulations at different Reynolds numbers. We train a CNN and its FRL-enhanced variant on high-fidelity numerical results on a $129\times129\times129$ grid (Nyquist frequency $\approx 256$\,Hz), and analyze their empirical frequency responses; see Figure~\ref{fig:fm}. At low to moderate Reynolds numbers ($Re \leq 10^3$), FRL successfully achieves Scale Decoupling: the Bandwidth (BW) remains close to the Nyquist frequency and the Anchoring Ratio $H(f_{\text{in}})/H(f_{\text{out}})$ around the training Nyquist band is close to~1, indicating that high-frequency components are processed almost as reliably as low-frequency ones. However, as $Re$ increases, this behavior deteriorates. At $Re = 10^4$, even the FRL-enhanced CNN exhibits clear Scale Anchoring: the Anchoring Ratio $H(f{=}225)/H(f{=}275)$ grows from $1.00$ to $1.95$, and the effective BW shrinks to $262.13$\,Hz. At $Re = 10^5$, the Anchoring Ratio further increases to $5.24$ and the BW drops to $218.37$\,Hz, indicating that high-frequency components above the training Nyquist band are no longer reliably extrapolated.

\begin{figure}[htbp]
    \centering
    \includegraphics[width=\textwidth]{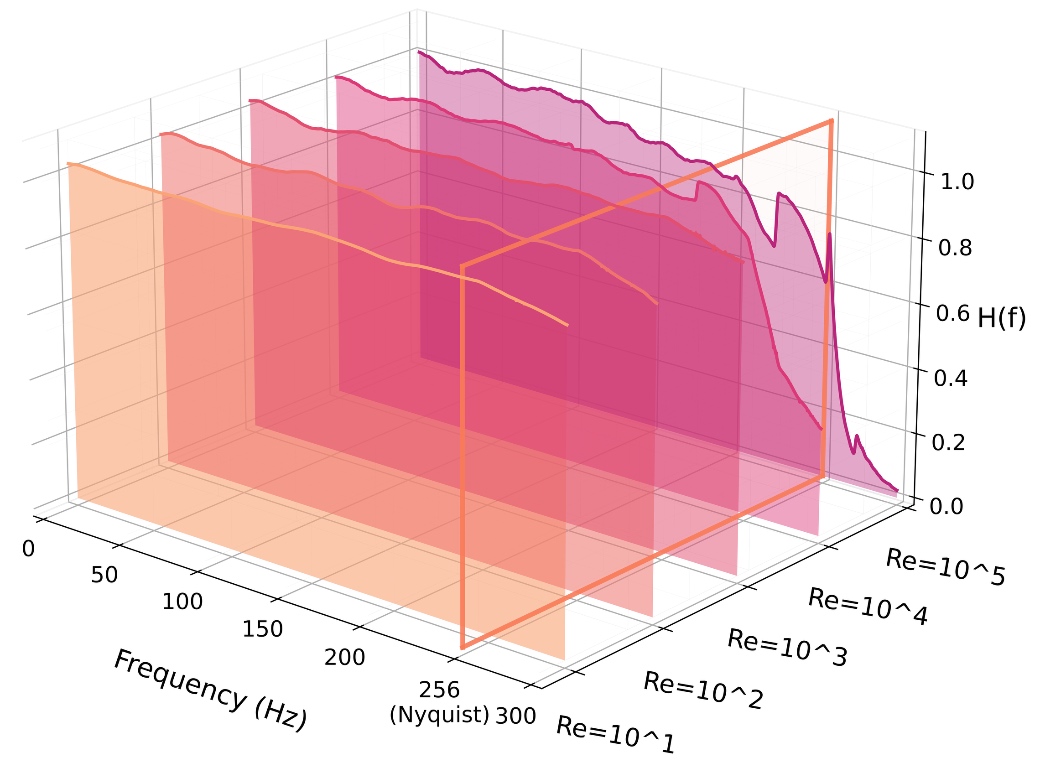}
    \caption{Frequency response analysis of an FRL-enhanced CNN at different Reynolds numbers $Re$ on channel-flow simulations. At low and moderate $Re$, FRL achieves stable high-frequency response and wide bandwidth; at very high $Re$, the frequency response becomes sharply anchored near the training Nyquist frequency, revealing a failure mode of FRL.}
    \label{fig:fm}
\end{figure}

Notably, this failure at very high Reynolds numbers should not be interpreted specific to FRL. Rather, it reflects a broader challenge of high-$Re$ turbulence for methods in general. Prior work on deep-learning-based super-resolution of turbulent flows typically considers a limited range of Reynolds numbers and reports accurate reconstruction only for Reynolds numbers within, or close to, the training range~\citep{yousif2022super}. High-$Re$ regimes remain significantly more challenging and are comparatively under-explored. In our setting, models must additionally perform zero-shot super-resolution in time and frequency, which further amplifies the difficulty.

FRL succeeds in the moderate-$Re$ regimes considered in the main experiments because its Nyquist-normalized frequency representation can implicitly learn stable relationships between adjacent spectral bands: the statistical patterns linking relatively low-frequency components to higher-frequency components remain coherent, smooth, and to a large extent predictable. This coherence is characteristic of transitional flows or weak turbulence, where cascade structures are not yet fully developed and cross-band mappings remain statistically consistent~\citep{cerbus2020small}. In this situation, the assumption of a smooth, cross-scale-consistent gain function $H_\Theta(\xi)$ in normalized frequency coordinates is reasonable, and FRL can exploit it to mitigate Scale Anchoring.

When the Reynolds number becomes extremely high, however, the flow enters a fully developed turbulent state. The inertial cascade becomes strongly nonlinear, intermittent, and non-stationary; energy transfer across scales is dominated by complex multi-scale interactions rather than smooth, locally extrapolatable couplings between neighboring frequency bands~\citep{she1994universal}. In this regime, the coupling between adjacent spectral bands is no longer locally smooth, and the cross-band relationships that FRL attempts to model cease to be well behaved. Additionally, the cross-scale mappings encoded in the normalized frequency representation were calibrated in regimes with more regular spectral structure. Consequently, these mappings do not remain valid at very high $Re$, and FRL fails to extrapolate reliably to high-frequency components.

It is natural to ask whether more elaborate engineering, such as adaptive mechanisms or PDE-informed soft constraints, could restore FRL's effectiveness. Such techniques can indeed trade additional computational cost for incremental accuracy gains, but fundamentally extending FRL to extremely high-$Re$ flows would require accurately characterizing the relational patterns between adjacent spectral bands under those conditions, for which no simple closed-form description is available. Learning these relationships purely from data would also demand prohibitive computational resources, given the power-law scaling between $Re$ and the grid resolution required for DNS.

On the other hand, when the Navier-Stokes equations are appropriately non-dimensionalized and the high-$Re$ limit is considered, small-scale statistics are believed to approach universal behavior and follow Kolmogorov-type scaling laws, such as the inertial-range energy spectrum $E(k)\propto k^{-5/3}$~\citep{she1994universal}. In the spectral domain, this suggests that in the asymptotic high-$Re$ regime the energy distribution over adjacent frequency bands is constrained by a fixed physical pattern. This observation offers a potential handle for extending FRL: by aligning FRL's frequency-domain representation and loss with Kolmogorov-type spectral constraints, one could guide the model to extrapolate according to the theoretically expected inertial-range behavior, rather than relying solely on data-driven cross-band fitting.

In summary, replacing traditional DNS/RANS/LES with deep learning models for very high-$Re$ fluid simulation remains an important and challenging problem that goes beyond the ZS-SR STF setting studied in this paper. FRL is designed as a general enhancement for mitigating Scale Anchoring in ZS-SR STF under systems that exhibit scale-consistent spectral structure, and our experiments confirm its effectiveness. Extending FRL to extremely high-$Re$ turbulent flows will likely require incorporating explicit physical spectral constraints (e.g., Kolmogorov scaling) into the normalized frequency representation and loss, which we leave as promising future work.

\section{Detailed Clarification of Large Language Models Usage}\label{sec:llm}

\begin{table}[htbp]
\centering
\caption{Summary of Large Language Model (LLM) Usage in This Work}
\label{tab:llm_usage}
\begin{tabular}{lc}
\toprule
\textbf{Purpose of LLM Usage} & \textbf{Used} \\
\midrule
Aid or polish writing & \checkmark \\
Retrieval and discovery (e.g., finding related work) & $\times$ \\
Research ideation & $\times$ \\
Other purposes & $\times$ \\
\bottomrule
\end{tabular}
\end{table}

In accordance with ICLR 2026 policy on transparent disclosure of Large Language Model usage, we declare that LLMs were employed exclusively to assist with the writing and presentation aspects of this paper. Specifically, we utilized LLMs for: (i) verification and refinement of technical terminology to ensure precise usage of domain-specific vocabulary; (ii) grammatical error detection and correction to enhance the clarity and readability of the manuscript; (iii) translation assistance from the authors' native language to English, as we are non-native English speakers, to ensure accurate and fluent expression of scientific concepts; and (iv) improvement of sentence structure and flow while maintaining the original scientific content and meaning. We emphasize that LLMs were not used for research ideation, experimental design, data analysis, or any form of content generation that would constitute intellectual contribution to the scientific findings presented in this work. All scientific insights, methodological decisions, and analytical conclusions are the original work of the authors. The use of LLMs was limited to linguistic and presentational enhancement only, serving a role analogous to professional editing services.

\typeout{get arXiv to do 4 passes: Label(s) may have changed. Rerun}

\end{document}